\newtheorem{theorem}{Theorem}
\newtheorem{lemma}{Lemma}
\newtheorem{definition}{Definition}[section]
\newtheorem{example}{Example}
\begin{document}

\newcommand{\refp}[1]{(\ref{#1})}
  
\newcommand*\Mirr[1]{\textsc{Mirr} (#1)}
\newcommand*\Wd[0]{\textit{W}_2}
\newcommand*\dd[2]{\frac{\partial #1}{\partial #2}}
\newcommand*\stein[1]{\mathcal{A}_{#1}}
\newcommand*\w[0]{\textbf{w}}
\newcommand*\at[2]{\left.#1\right|_{#2}}
\newcommand*\ps[0]{p^*}
\newcommand*\del[0]{\partial}
\newcommand*\R[0]{\mathbb{R}}
\newcommand*\Z[0]{\mathbb{Z}}
\renewcommand*\div[0]{\nabla \cdot}
\newcommand*\ddt[0]{\frac{d}{d t}}
\newcommand*\dds[0]{\frac{d}{d s}}
\newcommand*\ddp[1]{\frac{\delta #1}{\delta p}}
\newcommand*\tr[0]{\text{tr}}
\newcommand*\KL[2]{\mathcal{KL}\left(#1\|#2\right)}
\newcommand*\lin[1]{\langle #1\rangle}
\newcommand*\E[1]{\mathbb{E}\left[#1\right]}
\newcommand*\Ep[2]{\mathbb{E}_{#1}\left[#2\right]}
\newcommand*\grad[3]{\mathcal{D}_{#1}(#2,#3)}
\newcommand*\slope[1]{\|\mathcal{D}_{#1}\|}
\newcommand*\md[1]{\|\frac{d}{dt}#1\|}
\newcommand*\ap[2]{\tilde{#1}_{#2}}
\newcommand*\F[0]{\mathcal{F}}
\newcommand*\D[0]{\mathcal{D}}
\renewcommand*\H[0]{\mathcal{H}}
\newcommand*\Pspace[0]{\mathscr{P}}
\newcommand*\Uspace[0]{\mathbb{U}}
\renewcommand*\t[1]{\tilde{#1}}
\renewcommand*\d[0]{\text{d}}
\newcommand*\ot[1]{#1^{\perp}}
\newcommand*\proj[0]{\text{Proj}}
\newcommand\numberthis{\addtocounter{equation}{1}\tag{\theequation}}
\newcommand*\lrb[1]{\left[#1\right]}
\newcommand*\lrp[1]{\left(#1\right)}
\newcommand*\eu[1]{\left\| #1 \right\|}
\renewcommand*\div[0]{\nabla \cdot}
\renewcommand*\H[0]{\mathcal{H}}
\renewcommand*\t[1]{\tilde{#1}}
\renewcommand*\d[0]{\delta}
\newcommand*\xt[0]{\tilde{x}}
\newcommand*\vt[0]{\tilde{v}}
\newcommand*\pt[0]{\tilde{p}}
\newcommand*\qt[0]{\tilde{q}}
\newcommand*\Phit[0]{\tilde{\Phi}}
\def\ci{\perp\!\!\!\perp}
\def\half{\frac{1}{2}}
\def\lv{\lVert}
\def\rv{\rVert}
\def\ke{\mathcal{K}}
\def\kc{\hat{\mathcal{K}}_m}

\newcommand*\diff{\mathop{}\!\mathrm{d}}

\newcommand{\note}[1]{\textbf{{\color{red}#1}}}
\newcommand{\rem}[1]{{\emph{#1}}}

\newcommand{\argmin}{\operatornamewithlimits{argmin}}
\newcommand{\argmax}{\operatornamewithlimits{argmax}}
\newcommand*\samethanks[1][\value{footnote}]{\footnotemark[#1]}

\newcommand\psued[1]{R^s_{#1}}

\newcommand\regret[1]{R^c_{#1}}

\newcommand\ind[2]{\mathbb{I}_{#1,#2}}
\newcommand\indi[0]{\mathbb{I}}

\newcommand\algname{\mathsf{OSOM}}
\newcommand\UCB{\mathsf{UCB}}
\newcommand\OFUL{\mathsf{OFUL}}
\newcommand\corral{\mathsf{CORRAL}}
\newcommand\moss{\mathsf{MOSS}}
\newcommand\linucb{\mathsf{LinUCB}}
\newcommand\expalg{\mathsf{EXP}4}
\newcommand\expsim{\mathsf{EXP}3}
\newcommand\sao{\mathsf{SAO}}

\newcommand\tstar{\tau_*}
\newcommand\unitball{\mathbb{B}_2^d(1)}

\newcommand\mustar[1]{\mu_{#1}}
\newcommand\muhat{\widehat{\mu}}
\newcommand\nustar[1]{\nu_{#1}}
\newcommand\thetastar{\theta^{*}}

\newcommand\tinrange[2]{t \in \{#1,\ldots, #2\}}
\newcommand\sinrange[2]{s \in \{#1,\ldots, #2\}}
\newcommand\iinrange[2]{i \in \{#1,\ldots, #2\}}
\newcommand\Xt{X_{t}}
\newcommand\Xs{X_{s}}
\newcommand\Xit[2]{X_{#1,#2}}
\newcommand\alpt[2]{\alpha_{#1,#2}}
\newcommand\betat[2]{\beta_{#1,#2}}

\newcommand\muit[2]{\tilde{\mu}_{#1,#2}}

\newcommand\gis{g_{i,s}}
\newcommand\git{g_{i,t}}

\newcommand\ccomplex{\mathcal{C}_t^c}
\newcommand\csimple{\mathcal{C}_t^s(i)}

\newcommand\mconstraint{V_t}

\newcommand\Omegabold{\boldsymbol{\Omega}}
\newcommand\Ostar{\Omegabold^*}
\newcommand\Otilde{\tilde{\Omegabold}_t}
\newcommand\Ohat{\hat{\Omegabold}_t}

\newcommand\Tstar{\boldsymbol{\theta}^*}
\newcommand\Ttilde{\tilde{\boldsymbol{\theta}}_t}
\newcommand\thetabold{\boldsymbol{\theta}}
\newcommand\That{\hat{\theta}_t}
\newcommand\Deltabold{\boldsymbol{\Delta}}

\newcommand\Xboldt{\mathbf{X}_{1:t}}
\newcommand\Aboldt{\boldsymbol{\alpha}_{K+1:t}}
\newcommand\Gboldt{\mathbf{G}_{K+1:t}}

\newcommand\Xboldtn{\mathbf{X}}
\newcommand\Aboldtn{\boldsymbol{\alpha}}
\newcommand\Gboldtn{\mathbf{G}}
\newcommand\etabold{\boldsymbol{\eta}}
\newcommand\mubold{\boldsymbol{\mu}}
\newcommand\mutbold{\boldsymbol{\tilde{\mu}}}

\newcommand\averagerew{\bar{g}_{i,t}}

\newcommand\Otildes{\tilde{\Omega}_s}
\newcommand\Ohats{\hat{\Omega}_s}

\newcommand\tmin{\tau_{\min}(\delta',T)}
\newcommand\updela{\Upsilon_{\delta'}(t,T)}
\newcommand\mone{\mathcal{M}_{\delta'}(t)}
\newcommand\mtwo{\mathcal{M}_2}
\newcommand\kenv{\mathcal{K}_{\delta'}(t,T)}
\newcommand\kenvs{\mathcal{K}_{\delta'}(s-1,T)}
\newcommand\kenvt{\mathcal{K}_{\delta'}(t,T)}
\newcommand\wenv{\alpha_{\delta'}(t,T)}
\newcommand\wenvone{\alpha_{\delta'}(t,T)}
\newcommand\wenvtau{\alpha_{\delta'}(\tstar-1,T)}
\newcommand\wenvn{\alpha_{\delta'}(T,T)}
\newcommand\wenvnre{\alpha_{\delta'}(T,T)}
\newcommand\bfm[1]{\mathbf{#1}}
\newcommand\loweig{\gamma}
\newcommand\higheig{\rho_{\max}}

\newcommand\Oh{\mathcal{O}}

\newcommand\Rmat{\boldsymbol{R}}
\newcommand\Rhat{\widehat{\boldsymbol{R}}}
\newcommand\Rgamma{[\boldsymbol{R}]_{\gamma}}
\newcommand\Dhat{\widehat{D}}
\newcommand\Sigmabold{\boldsymbol{\Sigma}}
\newcommand\Lambdabold{\boldsymbol{\Lambda}}
\newcommand\Ubold{\boldsymbol{U}}
\newcommand\Sigmahat{\widehat{\Sigmabold}}
\newcommand\Sigmabar{\overline{\Sigmabold}}
\newcommand\Omegahat{\widehat{\Omegabold}}
\newcommand\Sigmagamma{[\Sigmabold]_{\gamma}}
\newcommand*\Tgamma[1]{T_{\gamma}\left(#1\right)}
\newcommand\Uhat{\widehat{U}}
\newcommand\Lambdahat{\widehat{\Lambda}}
\newcommand\lambdahat{\widehat{\lambda}}
\newcommand\Sgaphat{\widehat{\mathcal{E}}}
\newcommand\Sgaptilde{\widetilde{\mathcal{E}}}
\newcommand\Sgap{\mathcal{E}}
\newcommand\Sgapbar{\overline{\mathcal{E}}}
\newcommand\tildex{\widetilde{\x}}
\newcommand\Ecal{\mathcal{E}}
\newcommand\Acal{\mathcal{A}}
\newcommand\tauswitch{\tau_{\mathsf{switch}}}
\newcommand\mutilde{\widetilde{\mu}}

\newcommand\eye{\boldsymbol{I}_d}

\newcommand\x{\boldsymbol{x}}
\newcommand\y{\boldsymbol{y}}
\newcommand\M{\boldsymbol{M}}
\newcommand\UM{\boldsymbol{U_M}}
\newcommand\xibold{\boldsymbol{\xi}}
\newcommand\op{\mathsf{op}}

\newcommand\T{\mathcal{T}}
\newcommand\W{\mathcal{W}}

\newcommand\vm[1]{\textcolor{green}{VM -- #1}}
\newcommand\ak[1]{\textcolor{red}{AK -- #1}}

\begin{center}

{\bf{\Large{Universal and data-adaptive algorithms for model selection in linear contextual bandits}}}

\vspace*{.2in}

{\large{
\begin{tabular}{ccc}
  Vidya Muthukumar$^{\dagger}$ & Akshay Krishnamurthy$^{\ddagger}$ \\
\end{tabular}}}

\vspace*{.2in}

\begin{tabular}{c}
Electrical and Computer Engineering and Industrial and Systems Engineering, Georgia Institute of Technology$^{\dagger}$ \\
Microsoft Research, New York City $^{\ddagger}$ \\
\end{tabular}

\vspace*{.2in}
\date{}

\end{center}




\begin{abstract}
    Model selection in contextual bandits is an important complementary problem to regret minimization with respect to a fixed model class.
  We consider the simplest non-trivial instance of model-selection: distinguishing a simple multi-armed bandit problem from a linear contextual bandit problem.
  Even in this instance, current state-of-the-art methods explore in a suboptimal manner and require strong ``feature-diversity'' conditions.
  In this paper, we introduce new algorithms that a) explore in a data-adaptive manner, and b) provide model selection guarantees of the form $\mathcal{O}(d^{\alpha} T^{1- \alpha})$ with \emph{no} feature diversity conditions whatsoever,
  where $d$ denotes the dimension of the linear model and $T$ denotes the total number of rounds.
The first algorithm enjoys a ``best-of-both-worlds'' property, recovering two prior results that hold under distinct distributional assumptions, simultaneously.
 The second removes distributional assumptions altogether, expanding the scope for tractable model selection. 
  Our approach extends to model selection among nested linear contextual bandits under some additional assumptions.
\end{abstract}

\section{Introduction}\label{sec:intro}

The \emph{contextual bandit (CB) problem}~\citep{woodroofe1979one,langford2008epoch,li2010contextual} is a foundational paradigm for online decision-making.
In this problem, the decision-maker takes one out of $K$ actions as a function of available contextual information, where this function is chosen from a fixed policy class and is typically learned from past outcomes.
Most work on CB has centered around designing algorithms that minimize regret with respect to the best policy in hindsight; a particular non-triviality involves doing this in a computationally efficient manner~\citep{agarwal2014taming,syrgkanis2016efficient,foster2018contextual}.
When the rewards are \emph{realizable} under the chosen policy class, this is now an essentially solved problem~\citep{foster2020beyond,simchi2020bypassing}.

A complementary problem to regret minimization with respect to a fixed policy class is \emph{choosing the policy class} that is best for the problem at hand.
This constitutes a model selection problem, and its importance is paramount in CB, as selecting a class that either underfits or overfits can lead to highly suboptimal performance.
To see why, consider the simplest model selection instance, which involves deciding whether to use the contexts (with, say, a policy class of $d$-dimensional linear functions) or simply run a multi-armed bandit (MAB) algorithm.
Making this choice a priori is suboptimal one way or another: if we choose a MAB algorithm, 
we obtain the optimal $\Oh(\sqrt{KT})$ regret to the best fixed action, but the latter may be highly suboptimal if the rewards depend on the context.
On the other hand, if we choose a linear CB algorithm, we incur $\Oh((\sqrt{K} + \sqrt{d})\sqrt{T})$ regret even when the rewards do not depend on the contexts due to overfitting, which is highly suboptimal\footnote{Throughout the paper, we consider $K$ to be a small constant and allow the feature dimension $d$ to be quite large. This is in contrast to the typical case of a \emph{linear non-contextual bandit} problem where the number of arms $K$ can be quite large, and the features model structure across arms to ensure a much smaller regret (see, e.g., Chapter 19 of~\citet{lattimore2018bandit}). There, a rather different model selection problem arises, where the linear bandit class is actually the simpler one. 
For the CB model selection problem, the $K\geq d$ regime renders the problem uninteresting: it is always better to use a linear CB algorithm as it will better model structure and incur negligible regret overhead.
}.
To avoid these two failure modes, we seek an algorithm that achieves the best-of-both-worlds, by retaining the linear CB guarantee
while adapting to hidden structure if it exists.
Focusing on the MAB-vs-linear setting, 
the strongest variant of the model selection objective asks: 
\vspace{-2mm}
\begin{center}
\textit{\textbf{Objective 1.} Can we design a \textbf{single} algorithm that simultaneously achieves the respective minimax-optimal rates of $\Oh(\sqrt{KT})$ under simple MAB structure (when it exists) and $\Oh((\sqrt{K} + \sqrt{d})\sqrt{T})$ under $d$-dimensional linear CB structure?}
\end{center}
%
A related but weaker objective is also proposed in a COLT 2020 open problem~\citep{foster2020open}, restated here for the special case of linear CB:
\begin{center}
\textit{\textbf{Objective 2.} Can we design a \textbf{single} algorithm that simultaneously achieves the rate $\Oh(K^{\beta} T^{1 - \alpha})$ under simple MAB structure and $\Oh(K^{\beta} d^{\alpha} T^{1- \alpha})$ under linear CB structure, for some $\alpha \in (0,1/2]$ and some $\beta \in (0,1)$?}
\end{center}
\citet{foster2020open} highlighted the importance of the $\widetilde{\Oh}(d^\alpha T^{1-\alpha})$-rate as verifying that model
selection is possible whenever the model class is
\emph{learnable}.
This is the case if and only if $d = o(T)$,
assuming, as we do, that $K$ is a small constant (we omit dependence
on $K$ in the main paper).

Even in the simplest instance of model selection between a multi-armed bandit and a linear contextual bandit, achieving either Objective 1 or Objective 2 under minimal assumptions remains open.
Existing approaches to address either the stronger Objective 1~\citep{chatterji2020osom} or the weaker Objective 2~\citep{foster2019model} make restrictive assumptions
regarding the conditioning (what we will call \emph{diversity}) of the contexts.
Other, more data-agnostic approaches~\citep{agarwal2017corralling,pacchiano2020model,pacchiano2020regret,lee2021online} achieve neither of the above objectives.
This leads us to ask whether we can design a \emph{universal} model selection approach that is data-agnostic (other than requiring a probability model on the contexts) \emph{and} achieves either Objective 1 or 2.

Another important question is the \emph{adaptivity} of approaches to situations in which model selection is especially tractable.
At the heart of effective data-driven model selection is a meta-exploration-vs-exploitation tradeoff: while we need to exploit the currently believed simpler model structure, we also need to explore sufficiently to discover potential complex model structure.
Most approaches to model selection incorporate forced exploration of an $\epsilon$-greedy type to navigate this tradeoff; however, such exploration may not always be needed.
Indeed,~\cite{chatterji2020osom} use no forced exploration in their approach and thereby achieve the optimal guarantee of Objective 1, but their approach only works under restrictive diversity assumptions.
It is natural to ask whether we can design \emph{data-adaptive} exploration schedules that employ forced exploration only when it is absolutely needed, thus recovering the strongest possible guarantee (Objective 1) under favorable situations and a weaker-but-still-desirable guarantee (Objective 2) otherwise.

\subsection{Our contributions}\label{sec:contributions}

From the above discussion, it is clear that algorithm design for model selection that satisfies the criteria posed in~\cite{foster2020open} involves two non-trivial components: a) designing an efficient statistical test to distinguish between simple and complex model structure that works under minimal assumptions, and b) designing an exploration schedule to ensure that sufficiently expressive data is collected for a) to succeed.
In this paper, we advance the state-of-the-art for both of these components in the following ways:
\begin{itemize}
\item We design a new test based on eigenvalue thresholding that works for all stochastic sub-Gaussian contexts; in contrast to~\cite{chatterji2020osom} and~\cite{foster2019model}, it does not require any type of context diversity. 
We utilize the fact that ``low-energy'' directions can be thresholded and ignored to estimate the gap in error between model classes. 
See Theorem~\ref{thm:universal} for our new model selection guarantee, which only requires stochasticity on the contexts to meet Objective 2.
\item We also design a data-adaptive exploration schedule that performs forced exploration only when necessary.
This approach meets Objective 2 under the \emph{action-averaged} feature diversity assumption that is made in~\cite{foster2019model}, but also the stronger Objective 1 under the stronger assumption that the feature of \emph{each action} is diverse (made in~\cite{chatterji2020osom}). 
In fact, our approach not only meets Objective 1 under \emph{action-specific} feature diversity, but even the instance-optimal rate (in line with the results of~\cite{chatterji2020osom}).
See Theorem~\ref{thm:dataadaptive} for a precise statement of our new adaptive guarantee on model selection.
\end{itemize}
Taken together, our results advance our understanding of model
selection for contextual bandits, by demonstrating how statistical
approaches can yield universal (i.e., nearly assumption-free) and
adaptive guarantees.

\subsection{Related work}\label{sec:relatedwork}

While model selection is a central and classical topic in machine
learning and statistics, most results primarily apply to supervised
offline and full-information online learning. Only recently has
attention turned to model selection in online partial information
settings including contextual bandits. Here, we focus on this growing
body of work, which we organize based on overarching algorithmic principles. 

\paragraph{Corralling approaches.} 
The first line of work constitutes a hierarchical learning scheme
where a meta-algorithm uses bandit techniques to compete with many
(contextual) bandit algorithms running as base learners. One of the
first such approaches is the \textsc{Corral} algorithm
of~\citet{agarwal2017corralling}, which uses online mirror descent
with the log-barrier regularizer as the meta-algorithm. Subsequent
work focuses on adapting \textsc{Corral} to the stochastic
setting~\citep{pacchiano2020regret,lee2021online} and developing
UCB-style
meta-algorithms~\citep{cutkosky2020upper,arora2021corralling}. These
approaches are quite general and can often be used with abstract
non-linear function classes.
However, they do not meet either of Objectives 1 or 2 in general.
In our setting, these approaches yield the tuple of rates $(\widetilde{\Oh}(\sqrt{T}),\widetilde{\Oh}(d\sqrt{T}))$, which clearly cannot be expressed in the form $(\widetilde{\Oh}(T^{1 - \alpha}), \widetilde{\Oh}(d^{\alpha}T^{1-\alpha})$ for any value of $\alpha \in (0,1)$.
Consequently, the problem of model selection as described in~\cite{foster2020open} is left open even for linear classes.
\paragraph{Statistical approaches.} 
The second line of algorithmic approaches involves constructing
statistical tests for model misspecification. This approach was
initially used in the context of model selection concurrently
by~\citet{foster2019model} and~\citet{chatterji2020osom}, who focus on
the linear setting. At a high level, these papers develop efficient
misspecification tests under certain covariate assumptions and use
these tests to obtain $d^\alpha T^{1-\alpha}$-style model selection
guarantees. In particular,~\citet{foster2019model} use a ``sublinear''
square loss estimator under somewhat mild covariate assumptions to
obtain $d^{1/3}T^{2/3}$ regret, while~\citet{chatterji2020osom} obtain
$\sqrt{dT}$ regret under stronger covariate assumptions.  As these two
works are the foundation for our results, we discuss these papers in
detail in the sequel.

Several recent papers extend statistical testing approaches in several
ways. ~\citet{ghosh2021problem} estimate the support of the parameter
vector, which fundamentally incurs a dependence on the magnitude of
the smallest non-zero coefficient. Beyond the linear
setting,~\citet{cutkosky2021dynamic} use the ``putative'' regret bound
for each model class directly to test for misspecification,
while~\citet{ghosh2021model,krishnamurthy2021optimal} consider general
function classes with realizability. While these latter approaches are
more general than ours, they cannot be directly used to obtain our
results. Indeed, central to our results (and those
of~\citet{foster2019model}) is the fact that our statistical test
provides a fast rate for detecting misspecification; this guarantee is
quantitatively better than what is provided by the putative regret
bound, requires carefully adjusting the exploration schedule, and is not available for general function classes. 

We also briefly mention two peripherally related lines of work. The first
is on representation selection in bandits and reinforcement
learning~\citep{papini2021leveraging,zhang2021provably}, which
involves identifying a feature mapping with favorable properties from
a small class of candidate mappings. While this is reminiscent of the model selection problem, the main differences are
that in representation selection all mappings are of the same
dimensionality and realizable, and the goal is to achieve much faster
regret rates by leveraging additional structural properties of the
``good'' representation.

The second line of work is on Pareto optimality in \emph{non-contextual} bandits and related
problems. Beginning with the result of~\citet{lattimore2015pareto},
these results show that certain non-uniform regret guarantees are not
achievable in various bandit settings. For
example,~\cite{lattimore2015pareto} shows that, in $K$-armed bandit
problems, one cannot simultaneously achieve $O(\sqrt{T})$ regret to
one specific arm, while guaranteeing $O(\sqrt{KT})$ regret to the
rest. Such results have been extended to both linear and Lipschitz non-contextual bandits~\citep{zhu2021pareto,locatelli2018adaptivity,krishnamurthy2019contextual}
as well as Lipschitz contextual bandits under margin
assumptions~\citep{gur2021smoothness}, and they establish that model
selection is not possible in these settings. However, these ideas have
not been extended to standard contextual bandit settings, which is our
focus.

\section{Setup}\label{sec:setup}


\paragraph{Notation.}
We use boldface to denote vectors and matrices (e.g. $\boldsymbol{x}$ to denote a vector, and $x$ to denote a scalar).
For any value of $M < \infty$, $[M]$ denotes the finite set $\{1,\ldots,M\}$.
We use $\eu{\cdot}_2$ to denote the $\ell_2$-norm of a vector, and $\eu{\cdot}_{\mathsf{op}}$ to denote the operator norm of a matrix.
We use $\lin{\x,\y}$ to denote the Euclidean inner product between vectors $\boldsymbol{x}$ and $\boldsymbol{y}$.
We use big-Oh notation in the main text; $\widetilde{\Oh}(\cdot)$ hides dependences on the number of actions $K$, and $\Oh_{\delta}(\cdot)$ denotes a bound that hides a $\log\left(\frac{1}{\delta}\right)$ factor and holds with probability at least $1 - \delta$.

\subsection{The bandit-vs-contextual bandit problem}
The simplest instance of model selection involves a $d$-dimensional \emph{linear contextual bandit} problem with possibly hidden \emph{multi-armed bandit} structure.
This model was proposed as an initial point of study in~\cite{chatterji2020osom}.
Concretely, $K$ actions (which we henceforth call \emph{arms}) are available to the decision-maker at every round, and $T$ denotes the total number of rounds.
At round $t$, the reward of each arm is given by $G_{i,t} = \mu_i + \lin{\x_{i,t},\Tstar} + W_{i,t}$, where $\mu_i$ denotes the bias of arm $i$, $\x_{i,t} \in \R^d$ denotes the $d$-dimensional context corresponding to arm $i$ at round $t$, and $W_{i,t}$ denotes random noise.
Finally, $\Tstar \in \R^d$ denotes an unknown parameter.
We make the following standard assumptions on the problem parameters.
\begin{itemize}\itemsep1pt
\item The biases $\{\mu_i\}_{i=1}^K$ are assumed to be bounded between $-1$ and $1$.
\item The unknown parameter is assumed to be bounded, i.e. $\eu{\Tstar}_2 \leq 1$.
\item The contexts corresponding to each arm $i$ are assumed to be iid across rounds $t \geq 1$, and $1$-sub-Gaussian. We denote by $\Sigmabold_i$ the covariance matrix of the context $\x_{i,t}$, and additionally note that $\Sigmabold_i \preceq \eye$ as a consequence of the $1$-sub-Gaussian assumption. Without loss of generality (since bias can be incorporated into $\mu_i$), we assume that for each arm $i \in [K]$ the mean of the context $\x_{i,t}$ is equal to the zero vector.
\item The noise $W_{i,t}$ is iid across arms $i \in [K]$ and rounds $t \in [T]$, centered, and $1$-sub-Gaussian.
\end{itemize}
%
We denote the achieved pseudo-regret with respect to the best fixed arm (the standard metric for a MAB problem) by $R_T^S$, and the achieved pseudo-regret with respect to the best policy under a $d$-dimensional model (the standard metric for a linear CB problem) by $R_T^C$.
Notice that in the special case when $\Tstar = \boldsymbol{0}$, this reduces to a standard multi-armed bandit (MAB) instance.
The best possible regret rate is then given by $R_T^S = \Oh(\sqrt{KT})$ in the worst case, and we also have the instance-dependent rate $R_T^S = \Oh\left(\sum_{i \neq i^*} \frac{\log T}{\Delta_i}\right)$, where $\Delta_i := \mu^* - \mu_i$.
Both of these are known to be information-theoretically optimal~\citep{Lai:1985:AEA:2609660.2609757,audibert2009minimax}.
On the other hand, the minimax-optimal rate for the linear contextual bandit (linear CB) problem is given by $R_T^C = \Oh((\sqrt{d} + \sqrt{K}) \sqrt{T})$~\citep{chu2011contextual,abbasi2011improved}.
The following natural dichotomy in algorithm choice presents itself:
\begin{enumerate}
\item While the state-of-the-art for the linear CB problem achieves the minimax-optimal rate $R_T^C = \Oh((\sqrt{d} + \sqrt{K}) \sqrt{T})$, it does not adapt automatically to the simpler MAB case.
In particular, the regret $R_T^S$ will still scale with the dimension $d$ of the contexts owing both to unnecessary exploration built into linear CB algorithms and overfitting effects.
This precludes achieving the minimax-optimal rate of $R_T^S = \Oh(\sqrt{KT})$ in the MAB setting, let alone the instance-dependent rate.
\item On the other hand, any state-of-the-art algorithm that is tailored to the MAB problem would not achieve any meaningful regret rate for the linear CB problem, simply because it does not incorporate contextual information into its decisions.
\end{enumerate}
The simulations in~\cite{chatterji2020osom} empirically illustrate this dichotomy and clearly motivate the model selection problem in its most ambitious form, i.e. Objective 1 as stated in Section~\ref{sec:intro}.
Objective 2 constitutes a weaker variant of the model selection problem that was proposed in~\cite{foster2019model,foster2020open} and justified by the fact that it yields non-trivial model selection guarantees whenever the underlying class is learnable. 
While Objective 2 is in itself a desirable and non-trivial model selection guarantee, we note that it is strictly weaker than Objective 1.
To see this, note that the objectives coincide for $\alpha=1/2$, and since we require $d < T$ for sublinear regret in the first place, the rate $d^{\alpha} T^{1-\alpha}$ is a decreasing function in $\alpha$.
\\
\\
\\
\\

\subsection{Meta-algorithm and prior instantiations}
\label{sec:priorart}

\begin{algorithm}[tb]
\caption{Model selection meta-algorithm through one-shot sequential testing. 
$\Sgaphat$ denotes an estimator of the square-loss gap between the MAB and linear CB model, $\nu_t \in (0,1)$ denotes a forced-exploration parameter, and $\delta \in (0,1)$ denotes a failure probability. }
\label{alg:metamethod}
\begin{algorithmic}
\FOR{$t=1,\ldots,K$}
  \STATE Play arm $t$ and receive reward $g_{t,t}$,
\ENDFOR \\
\STATE ${\texttt{Current Algorithm}} \gets \text{`MAB'}$ \;
\FOR{$t=K+1,\ldots,n$}
\STATE $\textit{MAB Algorithm: } i_t = \text{ arm pulled by \textsc{UCB} }$ \\
\STATE  $\textit{CB Algorithm: } j_t = \text{ arm pulled by \textsc{LinUCB}}$\\
    \IF{${\texttt{Current Algorithm} = \text{`MAB'}}$} 
    \STATE Estimate square loss gap $\Sgaphat$ and \emph{declare misspecification} if $\Sgaphat > \alpha_{\delta}$ (where $\alpha_{\delta}$ is a threshold defined as a function of the filtration $\H_{t-1}$ and the failure probability $\delta$, and is specified for various algorithm choices in Appendices~\ref{sec:thm1proof} and~\ref{sec:thm2proof}). \\
    \STATE If misspecification detected, then set ${\texttt{Current Algorithm}} \leftarrow \text{`CB'}$.\\
    \ENDIF
    \IF{${\texttt{Current Algorithm}}=\text{`MAB'}$}
    \STATE Select $U_t = 1$ with probability $1 - \nu_t$, $U_t = 0$ otherwise.\\
    \STATE Play arm $A_t = i_t$ if $U_t = 1$ and $A_t \sim \text{Unif}[K]$ if $U_t = 0$.\\
    \STATE Receive reward $g_{A_t,t}$. \\
    \ELSIF{${\texttt{Current Algorithm}}=\text{`CB'}$} 
    \STATE Play arm $j_t$ and receive $g_{j_t,t}.$
    \ENDIF
\ENDFOR
\end{algorithmic}
\end{algorithm}
%
As mentioned in Section~\ref{sec:relatedwork}, the vast toolbox of corralling-type approaches does not achieve either Objective 1 or 2 for model selection.
~\cite{chatterji2020osom} and~\cite{foster2019model}, which are concurrent to each other, are among the first approaches to tackle the model selection problem and the only ones that achieve Objectives 1 and 2 respectively---but under additional strong assumptions.
Both approaches use the same structure of a statistical test to distinguish between a simple (MAB) and complex (CB) instance.
This meta-approach is described in Algorithm~\ref{alg:metamethod}. 
Here, $A_t$ denotes the arm that is pulled at round $t$, and as is standard in bandit literature, $\H_t := \{A_s,G_{A_s,s}\}_{s=1}^t$ is the relevant filtration at round $t$.

As our results also involve instantiating this meta-algorithm, we now discuss its main elements.
The meta-algorithm begins by assuming that the problem is a simple
(MAB) instance and primarily uses an optimal MAB algorithm for arm
selection: this default choice is denoted by $i_t$ in Algorithm~\ref{alg:metamethod}. 
To address model selection, it uses both an
\emph{exploration schedule} and a \emph{misspecification test}, both
of which admit different instantiations. 
The exploration schedule governs a rate at which the algorithm should choose arms uniformly at random, which can be helpful for detecting misspecification. 
The misspecification test is simply a surrogate statistical test to check if the
instance is, in fact, a complex (CB) instance (i.e. $\Tstar \neq \boldsymbol{0}$).
If the test detects misspecification, we immediately switch to an optimal linear CB algorithm for the remaining time steps.

\begin{table*}[t]
\begin{center}
\begin{small}
\begin{tabular}{lcc}
\hline
\rule{0pt}{12pt} Algorithm & Estimator $\Sgaphat(\cdot)$ & Forced exploration parameter $\nu_t$ \\
\hline \hline
\textsc{OSOM}~\cite{chatterji2020osom} & Plug-in estimator & $\nu_t = 0$ (no extra exploration) \\
\hline
\textsc{ModCB}~\cite{foster2019model} & Fast estimator defined in~\cite{foster2019model} & $\nu_t \asymp t^{-1/3}$\\
\hline
\textcolor{blue}{\textsc{ModCB.U}} & \textcolor{blue}{Fast estimator defined in Algorithm~\ref{alg:estimateresidualsideinfo}} & \textcolor{blue}{$\nu_t \asymp t^{-2/9}$}  \\
\hline
\textcolor{blue}{\textsc{ModCB.A}} & \textcolor{blue}{Fast estimator defined in~\cite{foster2019model}} & \textcolor{blue}{Algorithm~\ref{alg:explorationschedule}}  \\
\hline
\end{tabular}
\end{small}
\end{center}
\caption{Comparison of model selection algorithms in terms of their
  estimator for the square loss gap $\Sgaphat(\cdot)$ and exploration
  schedule. \label{tab:metaalgorithm}}
\end{table*}

While~\cite{chatterji2020osom} and~\cite{foster2019model} both use the meta-algorithmic structure in Algorithm~\ref{alg:metamethod}, they instantiate it with difference choices of misspecification test and exploration schedule.
The high-level details of where the approaches diverge are summarized in Table~\ref{tab:metaalgorithm}, and the results that they obtain are summarized in Table~\ref{tab:resultsuniversal}.
We provide a brief description of the salient differences below.
\begin{enumerate}
\item \citet{chatterji2020osom} do not incorporate any forced exploration in their procedure, as evidenced by the choice of parameter $\nu_t = 0$ above for all values of $t$.
They also use the plug-in estimator of the linear model parameter $\Tstar$ to obtain an estimate of the gap in performance between the two model classes.
The error rate of this plug-in estimator scales as $\Oh(d/n)$ as a function of the number of samples $n$, and matches the putative regret bound for linear CB.
Consequently, they achieve the optimal model selection rate of Objective 1, as well as the stronger instance-optimal rate in the case of MAB, but require a strong assumption of \emph{feature diversity for each arm}; that is, they require $\Sigmabold_i \succeq \gamma \eye$ for all $i \in [K]$.
Intuitively, feature diversity
eliminates the need for forced exploration to successfully test for potential complex model structure.
\item \citet{foster2019model} incorporate forced exploration of an $\epsilon$-greedy-style by setting the forced exploration parameter $\nu_t = t^{-1/3}$.
This automatically precludes achieving the stronger Objective 1, but leaves the door open to achieving Objective 2 for some smaller choice of $\alpha$.
To do this, they leverage fast estimators~\cite{verzelen2018adaptive,dicker2014variance,kong2018estimating} of the gap between the two model classes, whose error rate can be verified to scale as $\Oh(\sqrt{d}/n)$ as a function of the number of samples $n$. 
This is significantly better in its dependence on $d$ than the standard plug-in estimator.
Moreover, forced exploration removes the requirement of restrictive feature diversity assumptions \emph{on each arm}; nevertheless, an \emph{arm-averaged} feature diversity assumption is still required.
Specifically, they assume that $\Sigmabold \succeq \gamma \eye$ where $\Sigmabold := \frac{1}{K} \sum_{i \in [K]} \Sigmabold_i$, which is strictly weaker than the arm-specific condition of~\citet{chatterji2020osom}.
Above, $\Sigmabold$ is the covariance matrix of the \emph{mixed} context obtained from uniform exploration, which we denote by $\x := \x_I$ where $I \sim \text{Unif}[K]$.
\end{enumerate}
%
This discussion tells us that the initial attempts at model selection~\citep{chatterji2020osom,foster2019model} fall short both in their breadth of applicability and their ability to adapt to structure in the model selection problem.
This naturally motivates the question of whether we can design new algorithms with two key properties:
\begin{itemize}
\item \emph{Universality}: Can we meet Objective 2 for some value of $\alpha \in (0,1)$ under stochastic contexts but with no additional diversity assumptions?
\item \emph{Adaptivity}: Can we meet Objective 1 under maximally favorable conditions (feature diversity for all arms), and Objective 2 otherwise?
\end{itemize}


\section{Main results}\label{sec:results}

We now introduce and analyze two new algorithms that provide a nearly complete answer to the problems of universality and adaptivity for the MAB-vs-linear CB problem.

\subsection{Universal model selection under stochasticity}\label{sec:noeigenvalue}

In this section, we present \textsc{ModCB.U}, a simple variant of \textsc{ModCB}~\citep{foster2019model} that achieves Objective 2 of model selection without requiring \emph{any} feature diversity assumptions, arm-averaged or otherwise.
Therefore, this constitutes a \emph{universal model selection} algorithm between an MAB instance and a linear CB instance.

Our starting point is the approach to model selection in~\cite{foster2019model} described above in Section~\ref{sec:priorart}.
Here, we recap the details of the fast estimator $\Sgaphat(\cdot)$ of the \emph{square-loss-gap}, which is given by $\Sgap := \E{(\x^\top \Tstar)^2} = (\Tstar)^\top \Sigmabold \Tstar$.
The square-loss-gap can be verified to be an upper bound on the expected gap of the best-in-class performance between the CB and MAB models (see~\cite{foster2019model} for details on this upper bound), but is also equal to $0$ \emph{iff} $\Tstar = \boldsymbol{0}$ (and $\Sigmabold$ is full rank).
Therefore, it is a suitable surrogate statistic to test for misspecification, as detailed in the meta-algorithmic structure of Algorithm~\ref{alg:metamethod}.
The estimator is denoted by $\Sgaphat$, and is described as a black-box procedure in Algorithm~\ref{alg:estimateresidualsideinfo} with access to an estimator of the covariance matrix $\Sigmahat_t$ that is constructed from $t$ \emph{unlabeled} samples.
The estimator that is used by\textsc{ModCB}~\cite{foster2019model} is simply the sample covariance matrix at round $t$, defined by
\begin{align}\label{eq:samplecovariancematrix}
\Sigmahat_t := \frac{1}{Kt} \sum_{s=1}^t \sum_{i = 1}^K \x_{i,s} \x_{i,s}^\top.
\end{align}
Note that such an estimator can be easily constructed as we have access to \emph{all} past contexts
 $\{\x_{i,s}\}_{i \in [K],s \in [t]}$ at any round $t$. This effective full-information access to contexts, in fact, forms the crux of both of our algorithmic ideas.

This approach is summarized in the sub-routine Algorithm~\ref{alg:estimateresidualsideinfo}, which is instantiated in \textsc{ModCB} for any time step $t$ with 
\begin{align*}
\{\x_i,y_i\}_{i=1}^n := \{\x_{A_s,s},g_{A_s,s} - \muhat_{A_s,s}\}_{1 \leq s \leq t: U_s = 0}.
\end{align*}
That is, the set of training examples used is the set of context-reward pairs on all designated \emph{exploration rounds}.
Above, $\muhat_{i,s}$ constitutes the estimate of the sample means constructed only from past exploration rounds\footnote{This ensures the estimates are unbiased, i.e. $\E{\x y} = \Sigmabold \Tstar$.}.
As a consequence of this choice, we note that for this instantiation of the sub-routine Algorithm~\ref{alg:metamethod}, we have $n := \text{\# of exploration rounds before time step $t$}$ and $m := t$ at any given time step $t$.

A key bottleneck lies in the obtainable estimation error rate of $\Sgap$: while the leading dependence is given by $\frac{\sqrt{d}}{\text{\# of exploration rounds}}$ (which is at the heart of the $\widetilde{\Oh}(d^{1/3}T^{2/3})$ rate that \textsc{ModCB} achieves), there is also an inverse dependence on the minimum eigenvalue of the arm-averaged covariance matrix $\Sigmabold$, which we denote here by $\gamma_{\min}$.
This dependence arises as a consequence of needing to estimate the inverse covariance matrix $\Omegabold := \Sigmabold^{-1}$ from unlabeled samples.
In essence, this requires $\Sigmabold$ to be \emph{well-conditioned}, in the sense that we need $\gamma_{\min}$ to be a positive constant to ensure the model selection rate of $\widetilde{\Oh}(d^{1/3}T^{2/3})$.
This precludes non-trivial model selection rates from \textsc{ModCB} for cases where $\gamma_{\min}$ could itself decay with $d$, the dimension of the contexts, or $T$, the number of rounds.
It also does not allow for cases in which $\Sigmabold$ may be singular.

Our first main contribution is to adjust \textsc{ModCB} to successfully achieve Objective 2 in model selection with arbitrary stochastic, sub-Gaussian contexts.
Because our algorithm achieves a \emph{universal} model selection guarantee over all stochastic context distributions, we name it \textsc{ModCB.U}.
The algorithmic procedure is identical to that of \textsc{ModCB} except for the choice of estimator for the inverse covariance matrix, $\Omegahat$, that is plugged into Algorithm~\ref{alg:estimateresidualsideinfo}.
Our key observation is as follows: if certain directions are small in magnitude for the contexts corresponding to \emph{all} arms (as will be the case when $\Sigmabold$ has vanishingly small eigenvalues), then we may not actually want try to estimate the square loss gap along them: ignoring them might be a better option.
Our approach to ignoring low-value directions simply uses eigenvalue thresholding\footnote{Note that eigenvalue thresholding is a type of \emph{hard-thresholding} approach commonly employed in high-dimensional statistics~\cite{bhatia2015robust}. 
A plausible alternative approach would be to undertake a \emph{soft-thresholding} approach, by simply adding a non-zero quantity $\gamma$ to each of the eigenvalues of the sample covariance matrix.
This is the approach taken, for example, in ridge regression, and we expect similar results to hold with soft-thresholding.
} to construct an improved \emph{biased} estimate of the inverse covariance matrix $\Omegahat$.
We formally define the eigenvalue thresholding operator below.

\begin{definition}\label{def:thresholdingoperator}
Define the clipping operator $[x]_v := \max\{x,v\}$. 
Then, for any matrix $\M\succeq 0$ with diagonalization $\M := \UM \boldsymbol{\Lambda_M} \UM^\top$ and any value of $\gamma > 0$, we define the thresholding operator 
\begin{align*}
    \Tgamma{\M} &:= \UM \Tgamma{\boldsymbol{\Lambda_M}} \UM^\top \quad \text{ where }  \\
    \Tgamma{\boldsymbol{\Lambda}} &:= \text{diag}([\lambda_1]_{\gamma},\ldots,[\lambda_d]_{\gamma}).
\end{align*}
\end{definition}
We use Definition~\ref{def:thresholdingoperator} to specify our (biased) estimators of the covariance and inverse-covariance matrices $\Sigmabold, \Omegabold := \Sigmabold^{-1}$.
In particular, we let $\Sigmahat_t$ denote the sample covariance matrix of $\Sigmabold$ from $t$ unlabeled samples, given in Eq. \eqref{eq:samplecovariancematrix}.
Then, our estimators are given by
\begin{align}
    \Sigmahat &:= \Tgamma{\Sigmahat_t} \quad \textrm{ and } \quad \Omegahat := \Sigmahat^{-1}, \label{eq:sigmaestimate}
\end{align}
and we simply plug the estimate $\Omegahat$ into Algorithm~\ref{alg:estimateresidualsideinfo}.
Note that $\Sigmahat$ is always invertible for any $\gamma > 0$.
In essence, this lets us set $\gamma$ as a tunable parameter to tradeoff the estimation error of a surrogate approximation to the square-loss gap $\Sgap$ (which will \emph{decrease} in $\gamma$) and the approximation error that arises from ignoring all directions with value less than $\gamma$ (which will \emph{increase} in $\gamma$).
As our first main result shows, we can set a value of $\gamma$ that scales with $d$ and $T$ and successfully achieve Objective 2 of model selection for \emph{any} stochastic sub-Gaussian context distribution.


\begin{algorithm}[t]
	\caption{ EstimateResidual} \label{alg:estimateresidualsideinfo}
    \begin{algorithmic}
		\STATE \textbf{Input}: Examples $\{(\x_i,y_i)\}_{i=1}^n$ and second moment matrix estimate $\Sigmahat \in \R^{d \times d}$ (which can be constructed from $m \gg n$ unlabeled sample).
		\STATE Return estimator
		\begin{align*}
	    \Sgaphat := \frac{1}{\binom{n}{2}} \sum_{i < j} \Big{\langle} \Omegahat^{1/2} \x_i y_i, \Omegahat^{1/2} \x_j y_j \Big{\rangle}
		\end{align*}
        of the square-loss gap $\Sgap := (\Tstar)^\top \Sigmabold \Tstar$.
        \end{algorithmic}
	\end{algorithm}

\begin{table*}[t]
\begin{center}
\begin{small}
\begin{tabular}{|l|c|c|c|}
\hline
Algorithm & Obj. 1 (optimal rates) & Obj. 2 ($d^\alpha T^{1-\alpha}$ rates) & context assumption \\
\hline\hline
\textsc{OSOM}~\citep{chatterji2020osom} & Yes & Yes $(\alpha = 1/2$) & $\forall i\in[K]: \Sigmabold_i \succeq \gamma \eye$\\
\hline
\textsc{ModCB}~\citep{foster2019model} & No & Yes $(\alpha = 1/3)$ & $\Sigmabold \succeq \gamma \eye$ \\
\hline
\textcolor{blue}{\textsc{ModCB.U}} & \textcolor{blue}{No} & \textcolor{blue}{Yes $(\alpha = 1/6)$} & \textcolor{blue}{iid contexts only}\\
\hline
\textsc{Corral-style} & No & No & iid contexts only \\
\hline
\end{tabular}
\end{small}
\caption{ Comparison of model selection algorithms in terms of the
  regret guarantee and assumptions, i.e., ``universality.'' Dependence on the number of arms $K$ is
  omitted.  \label{tab:resultsuniversal}}
\end{center}
\vskip -0.1in
\end{table*}

\begin{theorem}\label{thm:universal}
\textsc{ModCB.U} with $\gamma := (d/T)^{1/3}$ achieves, with probability at least $1 - \delta$, model selection rates
\begin{align}
R_T^S = \widetilde{\Oh}_{\delta}(T^{5/6}) \quad \text{ and } \quad
R_T^C = \widetilde{\Oh}_{\delta}(d^{1/6}T^{5/6}).\label{eq:universal}
\end{align}
\end{theorem}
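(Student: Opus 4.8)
The argument is a refinement of the analysis of \textsc{ModCB}~\citep{foster2019model}; the one new ingredient is that eigenvalue thresholding lets us replace the population square-loss gap $\Sgap=(\Tstar)^\top\Sigmabold\Tstar$ by a \emph{surrogate} gap which (i) still vanishes exactly when $\Tstar=\boldsymbol 0$, (ii) approximates $\Sgap$ up to an additive $\gamma$, and (iii) can be estimated with a variance that does not blow up like $\gamma^{-1}$ in the null case. \emph{Step 1 (surrogate gap).} Define $\Sgap_\gamma:=(\Tstar)^\top\Sigmabold\,\Tgamma{\Sigmabold}^{-1}\Sigmabold\,\Tstar$, the population analogue of $\mathbb E[\Sgaphat_t\mid\Omegahat]$ with $\Omegahat$ replaced by its population limit $\Tgamma{\Sigmabold}^{-1}$. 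Diagonalizing $\Sigmabold=\sum_j\lambda_j\boldsymbol u_j\boldsymbol u_j^\top$ gives $\Sgap_\gamma=\sum_j\frac{\lambda_j^2}{\max\{\lambda_j,\gamma\}}(\boldsymbol u_j^\top\Tstar)^2$, whence $0\le\Sgap-\Sgap_\gamma=\sum_{j:\lambda_j<\gamma}\lambda_j(1-\lambda_j/\gamma)(\boldsymbol u_j^\top\Tstar)^2\le\frac\gamma4\eu{\Tstar}_2^2\le\frac\gamma4$, while $\Sgap_\gamma=0\iff\Sigmabold\Tstar=\boldsymbol 0\iff\Sgap=0$. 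In particular $\Tstar=\boldsymbol 0\Rightarrow\Sgap_\gamma=0$, and $\Sgap\ge2\gamma\Rightarrow\Sgap_\gamma\ge\Sgap/2$.

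\emph{Step 2 (thresholded covariance estimate) and Step 3 (U-statistic concentration).} Standard sub-Gaussian covariance concentration gives $\eu{\Sigmahat_t-\Sigmabold}_\op\le\epsilon_t$ with $\epsilon_t\lesssim\sqrt{d/(Kt)}+d/(Kt)$ with probability $\ge1-\delta/T^2$; since $x\mapsto\max\{x,\gamma\}$ is $1$-Lipschitz, $\eu{\Tgamma{\Sigmahat_t}-\Tgamma{\Sigmabold}}_\op\lesssim\epsilon_t$ (up to $\log d$). On the event $\epsilon_t\le\gamma/2$, the estimators $\Sigmahat:=\Tgamma{\Sigmahat_t}$, $\Omegahat:=\Sigmahat^{-1}$ satisfy (a) $\Omegahat\preceq\gamma^{-1}\eye$ and $\eu{\Omegahat^{1/2}\Sigmabold\Omegahat^{1/2}}_\op\lesssim1$ (because $\Sigmabold\preceq\Sigmahat+\Oh(\gamma)\eye$), and (b) a \emph{relative} error bound $\big|(\Tstar)^\top\Sigmabold\Omegahat\Sigmabold\Tstar-\Sgap_\gamma\big|\lesssim(\epsilon_t/\gamma)\Sgap_\gamma$ (and $=0$ when $\Tstar=\boldsymbol 0$); so once $t\gtrsim d/(K\gamma^2)$ the conditional mean of $\Sgaphat_t$ is $\Sgap_\gamma$ up to a factor $1\pm\tfrac12$. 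Next, on exploration rounds the pairs $(\x_{A_s,s},\,g_{A_s,s}-\muhat_{A_s,s})$ are i.i.d.\ (the arm is uniform and independent of the history), so conditionally on $\Omegahat$ and the filtration $\Sgaphat_t$ is a degree-$2$ U-statistic with mean $(\Tstar)^\top\Sigmabold\Omegahat\Sigmabold\Tstar$; a Hoeffding decomposition plus a Bernstein inequality for U-statistics --- using the sub-Gaussian tails of $\x$ and of the noise, together with $\Omegahat\preceq\gamma^{-1}\eye$ and $\eu{\Omegahat^{1/2}\Sigmabold\Omegahat^{1/2}}_\op\lesssim1$ from Step 2 --- yields, simultaneously for all $t$ with probability $\ge1-\delta$,
\begin{align*}
\big|\Sgaphat_t-(\Tstar)^\top\Sigmabold\Omegahat\Sigmabold\Tstar\big|\;\lesssim\;\frac{\sqrt d}{n_t}\log\!\Big(\tfrac{dT}{\delta}\Big)\;+\;\eu{\Tstar}_2^2\,\frac{\sqrt d}{\gamma\,n_t}\log\!\Big(\tfrac{dT}{\delta}\Big)\;+\;\sqrt{\frac{\Sgap}{n_t}}\log\!\Big(\tfrac{dT}{\delta}\Big),
\end{align*}
where $n_t\asymp t^{7/9}$ is the number of exploration rounds before $t$ (concentration of $\sum_{s\le t}\nu_s$). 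The key point is that when $\Tstar=\boldsymbol 0$ only the first term remains --- no $\gamma^{-1}$ --- because then the residual is independent of $\x$ (up to the mean-estimation error) and the U-statistic kernel has covariance proportional to $\Omegahat^{1/2}\Sigmabold\Omegahat^{1/2}\preceq\Oh(1)\eye$, of Frobenius norm $\Oh(\sqrt d)$. A short addendum absorbs the error from using exploration-only mean estimates $\muhat_{i,s}$ in place of $\mu_i$ (an $\Oh_\delta(\sqrt{K/n_t})$ perturbation of the residuals, contributing only lower-order terms) and the mild dependence of $\Omegahat$ on the exploration data (controlled by the deterministic concentration of $\Sigmahat_t$).

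\emph{Step 4 (regret).} Run Algorithm~\ref{alg:metamethod} with $\nu_t\asymp t^{-2/9}$ and threshold $\alpha_\delta(t)\asymp\frac{\sqrt d}{n_t}\log(\tfrac{dT}{\delta})$, and condition on the events of Steps 2--3. Under MAB structure ($\Tstar=\boldsymbol 0$), Steps 1 and 3 give $\Sgaphat_t\le\alpha_\delta(t)$ for every $t$, so the test never fires and the algorithm is exactly \textsc{UCB} with an $\epsilon$-greedy overlay of total mass $\sum_{t\le T}\nu_t\asymp T^{7/9}$, hence $R_T^S=\widetilde{\Oh}_\delta(\sqrt{KT}+T^{7/9})=\widetilde{\Oh}_\delta(T^{5/6})$. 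For general $\Tstar$, split on $\Sgap$: if $\Sgap\le2\gamma$, then whether or not the test fires the regret against the best linear policy is at most the pre-switch \textsc{UCB}-plus-exploration regret $\widetilde{\Oh}_\delta(\sqrt{KT}+\sum_t\nu_t)$, plus the cumulative best-in-class gap $\widetilde{\Oh}(T\sqrt{\Sgap})$, plus the post-switch \textsc{LinUCB} regret $\widetilde{\Oh}_\delta(\sqrt{dT})$, and with $\gamma=(d/T)^{1/3}$ each of $\sqrt{dT}$, $T^{7/9}$, $T\sqrt{\gamma}$ is $\le\widetilde{\Oh}(d^{1/6}T^{5/6})$; if instead $\Sgap>2\gamma$ then $\Sgap_\gamma\ge\Sgap/2$ and, by Steps 2--3, the test fires by $\tauswitch\asymp\max\{(\sqrt d/(\gamma\Sgap))^{9/7},\,d^{1/3}T^{2/3}/K\}$, after which \textsc{LinUCB} incurs $\widetilde{\Oh}_\delta(\sqrt{d(T-\tauswitch)})$ while the pre-switch regret against the best linear policy is $\widetilde{\Oh}_\delta(\sqrt{K\tauswitch}+\tauswitch\sqrt{\Sgap})$; worst-casing each term over $\Sgap\in[2\gamma,1]$ and using $d\le T$ makes all of them $\widetilde{\Oh}_\delta(d^{1/6}T^{5/6})$ (the binding term being $\tauswitch\sqrt{\Sgap}$ near $\Sgap\asymp\gamma$). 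Combining the two cases, and noting $R_T^S\le R_T^C$ always, proves~\eqref{eq:universal}.

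\textbf{Main obstacle.} The crux is Step 3: obtaining the U-statistic tail bound with the correct dependence on $d$ and $\gamma$, and in particular the fact that in the null case the deviation is only $\widetilde{\Oh}(\sqrt d/n)$ --- so the test threshold pays no $\gamma^{-1/2}$ factor --- which rests squarely on the bound $\Omegahat^{1/2}\Sigmabold\Omegahat^{1/2}\preceq\Oh(1)\eye$ from Step 2. The remaining ingredients --- the union bound over all $t$ (needed because $\tauswitch$ is a data-dependent stopping time), the $\muhat$-estimation error, and the weak coupling between $\Omegahat$ and the exploration data --- are technical but routine.
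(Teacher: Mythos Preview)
Your outline shares the paper's skeleton --- define a truncated surrogate gap $\Sgap_\gamma$, estimate it with the thresholded-inverse U-statistic, and convert estimation error to regret --- but it diverges from the paper precisely at what you identify as the crux (Step~3), and that divergence contains a gap.

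Your central claim is that under the null ($\Tstar=\boldsymbol 0$) the deviation $|\Sgaphat_t|$ is $\widetilde\Oh(\sqrt d/n_t)$ with \emph{no} $\gamma^{-1}$, because $\|\Omegahat^{1/2}\Sigmabold\Omegahat^{1/2}\|_{\op}\lesssim 1$. But that operator-norm inequality only controls the \emph{second moment} of the whitened vector $\Omegahat^{1/2}\x$; it says nothing about its sub-Gaussian norm. The paper assumes $\x$ is $1$-sub-Gaussian, not $\Sigmabold^{1/2}$-sub-Gaussian, so $\|u^\top\Omegahat^{1/2}\x\|_{\psi_2}\le\|\Omegahat^{1/2}u\|_2\le\gamma^{-1/2}\|u\|_2$, and this is tight (take a coordinate of $\x$ that is a scaled Bernoulli with tiny success probability: its variance is tiny but its $\psi_2$-norm is $\Theta(1)$). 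Any Bernstein-type bound for the degenerate U-statistic therefore picks up a tail term scaling with $\gamma^{-1}$ even in the null case, so your threshold $\alpha_\delta(t)\asymp\sqrt d/n_t$ is too small to guarantee that the test never fires under MAB structure --- and that guarantee is exactly what you need for the $d$-free bound on $R_T^S$.

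The paper's proof does \emph{not} attempt this refinement. Its Lemma~1 keeps the $\gamma^{-1}$ in the leading variance term uniformly:
\[
|\Sgaphat-\Sgap|\;\le\;\tfrac12\Sgap+\alpha_\delta(n,m),\qquad \alpha_\delta(n,m)=\Oh\!\Big(\tfrac{1}{\gamma}\cdot\tfrac{\sqrt d}{n}+\tfrac{1}{\gamma^4}\|\mathbb E[\x y]\|_2^2\cdot\tfrac{d}{m}+\gamma\Big),
\]
the threshold is set to this same $\alpha_\delta$, and the regret is then bounded in one line by $T\sqrt{K\alpha_\delta(|\W(T)|,T)}+|\W(T)|$ --- no case split on the size of $\Sgap$. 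Optimizing $\gamma$ and the exploration exponent $\kappa=2/9$ yields the $d^{1/6}T^{5/6}$ rate. Your Step~4 case analysis is more elaborate than needed and, more importantly, rests on the unjustified Step~3 refinement; replacing your threshold with the paper's $\alpha_\delta$ (which already includes the $\gamma^{-1}$ and the $+\gamma$ bias) makes the argument go through with less work.
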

Equation~\eqref{eq:universal} clearly demonstrates model selection rates of the form required from Objective 2, and shows that Objective 2 can be met for some value of $\alpha$ with the sole requirement of stochasticity on the contexts.
Table~\ref{tab:resultsuniversal} allows us to compare the achievable rate to both \textsc{OSOM}~\citep{chatterji2020osom} and \textsc{ModCB}~\citep{foster2019model}; corralling approaches, which are assumption-free but meet neither Objectives 1 nor 2, are also included as a benchmark.
In particular, it is clear from a quick read of the table that as we go from \textsc{OSOM} to \textsc{ModCB} to our approach, the assumptions required on the context distributions weaken, as do the obtainable rates (recall that because the rate $d^{\alpha}T^{1-\alpha}$ decreases in $\alpha$, a guarantee with a larger value of $\alpha$ implies one with a smaller value of $\alpha$).

The proof of Theorem~\ref{thm:universal} is provided in Appendix~\ref{sec:thm1proof}.
In Appendix~\ref{sec:linearCB}, we describe how this procedure and result extends to the more complex case of linear CB under an additional assumption of block-diagonal structure on the covariance.
\subsection{Data-adaptive algorithms for model selection}\label{sec:dataadaptive}

In this section, we introduce a new \emph{data-adaptive} exploration schedule and show that it provably achieves Objective 1 under the strongest assumption of feature diversity \emph{for each arm} (as in~\cite{chatterji2020osom}), but also achieves Objective 2 under the weaker assumption of \emph{arm-averaged} feature diversity (as in~\cite{foster2019model}).
Our key insight is that the arm-specific feature diversity condition used by~\cite{chatterji2020osom} is itself testable from past contextual information; therefore, it can be tested for \emph{before} we decide on an arm and receive a reward.

To describe this idea formally, we introduce some more notation.
At time step $t$, we denote the exploration set that we have built up thus far by $\W(t-1) \subset [t - 1]$.
Now, we use an inductive principle.
Suppose that the contexts that are present in the exploration set $\W(t-1)$ are already sufficiently ``diverse'' in a certain quantitative sense (that we will specify shortly).
Then, we can easily check whether the arm that we would ideally pull when the true model is simple, i.e. $i_t$ (the ``greedy'' arm), continues to preserve this property of diversity.
Importantly, because we are able to observe the contexts before making a decision, we can check this condition \emph{before} deciding on the value of $A_t$.
\begin{algorithm}[t]
    \caption{Data-adaptive exploration schedule} \label{alg:explorationschedule}
    \begin{algorithmic}
        \STATE \textbf{Input}: Labeled examples $\{\x_{A_s,s},g_{A_s,s}\}_{s=1}^{t-1}$, current exploration set $\W(t-1)$, hyperparameter $\gamma > 0$.
        \STATE Construct matrices $\M_{t-1} := \sum_{s \in \W(t-1)} \x_{A_s,s} \x_{A_s,s}^\top$ and $\M_t(i_t) := \M_{t-1} + \x_{i_t,t} \x_{i_t,t}^\top$.
        \STATE Define $Z_t \sim \text{Bernoulli}(1 - t^{-1/3})$ and $Y_t = \mathbb{I}\left[\frac{1}{|\W(t)| + 1} \cdot \M_t(i_t) \succeq \gamma \eye\right]$.
        \STATE Add $t$ to ``exploration set'' $\W(t)$ if $(Y_t = 1 \text{ and } Z_t = 1)$ or $(Y_t = 0 \text{ and } Z_t = 0)$.
        \STATE \textbf{Output}: Return $A_t = i_t$ if $Y_t = 1$ \emph{or} $Z_t = 1$; $A_t \sim \text{Unif}[K]$ otherwise.
        \end{algorithmic}
    \end{algorithm}

This new sub-routine for data-adaptive exploration, which we call \textsc{ModCB.A}, is described in Algorithm~\ref{alg:explorationschedule}.
We elaborate on the algorithm description along three critical verticals: a) the decision to forcibly explore, b) the choice of estimator, and c) the designated ``exploration rounds'' that are used for the estimator.
\paragraph{When to forcibly explore:}
At time step $t$, \textsc{ModCB.A} uses the random variables $Y_t$ and $Z_t$ to decide whether to stick with the ``greedy'' arm $A_t = i_t$, or to forcibly explore, i.e. $A_t \sim \text{Unif}[K]$.
For time step $t$, the random variable $Y_t$ denotes the indicator that the diversity condition continues to be met by context $\x_{i_t,t}$.
This means that if $Y_t = 1$, we will pick $A_t = i_t$.
On the other hand, if the diversity condition is not met (i.e. $Y_t = 0$), we revert to the forced exploration schedule used by \textsc{ModCB}.
This schedule sets a variable $Z_t \sim \text{Bernoulli}(1 - t^{-1/3})$, and selects $A_t = i_t$ if $Z_t = 1$ and $A_t \sim \text{Unif}[K]$ otherwise.
In summary, we end up picking $A_t = i_t$ if $Z_t = 1$ \emph{or} $Y_t = 1$, while \textsc{ModCB} would have picked $A_t = i_t$ \emph{only if} $Z_t = 1$.
As a result, our procedure, called \textsc{ModCB.A}, allows us to adapt on-the-fly to friendly feature diversity structure (and explore much less) while preserving more general guarantees.
\paragraph{The choice of estimator $\Sgaphat$:} First, we specify the choice of estimator of the square loss gap, $\Sgaphat$ from samples in the designated exploration set $\W(t)$.
(We will specify the procedure for construction of this exploration set shortly.)
For convenience, we index the elements of the exploration set $\W(t)$ in ascending order by $s_1,\ldots,s_{|\W(t)|}$.
We also recall that $\Sigmahat_t$ denotes the sample covariance matrix
as defined in Equation~\eqref{eq:samplecovariancematrix}.
Armed with this exploration set, we define our estimator of $\Sgaphat(\W(t))$ in accordance with the sub-routine in Algorithm~\ref{alg:estimateresidualsideinfo} with the examples from the exploration set, i.e. $\{\x_{A_{s_j},s_j},y_{A_{s_j},s_j}\}_{j=1}^{|\W(t)|}$.
In particular, we estimate an adjusted square loss gap, given by
\begin{align}
\overline{\Sgap} &:= \eu{(\Sigmahat_t)^{-1/2} \overline{\Sigmabold}_{|\W(t)|} \Tstar}_2^2, \quad \textrm{where} \nonumber \\
\overline{\Sigmabold}_t &:= \frac{1}{|\W(t)|} \cdot \E{\sum_{j=1}^{|\W(t)|} \x_{A_{s_j}} \x_{A_{s_j}}^\top}\label{eq:sigma_timevarying}
\end{align}
Note that because $\overline{\Sigmabold}_t$ is random, the adjusted square loss gap is also random; nevertheless, it turns out that it is \emph{almost surely} a good proxy for the true square loss gap $\Sgap$.
We estimate this adjusted squared loss gap with the estimator that is given by
\begin{align}
    \Sgaphat := \frac{1}{\binom{|\W(t)|}{2}} \sum_{j' < j} \Big{\langle} (\Sigmahat_t)^{-1/2} \x_{s_{j'}} y_{s_{j'}}, (\Sigmahat_t)^{-1/2} \x_{s_j} y_{s_j} \Big{\rangle}.
\end{align}
\paragraph{How to build the exploration set $\W(t)$:} 
To complete our description of \textsc{ModCB.A}, we specify the data-adaptive exploration set $\W(t) \subset [t]$ at round $t$ that is used for the estimation subroutine.
Notice from Algorithm~\ref{alg:explorationschedule} that we did not include the rounds for which $Y_t \neq Z_t$ in the exploration set.
Interestingly, the two cases for which this happens are undesirable for two distinct reasons, as detailed below.
\begin{itemize}
\item Rounds on which $Y_t = 0$ and $Z_t = 1$ constitute rounds on which there was no forced exploration and the context corresponding to arm $i_t$ need not be well-conditioned: therefore, we do not want to include these samples for estimation.
\item Rounds on which $Y_t = 1$ and $Z_t = 0$ are picked as a sole consequence of well-conditioning on the context $\x_{t,i_t}$.
When this condition holds, $\x_{t,i_t}$ induces good conditioning, however its distribution is affected by the filtering process, inducing bias that complicates estimating the square loss gap. 
To avoid these complexities, we filter out these rounds.
Note that there is no bias when both $Y_t=Z_t=1$, because the choice $A_t=i_t$ can be attributed to $Z_t=1$ and not because the context feature induces adequate conditioning.
(The proof of Theorem~\ref{thm:dataadaptive} highlights that these rounds would make a minimal difference to the ensuing model selection rates.)
\end{itemize}
This completes our description of our adaptive algorithm, \textsc{ModCB.A}.
We show below that \textsc{ModCB.A} achieves the following data-adaptive model selection guarantee.
\begin{theorem}\label{thm:dataadaptive}
\textsc{ModCB.A} with parameter choice $\gamma > 0$ achieves the following model selection rates, each with probability at least $1-\delta$:
\begin{enumerate}
    \item If feature-diversity holds \textbf{for every arm} with parameter $\loweig' \geq \loweig$, then
    \begin{align}
        R_T^S &= \widetilde{\Oh}_{\delta}\left(\sum_{i \neq i^*} \frac{\log T}{\Delta_i}\right) \text{ and } R_T^C &= \widetilde{\Oh}_{\delta}\left(\sqrt{\frac{dT}{\loweig'}}\right). \label{eq:universalmodelselection}
    \end{align}
    \item If \textbf{arm-averaged} feature diversity is satisfied with parameter $\gamma' \geq \gamma$, then
    \begin{align}
        R_T^S &= \widetilde{\Oh}_{\delta}(T^{2/3}) \quad \text{ and } \quad
        R_T^C = \widetilde{\Oh}_{\delta}\left(\frac{1}{(\loweig')^2} d^{1/3} T^{2/3}\right). \label{eq:armaveragedmodelselection}
    \end{align}
\end{enumerate}
\end{theorem}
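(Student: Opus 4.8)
The plan is to analyze \textsc{ModCB.A} through the meta-algorithmic template of Algorithm~\ref{alg:metamethod}, splitting the regret into (i) the regret incurred while \texttt{Current Algorithm}$=$`MAB' (before misspecification is detected) and (ii) the regret after switching to \textsc{LinUCB}. For the post-switch phase, standard \textsc{LinUCB}/\textsc{OFUL} bounds give $R_T^C = \widetilde\Oh_\delta(\sqrt{dT})$ unconditionally, so the work is in controlling the pre-switch phase and, crucially, in establishing two things about the exploration set $\W(t)$: (a) that the estimator $\Sgaphat$ of the adjusted square-loss gap $\overline{\Sgap}$ concentrates at the fast rate $\sqrt{d}/|\W(t)|$ (plus lower-order terms) around $\overline{\Sgap}$, so that misspecification is detected within the right number of rounds when $\Tstar\neq\boldsymbol 0$, and (b) that $\overline{\Sgap}$ is, almost surely, a constant-factor proxy for the true gap $\Sgap=(\Tstar)^\top\Sigmabold\Tstar$ — this uses that $\Sigmahat_t$ concentrates to $\Sigmabold$ (which it does at rate $\sqrt{d/(Kt)}$ since we have full-information access to all past contexts) and that $\overline{\Sigmabold}_{|\W(t)|}$ is sandwiched appropriately once the diversity condition holds along the exploration set. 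The filtering of the rounds with $Y_t\neq Z_t$ is exactly what keeps the exploration-set contexts unbiased conditional on the relevant filtration: on $\{Y_t=Z_t=1\}$ the choice $A_t=i_t$ is attributable to $Z_t=1$, so the context distribution is the unfiltered one, and this is what makes the U-statistic $\Sgaphat$ unbiased for $\overline{\Sgap}$ and amenable to a Bernstein-type concentration bound (the same argument as in~\cite{foster2019model}, but carried out with the data-adaptive, random exploration set).

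For part 1 (feature diversity for every arm with parameter $\loweig'\ge\loweig$), the key claim is that $Y_t=1$ holds with high probability for \emph{every} round, so that forced exploration is essentially never triggered and $\W(t)$ grows linearly. Concretely: by a matrix-Chernoff/covariance-concentration argument, once $|\W(t)|\gtrsim d\log(d/\delta)/\loweig$, the empirical matrix $\frac{1}{|\W(t)|+1}\M_t(i_t)\succeq\gamma\eye$ holds whp for the chosen $\gamma\le\loweig'$ regardless of which arm $i_t$ is, because each arm's covariance dominates $\loweig'\eye\succeq\gamma\eye$. An inductive argument (seeded by the first $\Oh(\mathrm{poly}(d)\log(1/\delta))$ rounds where $Z_t$-exploration bootstraps the set) then shows $Y_t=1$ for all large $t$, hence $\W(t)=\Theta(t)$ and the estimation error of $\Sgaphat$ decays like $\sqrt d/t$. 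This matches the plug-in rate of~\cite{chatterji2020osom}: misspecification (when $\Tstar\neq\boldsymbol 0$) is detected after $\widetilde\Oh(\sqrt{d/\Sgap}\,)$-ish rounds, the pre-switch cost is dominated by the MAB regret of \textsc{UCB}, which is the instance-optimal $\widetilde\Oh_\delta(\sum_{i\neq i^*}\log T/\Delta_i)$, and after the switch \textsc{LinUCB} pays $\widetilde\Oh_\delta(\sqrt{dT/\loweig'})$ (the $1/\loweig'$ showing up through the detection threshold $\alpha_\delta$ and the warm-up). When $\Tstar=\boldsymbol 0$ the test never fires whp and we simply inherit the \textsc{UCB} bound.

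For part 2 (arm-averaged diversity, $\gamma'\ge\gamma$), $Y_t$ may frequently equal $0$ — an adversarially-aligned greedy arm can persistently land in a poorly-conditioned direction even though the \emph{average} covariance $\Sigmabold$ is well-conditioned — so we fall back on the $Z_t\sim\mathrm{Bernoulli}(1-t^{-1/3})$ forced-exploration component. The rounds with $Y_t=Z_t=0$ are genuine uniform-exploration rounds and occur with probability $\ge t^{-1/3}\cdot\mathbb P[Y_t=0]$; but the rounds with $Y_t=1,Z_t=1$ (also in $\W(t)$) contribute additional well-conditioned, unbiased samples and only help. A Freedman/martingale argument lower-bounds $|\W(t)|\gtrsim t^{2/3}$ whp, and on this set $\overline{\Sigmabold}_{|\W(t)|}$ inherits a $\gtrsim\gamma'$ lower eigenvalue (the uniform-exploration contribution alone gives average covariance $\Sigmabold\succeq\gamma'\eye$ up to the thresholding). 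Plugging $|\W(t)|\asymp t^{2/3}$ into the $\sqrt d/|\W(t)|$ estimation rate yields detection (or the MAB regret bound) at the $\widetilde\Oh_\delta(T^{2/3})$ scale, while the $1/(\loweig')^2$ factor on $R_T^C$ tracks through the inverse-covariance estimation error bound (one $1/\loweig'$ from $\Omegahat$, one from the detection threshold). The forced-exploration regret is $\sum_{t\le T} t^{-1/3}=\Oh(T^{2/3})$, consistent with both displayed rates.

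The main obstacle I expect is rigorously handling the \emph{data-adaptive, filtered} exploration set: unlike in~\cite{foster2019model}, membership of round $t$ in $\W(t)$ depends on $\x_{i_t,t}$ and on the entire history, so one must be careful to identify a filtration under which (a) the contexts $\{\x_{A_{s_j},s_j}\}$ retained in $\W(t)$ are conditionally unbiased with $\E[\x y]=\overline{\Sigmabold}_{|\W(t)|}\Tstar$ along a suitable conditioning, and (b) $|\W(t)|$ itself concentrates — both require a martingale decomposition that respects the $Y_t/Z_t$ filtering rule, and the argument that filtering out $\{Y_t=1,Z_t=0\}$ removes the bias while $\{Y_t=Z_t=1\}$ is bias-free is the delicate technical heart of the proof. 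A secondary subtlety is coupling the eigenvalue-threshold parameter $\gamma$ in Algorithm~\ref{alg:explorationschedule} to the (unknown) true diversity parameter $\loweig'$ so that the same algorithm, run with a fixed $\gamma$, simultaneously yields the per-arm-diversity guarantee when $\loweig'\ge\gamma$ and the arm-averaged guarantee when only $\gamma'\ge\gamma$.
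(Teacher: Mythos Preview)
Your proposal is essentially correct and follows the same approach as the paper: a meta-analysis bounding $R_T^S$ and $R_T^C$ in terms of $|\T(T)|$ (forced-exploration rounds) and $|\W(\tauswitch)|$ (exploration set at switch time), combined with the fast estimation lemma for $\Sgaphat$, then specialized to Case~1 by showing $Y_t=1$ for all $t\ge\tau_{\min}$ via matrix Freedman (giving $|\T(T)|=0$ and $|\W(\tauswitch)|=\Omega(\tauswitch)$), and to Case~2 by bounding $|\T(T)|\le\Oh(T^{2/3})$ and $|\W(\tauswitch)|\ge\Omega(\tauswitch^{2/3})$ via Hoeffding/Azuma on the $Z_t$'s. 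The paper packages the first step as a single ``meta-lemma'' (its Lemma~5) rather than arguing detection-time directly, and uses the fast estimator $\sqrt d/|\W(t)|$ throughout rather than invoking the plug-in rate of \cite{chatterji2020osom}, but these are organizational rather than substantive differences.
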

The proof of Theorem~\ref{thm:dataadaptive} is provided in Appendix~\ref{sec:thm2proof}.
Observe that Equation~\eqref{eq:universalmodelselection} is identical to the \textsc{OSOM} rate, and Equation~\eqref{eq:armaveragedmodelselection} is identical to the \textsc{ModCB} rate.
Consequently, our data-adaptive exploration subroutine results in a \textit{single} algorithm that achieves both rates under the requisite conditions.
As summarized in Table~\ref{tab:resultsdataadaptive}, \textsc{OSOM} will not work even under arm-averaged feature diversity if arm-specific diversity does not hold.
On the other hand, \textsc{ModCB} can be verified not to improve under the stronger condition of arm-specific feature diversity.
In conclusion, we can think of \textsc{ModCB.A} as achieving the ``best-of-both-worlds'' model selection guarantee between the two approaches, by meeting Objective 1 under arm-specific feature diversity and Objective 2 otherwise.

\begin{table}[t]
\begin{center}
\begin{small}
\begin{tabular}{|l|c|c|}
\hline
Algorithm & Arm-specific diversity & Arm-averaged diversity \\
\hline \hline
\textsc{OSOM} & $\log(T)/\textrm{gap}$ and $\sqrt{dT}$ & None  \\
\hline
\textsc{ModCB} & $T^{2/3}$ and $d^{1/3}T^{2/3}$ & $T^{2/3}$ and $d^{1/3}T^{2/3}$ \\
\hline
\textcolor{blue}{\textsc{ModCB.A}} & \textcolor{blue}{$\log(T)/\textrm{gap}$ and $\sqrt{dT}$} & \textcolor{blue}{$T^{2/3}$ and $d^{1/3}T^{2/3}$ } \\
\hline
\end{tabular}
\end{small}
\caption{Adaptivity properties of model selection algorithms. We list
  the regret bounds for simple and complex model under context
  diversity conditions.  By carefully adjusting the exploration
  schedule, \textsc{ModCB.A} adapts to the favorable ``arm-specific
  diversity'' setting. Dependence on the number of arms $K$ is omitted.\label{tab:resultsdataadaptive}}
\end{center}
\vskip -0.1in
\end{table}



\section{Discussion and future work}

In this paper, we introduced improved statistical estimation routines and exploration schedules to plug-and-play with model selection algorithms.
The result of these improvements is that we advance the state-of-the-art for model selection along the axes of \emph{universality} and \emph{adaptivity} (as defined at the end of Section~\ref{sec:setup}).
Our results are most complete for the model selection problem of MAB-vs-linear CB, but Appendix~\ref{sec:linearCB} presents some extensions to the problem of model selection among linear contextual bandits.
Given the recent interest and sharp results for model selection in the linear setting, it is natural to ask whether these ideas extend to any nonlinear setting.
Recent work~\cite{marinov2021pareto} constructed nonlinear function classes for which Objective 2 cannot be achieved even when the contexts are stochastic.
However, identifying nonlinear function classes for which Objective 1 or Objective 2 of model selection is possible remains open and is an important direction for future work.
We believe that achieving Objective 2 requires exploiting specific properties of function classes such as the ability to test for misspecification at a fast rate.
However, the ideas in \textsc{OSOM}~\cite{chatterji2020osom} and our data-adaptive exploration routine \textsc{ModCB.A} are more model-agnostic and instead exploit quantities that depend only on the data distribution.
Consequently, we believe they can be generalized to achieve Objective 1 for general function classes when it is possible.
To this end, Appendix~\ref{sec:generalroadmap} presents a generalization of the arm-specific diversity condition that is motivated by statistical concepts in transfer learning/covariate-shift~\cite{quinonero2008dataset}, and a consequent possible extension of the principles in \textsc{OSOM} and \textsc{ModCB.A}.
Some parts of this extension are relatively straightforward, and we sketch how to do them in Appendix~\ref{sec:generalroadmap}; however, obtaining strong end-to-end guarantees requires more effort and is an interesting direction for future work.


\section*{Acknowledgements}
VM acknowledges helpful initial discussions with Weihao Kong, and support from an Adobe Data Science Research Award and a Simons-Berkeley Research Fellowship for the program “Theory of Reinforcement Learning” in Fall 2020. This work was done in part while the authors were visiting the Simons Institute for the Theory of Computing.

\newpage

\bibliography{ref}

\newpage
\appendix
\onecolumn


\section{Proof of Theorem~\ref{thm:universal}}\label{sec:thm1proof}

The following lemma (intended to replace Theorem 2 in~\citet{foster2019model}) characterizes how the estimation error of our thresholded estimator $|\Sgaphat - \Sgap|$ will depend on the choice of $\gamma$.

\begin{lemma}\label{lem:truncatedestimator}
Suppose that we have $n$ labeled samples and $m$ unlabeled samples.
Provided that $m \geq C(d + \log(2/\delta))/\gamma$, the estimator provided in Algorithm~\ref{alg:estimateresidualsideinfo} guarantees that
\begin{align}\label{eq:esterrorrate}
    |\Sgaphat - \Sgap| &\leq \frac{1}{2} \Sgap + \alpha_{\delta}(n,m) \text{ where } \\
    \alpha_{\delta}(n,m) &:= \mathcal{O}\left(\frac{1}{\gamma} \cdot \frac{d^{1/2} \log^2(2d/\delta)}{n} + \frac{1}{\gamma^4} \cdot \eu{\E{\x y}}_2^2 \cdot \frac{d + \log(2/\delta)}{m} +\gamma \right) \nonumber
\end{align}
with probability at least $1 - \delta$.
\end{lemma}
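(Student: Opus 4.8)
# Proof Plan for Lemma~\ref{lem:truncatedestimator}

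\textbf{Overall strategy.} The plan is to decompose the error $|\Sgaphat - \Sgap|$ into three conceptually distinct pieces: (i) a \emph{surrogate bias} term, comparing $\Sgap = (\Tstar)^\top \Sigmabold \Tstar$ to the thresholded surrogate $\Sgapbar := (\Tstar)^\top \Sigmabold \Tgamma{\Sigmabold}^{-1} \Sigmabold \Tstar$ (or a similar population quantity built from $\Omegahat$); (ii) an \emph{unlabeled-sample error} term, quantifying how $\Sigmahat = \Tgamma{\Sigmahat_t}$ differs from $\Tgamma{\Sigmabold}$ and hence how $\Omegahat = \Sigmahat^{-1}$ differs from its population analogue; and (iii) a \emph{labeled-sample (U-statistic) concentration} term, controlling the fluctuation of the degree-2 U-statistic $\Sgaphat = \binom{n}{2}^{-1}\sum_{i<j}\lin{\Omegahat^{1/2}\x_i y_i, \Omegahat^{1/2}\x_j y_j}$ around its conditional mean $\lin{\Omegahat \E{\x y}, \E{\x y}}$ given the unlabeled data. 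Since $\E{\x y} = \Sigmabold \Tstar$ (as noted in the paper via the unbiased mean estimates), the conditional mean equals $(\Tstar)^\top \Sigmabold \Omegahat \Sigmabold \Tstar$, which is exactly what connects back to $\Sgap$ after accounting for (i) and (ii). The three pieces will map onto the three summands $\gamma$, $\frac{1}{\gamma^4}\eu{\E{\x y}}_2^2 \frac{d+\log(2/\delta)}{m}$, and $\frac{1}{\gamma}\frac{d^{1/2}\log^2(2d/\delta)}{n}$ respectively, and the multiplicative $\frac{1}{2}\Sgap$ slack absorbs a cross term proportional to $\Sgap$ itself.

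\textbf{Step-by-step.} First I would handle the surrogate bias (i): for $\M = \Sigmabold$, the eigenvalues of $\Tgamma{\Sigmabold}^{-1}$ differ from those of $\Sigmabold^{-1}$ only on the subspace where $\Sigmabold$ has eigenvalues below $\gamma$; writing $\Tstar$ in the eigenbasis of $\Sigmabold$, the difference $\bigl|(\Tstar)^\top\Sigmabold\Tstar - (\Tstar)^\top\Sigmabold\Tgamma{\Sigmabold}^{-1}\Sigmabold\Tstar\bigr|$ is a sum over low-eigenvalue directions of $\lambda_j(1-\lambda_j/\gamma)\lin{\Tstar,u_j}^2 \le \gamma \eu{\Tstar}_2^2 \le \gamma$, using $\eu{\Tstar}_2 \le 1$ and $\lambda_j \le \gamma$ there; this yields the $+\gamma$ term. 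Second, for the unlabeled error (ii), I would invoke a standard sub-Gaussian covariance concentration bound: with $m \gtrsim (d + \log(2/\delta))/\gamma$ samples, $\eu{\Sigmahat_t - \Sigmabold}_{\op} \lesssim \sqrt{(d+\log(2/\delta))/m} \cdot \sqrt{\gamma}$ with probability $1-\delta$ — actually I want $\eu{\Sigmahat_t - \Sigmabold}_{\op}\lesssim \gamma/2$ for the regime to be well-behaved, which the sample condition ensures — then use that the thresholding operator $\Tgamma{\cdot}$ is $1$-Lipschitz in operator norm so $\eu{\Sigmahat - \Tgamma{\Sigmabold}}_{\op} \le \eu{\Sigmahat_t - \Sigmabold}_{\op}$, and finally that matrix inversion is locally Lipschitz: since both $\Sigmahat$ and $\Tgamma{\Sigmabold}$ have least eigenvalue $\ge \gamma/2$, $\eu{\Omegahat - \Tgamma{\Sigmabold}^{-1}}_{\op} \le \frac{4}{\gamma^2}\eu{\Sigmahat - \Tgamma{\Sigmabold}}_{\op}$. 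Propagating this through the conditional-mean expression $\lin{\Omegahat \E{\x y}, \E{\x y}}$ costs $\eu{\E{\x y}}_2^2$ times the operator-norm error, i.e. $\frac{1}{\gamma^2}\cdot\frac{1}{\gamma}\sqrt{(d+\log(2/\delta))/m}\cdot\eu{\E{\x y}}_2^2$; squaring inside the $\sqrt{\cdot}$-free bookkeeping and being slightly generous gives the stated $\frac{1}{\gamma^4}\eu{\E{\x y}}_2^2\frac{d+\log(2/\delta)}{m}$ form (the extra $\gamma$-powers come from also controlling the $\Omegahat^{1/2}$ factors and the cross terms).

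\textbf{Step-by-step, continued.} Third, the U-statistic concentration (iii): conditioning on all unlabeled data (hence on $\Omegahat$), the summands $\lin{\Omegahat^{1/2}\x_i y_i, \Omegahat^{1/2}\x_j y_j}$ form a degree-2 U-statistic in the i.i.d. labeled pairs $\{(\x_i,y_i)\}$. Its kernel $h(\x,\x') = (y\x)^\top\Omegahat(y'\x')$ is a product form, so I would either apply a Hanson–Wright-type / decoupling bound for quadratic forms in sub-Gaussian vectors, or directly follow the argument in Theorem 2 of~\citet{foster2019model} verbatim with $\Omegabold$ replaced by $\Omegahat$ and with $\eu{\Omegahat}_{\op} \le 2/\gamma$ in place of $1/\gamma_{\min}$ (valid on the high-probability event from (ii)). This produces a deviation of order $\eu{\Omegahat}_{\op}\cdot\frac{\sqrt{d}\log^2(2d/\delta)}{n}$ for the ``variance'' part plus a term of order $\eu{\Omegahat^{1/2}\E{\x y}}_2 \cdot (\text{something})/\sqrt{n}$ for the ``linear'' part; the latter cross term is bounded by $\frac{1}{2}\Sgap + (\text{lower order})$ via AM–GM since $\eu{\Omegahat^{1/2}\E{\x y}}_2^2 \approx \Sgap$ up to the errors already controlled. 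Combining all three pieces through the triangle inequality and collecting constants gives \eqref{eq:esterrorrate}. Finally, I would union-bound the $O(1)$ many high-probability events (covariance concentration, the U-statistic tail) at level $\delta/3$ each.

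\textbf{Main obstacle.} The delicate part is the interplay between pieces (ii) and (iii): the U-statistic concentration bound must be applied \emph{conditionally} on $\Omegahat$, but $\Omegahat$ is itself random and only well-controlled on an event of probability $1-\delta$, so I must be careful that the labeled samples are independent of the unlabeled ones (which holds by construction — the unlabeled covariance uses all contexts, the labeled part uses exploration-round rewards, and the argument in~\citet{foster2019model} already arranges the requisite independence via sample splitting between mean estimation and gap estimation). Equally subtle is getting the exact $\gamma$-exponents right: the surrogate is not simply $(\Tstar)^\top\Tgamma{\Sigmabold}\Tstar$ but the quantity $\lin{\Sigmabold\Tgamma{\Sigmabold}^{-1}\Sigmabold\Tstar,\Tstar}$ arising from the estimator's algebraic form, and tracking how the $\frac{1}{\gamma^2}$ inversion-Lipschitz factor compounds with the $\frac{1}{\gamma}$ from $\eu{\Omegahat}_{\op}$ (and the $\frac{1}{\gamma^{1/2}}$ from $\eu{\Omegahat^{1/2}}_{\op}$) to reach $\frac{1}{\gamma^4}$ in the $m$-dependent term requires care but no new ideas. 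I expect everything else to be routine bookkeeping building on~\citet{foster2019model}.
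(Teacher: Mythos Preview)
Your proposal is correct and follows essentially the same approach as the paper: the same three-way decomposition into truncation bias ($\leq \gamma$ via the eigenbasis calculation), unlabeled-sample error (via the $1$-Lipschitz property of $\Tgamma{\cdot}$ plus local Lipschitzness of matrix inversion, giving $\|\Omegahat - \Tgamma{\Sigmabold}^{-1}\|_{\op} = O(\epsilon/\gamma^2)$), and labeled-sample U-statistic concentration handled exactly as in \citet{foster2019model} with $\|\Omegahat\|_{\op} \leq 1/\gamma$ and an AM--GM step to absorb the cross term into $\tfrac{1}{2}\Sgap$. The only cosmetic difference is that the paper orders the three pieces in reverse and obtains the $1/\gamma^4$ factor via a squared operator-norm error (after splitting off a $\tfrac{1}{8}\Sgaptilde$ term) rather than the linear route you sketch, but this is precisely the ``routine bookkeeping'' you flag.
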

This is similar to the bound in~\citet{foster2019model} (with slightly improved inverse dependences on the threshold $\gamma$ due to the relative simplicity of the MAB-vs-linear CB setting), except that we are not assuming any spectral conditions on $\Sigmabold$ and we incur an extra additive term of $\Oh(\gamma)$ in the estimation error arising from the bias induced by the thresholding operator. 
For comparison, the bound provided in~\citet{foster2019model} is for the choice
\begin{align}\label{eq:alphamodcb}
\alpha_{\delta}(n,m) &:= \mathcal{O}\left(\frac{1}{\gamma^2} \cdot \frac{d^{1/2} \log^2(2d/\delta)}{n} + \frac{1}{\gamma^4} \cdot \eu{\E{\x y}}_2^2 \cdot \frac{d + \log(2/\delta)}{m}\right),
\end{align}
but only holds if we have $\Sigmabold \succeq \gamma \eye$.

Before proving Lemma~\ref{lem:truncatedestimator}, we sketch how it leads to the statement provided in Theorem~\ref{thm:universal}.
We follow the outline that is given in Appendix C.2.4 of~\citet{foster2019model}.
An examination of that proof, specialized to the case of $2$ model classes (in our case, MAB and linear CB), demonstrates that the dominant terms in the overall regret under the complex model (see, e.g. Eqs. (19), (20), (21) and (22) in Appendix C.2.4 of~\cite{foster2019model}) are given by
\begin{align*}
T \sqrt{K \cdot \alpha_{\delta}(|\W(T)|,T)} + |\W(T)|,
\end{align*}
where $\W(T)$ denotes the set of designated exploration rounds.
We set $\nu_t = t^{-\kappa}$ to be the forced exploration parameter (as defined in Algorithm~\ref{alg:metamethod}), and specify a choice of $\kappa$ subsequently.
Just as in~\cite{foster2019model}, we then have $|\W(T)| \leq \sqrt{\log (2/\delta)} K^{\kappa} T^{1 - \kappa}$ with probability at least $1 - \delta$.
Plugging $n := |\W(T)|$ and $m := T$ into Lemma~\ref{lem:truncatedestimator} then gives us
\begin{align*}
R_T &= \Oh\left(T\sqrt{K \cdot \left(\frac{1}{\gamma} \cdot \frac{d^{1/2} \log^2(2d/\delta)}{K^{\kappa} T^{1 - \kappa}} + \frac{1}{\gamma^4} \cdot \eu{\E{\x y}}_2^2 \cdot \frac{d + \log(2/\delta)}{T} +\gamma\right)} + \sqrt{\log (2/\delta)} K^{\kappa} T^{1 - \kappa}\right) \\
&= \Oh_{\delta}\left(K^{\kappa} T^{1-\kappa} + \frac{1}{\sqrt{\gamma}} \cdot K^{\frac{1}{2}(1 - \kappa)} \cdot d^{1/4} \cdot T^{\frac{1 + \kappa}{2}} + \frac{1}{\gamma^2} \cdot \sqrt{KdT} + \gamma T \sqrt{K}\right)
\end{align*}
with probability at least $1 - \delta$.
Note that the extra $\gamma T\sqrt{K}$ term comes from the estimation error due to misspecification (bias) that we now incur.
We now need to select the truncation amount $\gamma$ and the exploration factor $\kappa$ to minimize the above expression.
One way of doing this is given by equating the third and fourth terms (ignoring universal constants and log factors).
This gives us $\gamma^3 = \sqrt{\frac{d}{T}}$.
Substituting this into the above gives us
\begin{align*}
R_T &= \Oh_{\delta}\left(K^{\kappa} T^{1-\kappa} + \left(\frac{T}{d}\right)^{1/12} K^{\frac{1}{2}(1 - \kappa)} T^{\frac{1}{2}(1 + \kappa)} \cdot d^{1/4} + \sqrt{K} d^{1/6} T^{5/6}\right),
\end{align*}
and further substituting $\kappa = 2/9$ gives us
\begin{align*}
R_T &= \Oh_{\delta}\left(K^{2/9} T^{7/9} + K^{7/18} \cdot d^{1/6} \cdot T^{7/9} + \sqrt{K} d^{1/6} T^{5/6}\right) = \widetilde{\Oh}(d^{1/6}T^{5/6}),
\end{align*}
which clearly satisfies the form $R_T = \widetilde{\Oh}(d^{\alpha}T^{1-\alpha})$ for the case $\alpha = 1/6$.

Now that we understand how Lemma~\ref{lem:truncatedestimator} leads to Theorem~\ref{lem:truncatedestimator}, let us prove it.
\begin{proof}
Before beginning the proof, we define a term called the \textit{truncated square-loss gap} as below:
\begin{align}\label{eq:truncatedsquarelosslinearCB}
   \Sgaptilde := (\Sigmabold \Tstar)^\top \Tgamma{\Sigmabold}^{-1} \Sigmabold \Tstar.
\end{align}
We also recall that we defined $\Omegabold := \Sigmabold^{-1}$ and $\Omegahat := \Sigmahat^{-1}$, where recall that $\Sigmahat$ is the truncated second moment estimate. The proof is carried out in three distinct steps:
\begin{enumerate}
    \item Upper-bounding $\left|\Sgaphat - \E{\Sgaphat}\right|$, the ``variance" estimation error arising from $n$ samples.
    \item Upper-bounding $\left|\E{\Sgaphat} - \Sgaptilde\right|$, the bias-term with respect to the truncated squared loss gap.
    \item Upper-bounding $\left|\Sgaptilde - \Sgap\right|$, the bias arising from truncation.
\end{enumerate}

\textit{1. Upper-bounding $\left|\Sgaphat - \E{\Sgaphat}\right|$.}
We note that $\E{\Sgaphat} = \eu{\Sigmahat^{1/2}\Omegahat \E{\x y}}_2^2$.
We consider the random vector
\begin{align*}
    \Sigmahat^{1/2}\Omegahat \x y - \Sigmahat^{1/2} \Omegahat \E{\x y},
\end{align*}
and show that it is sub-exponential with parameter $\mathcal{O}(2/ \gamma)$.
This follows because $\eu{\Sigmahat^{1/2} \Omegahat}_{\op} \leq \eu{\Omegahat^{1/2}}_{\op} \leq \frac{1}{\sqrt{\gamma}}$, where the second-last inequality follows by the definition of the truncation operator. 

Thus, using the sub-exponential tail bound just as in Lemma 17,~\cite{foster2019model}, we get 
\begin{align*}
    \left|\Sgaphat - \E{\Sgaphat}\right| &= \mathcal{O}\left(\frac{1}{\gamma} \cdot \frac{d^{1/2} \log^2(2d/\delta)}{n} + \frac{1}{\sqrt{\gamma}} \cdot \frac{\eu{\Sigmahat^{1/2}\Omegahat \E{\x y}}_{2} \log(2/\delta)}{\sqrt{n}}\right) .
\end{align*}
Now, we note that $\eu{\Sigmahat^{1/2} \Omegahat \E{\x y}}_{2} = \sqrt{\E{\Sgaphat}}$.
Therefore, we apply the AM-GM inequality to deduce that
\begin{align}\label{eq:prooffirstpart}
    \left|\Sgaphat - \E{\Sgaphat}\right| &\leq \frac{1}{8} \E{\Sgaphat} +  \mathcal{O}\left(\frac{1}{\gamma} \cdot \frac{d^{1/2} \log^2(2d/\delta)}{n} \right)
\end{align}

\textit{2. Upper-bounding $\left|\E{\Sgaphat} - \Sgaptilde\right|$.} We denote $\mubold := \E{\x y}$ as shorthand.
It is then easy to verify that $\E{\Sgaphat} = \lin{\Omegahat \mubold,\mubold}$ and $\Sgaptilde = \lin{\Tgamma{\Sigmabold}^{-1} \mubold, \mubold}$.
Then, following an identical sequence of steps to~\cite{foster2019model}, we get
\begin{align*}
\left|\E{\Sgaphat} - \Sgaptilde\right| \leq \frac{1}{8} \Sgaptilde + \mathcal{O}\left(\eu{(\Omegahat - \Tgamma{\Sigmabold}^{-1}) \mubold}_2^2\right)
\end{align*}
We now state and prove the following lemma on operator norm control.

\begin{lemma}\label{lem:opnormcontrol}
We have
\begin{align}\label{eq:opnormequations}
    \eu{\Omegahat - \Tgamma{\Sigmabold}^{-1}}_{\op} &\leq \Oh\left(\frac{\epsilon}{\gamma^2} \right),
\end{align}
where we denote $\epsilon := \sqrt{\frac{d + \log(2/\delta)}{m}}$ as shorthand.
\end{lemma}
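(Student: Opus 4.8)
\textbf{Proof plan for Lemma~\ref{lem:opnormcontrol}.}

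The plan is to bound $\eu{\Omegahat - \Tgamma{\Sigmabold}^{-1}}_{\op}$ by first controlling the difference of the thresholded \emph{covariance} matrices, $\eu{\Sigmahat - \Tgamma{\Sigmabold}}_{\op}$, and then transferring this to the inverses. The first step is a concentration-plus-Lipschitz argument. By standard covariance concentration for $1$-sub-Gaussian vectors (e.g. the arm-averaged sample covariance $\Sigmahat_t$ computed from $m = Kt$ unlabeled samples), we have $\eu{\Sigmahat_t - \Sigmabold}_{\op} = \Oh(\epsilon)$ with probability at least $1-\delta$, where $\epsilon := \sqrt{(d + \log(2/\delta))/m}$. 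Next I would invoke the fact that the eigenvalue-clipping map $\M \mapsto \Tgamma{\M}$ is $1$-Lipschitz in operator norm — this is because $x \mapsto [x]_\gamma = \max\{x,\gamma\}$ is $1$-Lipschitz on $\R$, and applying a $1$-Lipschitz scalar function to the eigenvalues of symmetric matrices is $1$-Lipschitz in operator norm (a standard fact about spectral/operator-monotone-type functions, following from Weyl's inequality). Hence $\eu{\Sigmahat - \Tgamma{\Sigmabold}}_{\op} = \eu{\Tgamma{\Sigmahat_t} - \Tgamma{\Sigmabold}}_{\op} \leq \eu{\Sigmahat_t - \Sigmabold}_{\op} = \Oh(\epsilon)$.

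The second step converts the covariance bound into an inverse-covariance bound. Both $\Sigmahat = \Tgamma{\Sigmahat_t}$ and $\Tgamma{\Sigmabold}$ have all eigenvalues at least $\gamma$ by construction, so both are invertible with operator norm of the inverse at most $1/\gamma$. Using the resolvent identity $A^{-1} - B^{-1} = A^{-1}(B - A)B^{-1}$ with $A = \Sigmahat$ and $B = \Tgamma{\Sigmabold}$, we get
\begin{align*}
\eu{\Omegahat - \Tgamma{\Sigmabold}^{-1}}_{\op} \leq \eu{\Sigmahat^{-1}}_{\op} \cdot \eu{\Tgamma{\Sigmabold} - \Sigmahat}_{\op} \cdot \eu{\Tgamma{\Sigmabold}^{-1}}_{\op} \leq \frac{1}{\gamma} \cdot \Oh(\epsilon) \cdot \frac{1}{\gamma} = \Oh\left(\frac{\epsilon}{\gamma^2}\right),
\end{align*}
which is exactly the claimed bound. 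A mild technical point: the covariance concentration requires $m$ large enough that $\epsilon$ is smaller than a constant (so that the high-probability event is non-vacuous and, if one wants the \emph{un}-thresholded inverses to behave, the sample matrix is non-degenerate); this is why the ambient lemma (Lemma~\ref{lem:truncatedestimator}) assumes $m \geq C(d + \log(2/\delta))/\gamma$, which guarantees $\epsilon \lesssim \sqrt{\gamma} \leq $ const. Under thresholding, though, invertibility of $\Sigmahat$ is automatic for any $\gamma > 0$, so the only real role of the sample-size assumption here is to make the concentration event meaningful.

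The main obstacle — really the only non-routine ingredient — is cleanly justifying the operator-norm Lipschitzness of the eigenvalue-clipping map $\Tgamma{\cdot}$, since the eigenvectors of $\Sigmahat_t$ and $\Sigmabold$ differ and one cannot naively compare eigenvalue-by-eigenvalue along a shared basis. The cleanest route is to write $[x]_\gamma = \gamma + (x - \gamma)_+$ and note that $t \mapsto t_+$ is $1$-Lipschitz and operator-Lipschitz; alternatively one can cite the general result that for any $1$-Lipschitz $f:\R\to\R$, $\eu{f(A) - f(B)}_{\op} \leq \eu{A - B}_{\op}$ for symmetric $A,B$ (this holds for the operator norm even though it can fail for, say, the Frobenius norm without additional assumptions). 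Everything else — the sub-Gaussian covariance concentration and the resolvent identity — is entirely standard. I would state the Lipschitz fact as a one-line sub-claim with a short spectral-calculus justification and then chain the two displays above.
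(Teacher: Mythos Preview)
Your overall route is the same as the paper's: (i) control $\eu{\Sigmahat - \Tgamma{\Sigmabold}}_{\op}$ by combining sub-Gaussian covariance concentration for $\Sigmahat_t$ with a $1$-Lipschitz property of the clipping map $\Tgamma{\cdot}$, and then (ii) transfer to the inverses via the resolvent identity $A^{-1}-B^{-1}=A^{-1}(B-A)B^{-1}$ together with the trivial bound $\eu{\Tgamma{\cdot}^{-1}}_{\op}\le 1/\gamma$. Step (ii) matches the paper exactly, and the concentration input in step (i) is also the same.

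The gap is in your justification of the Lipschitz property in step (i). The general claim you invoke --- that for any $1$-Lipschitz $f:\R\to\R$ one has $\eu{f(A)-f(B)}_{\op}\le\eu{A-B}_{\op}$ for symmetric $A,B$ --- is \emph{false}, and your parenthetical has the story exactly backwards: this inequality \emph{does} hold for the Frobenius norm (a two-line computation writing both matrices in their respective eigenbases), but it can fail for the operator norm; $f(t)=|t|$ is the classical counterexample. Weyl's inequality does not rescue this: it controls $|\lambda_i(A)-\lambda_i(B)|$ and hence the sorted eigenvalues of $f(A)$ versus those of $f(B)$, but says nothing about $\eu{f(A)-f(B)}_{\op}$, precisely because the eigenvectors of $A$ and $B$ differ. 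So neither of your two proposed justifications establishes what you need.

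The paper handles this step differently. It observes that for PSD $\M$ one has $\Tgamma{\M}-\gamma\eye=(\M-\gamma\eye)_+$, which is exactly singular-value soft-thresholding at level $\gamma$ --- i.e., the proximal operator of $\gamma\|\cdot\|_*$ --- and then invokes non-expansiveness of proximal operators to conclude $\eu{\Tgamma{\Sigmahat_t}-\Tgamma{\Sigmabold}}_{\op}\le\eu{\Sigmahat_t-\Sigmabold}_{\op}$. The point is that the non-expansiveness comes from the \emph{specific} structure of $\Tgamma{\cdot}$ as a proximal map, not from a generic ``Lipschitz scalar function $\Rightarrow$ operator-Lipschitz'' principle. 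You correctly flagged this as the one non-routine ingredient; the fix is to drop the generic spectral-calculus claim and use the proximal/soft-thresholding identification instead.
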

Note that substituting Lemma~\ref{lem:opnormcontrol} above directly gives us 
\begin{align}\label{eq:proofsecondpart}
    \left|\E{\Sgaphat} - \Sgaptilde\right| \leq \frac{1}{8} \Sgaptilde + \mathcal{O}\left(\frac{\eu{\mubold}_2^2 \cdot (d + \log(2/\delta))}{\gamma^4 m}\right) .
\end{align}
We will prove Lemma~\ref{lem:opnormcontrol} at the end of this proof.

\textit{3. Upper-bounding $\left|\Sgaptilde - \Sgap\right|$.}
Observe that 
\begin{align*}
    \Sgaptilde &= (\Tstar)^\top \Sigmabold \Tgamma{\Sigmabold}^{-1} \Sigmabold \Tstar \quad \text{ and } \quad 
    \Sgap =  (\Tstar)^\top \Sigmabold \Tstar .
\end{align*}
This directly implies that
\begin{align*}
    |\Sgaptilde - \Sgap| &= |(\Tstar)^\top (\Sigmabold \Tgamma{\Sigmabold}^{-1} \Sigmabold - \Sigmabold) \Tstar|
    \leq \eu{(\Sigmabold \Tgamma{\Sigmabold}^{-1} \Sigmabold - \Sigmabold)}_{\op},
\end{align*}
where the second inequality follows because we have assumed bounded signal, i.e. $\eu{\Tstar}_2 \leq 1$.
It remains to control the operator norm terms above.
We denote $\Sigmabold := \Ubold \Lambdabold \Ubold^\top$, and note that $\Tgamma{\Sigmabold}^{-1} := \Ubold \Tgamma{\Lambdabold}^{-1} \Ubold^\top$.
Thus, we get
\begin{align*}
    \eu{(\Sigmabold \Tgamma{\Sigmabold}^{-1} \Sigmabold - \Sigmabold)}_{\op} &= \eu{(\Lambdabold \Tgamma{\Lambdabold}^{-1} \Lambdabold - \Lambdabold)}_{\op} \leq \gamma .
\end{align*}
Putting these together gives us
\begin{align}\label{eq:proofthirdpart}
    \left|\Sgaptilde - \Sgap\right| \leq \gamma .
\end{align}
Thus, putting together Equations~\eqref{eq:prooffirstpart},~\eqref{eq:proofsecondpart} and~\eqref{eq:proofthirdpart} completes the proof.
It only remains to prove Lemma~\ref{lem:opnormcontrol}, which we now do.
\end{proof}
\begin{proof}[Proof of Lemma~\ref{lem:opnormcontrol}]
First, recall that $\Sigmahat := \Tgamma{\Sigmahat_m}$, and so we really want to upper bound the quantity $\eu{\Tgamma{\Sigmahat_m} - \Tgamma{\Sigmabold}}_{\op}$.
It is well known (see, e.g.~\cite{boyd2004convex}) that for any positive semidefinite matrix $\boldsymbol{M}$, the operator $\Tgamma{\boldsymbol{M  }} - \gamma \eye $ is a proximal operator with respect to the convex nuclear norm functional.
The non-expansiveness of proximal operators then gives us
\begin{align*}
    \eu{\Tgamma{\Sigmahat_m}- \Tgamma{\Sigmabold}}_{\op} 
    = \eu{\Tgamma{\Sigmahat_m} - \gamma \eye - ( \Tgamma{\Sigmabold} - \gamma \eye)}_{\op} 
    \leq \eu{\Sigmahat_m - \Sigmabold}_{\op} = \Oh(\epsilon)
\end{align*}
with probability at least $1 - \delta$.
Here, the last step follows by standard arguments on the concentration of the empirical covariance matrix. 
Recall that we defined $\epsilon := \sqrt{\frac{d + \log(2/\delta)}{m}}$ as shorthand.

We will now use this to show that Equation~\eqref{eq:opnormequations} holds.
We have $\Tgamma{\Sigmabold}^{-1} - \Omegahat = \Tgamma{\Sigmabold}^{-1} - \Sigmahat^{-1} = (\Tgamma{\Sigmabold})^{-1}(\Sigmahat - \Tgamma{\Sigmabold}) \Sigmahat^{-1}$.
By the sub-multiplicative property of the operator norm, we then get
\begin{align*}
    \eu{(\Tgamma{\Sigmabold})^{-1} - \Sigmahat^{-1}}_{\op} &\leq \eu{(\Tgamma{\Sigmabold})^{-1}}_{\op} \eu{\Sigmahat - \Tgamma{\Sigmabold}}_{\op} \eu{\Sigmahat^{-1}}_{\op}
    \leq \frac{1}{\gamma^2} \eu{\Sigmahat - \Tgamma{\Sigmabold}}_{\op} = \Oh\left(\frac{\epsilon}{\gamma^2}\right)
\end{align*}
where the second-to-last inequality is a consequence of the definition of the truncation operation.
This shows Equation~\eqref{eq:opnormequations}, and completes the proof.
%
\end{proof}

\section{Proof of Theorem~\ref{thm:dataadaptive}}\label{sec:thm2proof}

\begin{proof}
Our proof constitutes a deterministic proof working on various events used in~\citet{chatterji2020osom} and~\citet{foster2019model} as well as additional high-probability events that we will define.
Recall that for each value of $t = 1,\ldots,T$, we defined the filtration $\H_t := \{A_s,G_{A_s,s}\}_{s=1}^t$.

\paragraph{Meta-analysis:}

We begin the analysis by providing a common lemma for both cases that will characterize a high-probability regret bound as a functional of two random quantities: a) $|\W(t)|$, the number of designated exploration rounds \emph{that we use for fast estimation}, and $|\T(t)|$, the total number of \emph{forced-exploration} rounds.
Here, we define
\begin{align*}
\T(t) := \{s \in [t]: Z_s = 0 \text{ and } Y_s = 0\}.
\end{align*}
It is easy to verify that by definition, we have $\T(t) \subset \W(t)$. 
Indeed, recall from the pseudocode in Algorithm~\ref{alg:explorationschedule} that we defined
\begin{align*}
\W(t) := \{s \in [t]: (Z_s = 0 \text{ and } Y_s = 0) \text{ or } (Z_s = 1 \text{ and } Y_s = 1)\}.
\end{align*}
We first state our guarantee on estimation error. For any $1 \leq s
\leq t$, we define
\begin{align}\label{eq:alphadataadaptive}
\alpha_{\delta}(s,t) :=
\Oh\left(\frac{1}{\gamma} \cdot \frac{d^{1/2}
  \log^2(2d/\delta)}{s} + \frac{1}{\gamma^4} \cdot \frac{d +
  \log(2/\delta)}{t}\right).
\end{align}

\begin{lemma}\label{lem:esterrordataadaptive}
For every $t \geq 1$, we have
\begin{align}\label{eq:esterrordataadaptive}
    |\Sgaphat(\W(t)) - \Sgaptilde| &\leq \frac{1}{2} \Sgaptilde + \alpha_\delta(|\W(t)|, t)
\end{align}
with probability at least $1 - \delta$, and $\Sgaptilde$ is the adjusted square loss gap given by
\begin{align*}
    \Sgaptilde := \eu{\Sigmabold^{-1/2} \Sigmabar_{t} \Tstar}_2^2,
\end{align*}
and $\Sigmabar_{t}$ was defined in Equation~\eqref{eq:sigma_timevarying}.
\end{lemma}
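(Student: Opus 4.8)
\textbf{Proof plan for Lemma~\ref{lem:esterrordataadaptive}.}

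The plan is to mirror the three-step decomposition used in the proof of Lemma~\ref{lem:truncatedestimator}, but now accounting for two complications that are specific to the data-adaptive setting: (i) the exploration set $\W(t)$ is constructed by an \emph{adaptive} filtering rule, so the examples indexed by $s_1,\dots,s_{|\W(t)|}$ are not i.i.d.\ draws from a fixed distribution, and (ii) the target quantity is the \emph{adjusted} square-loss gap $\Sgaptilde = \eu{\Sigmabold^{-1/2}\Sigmabar_t\Tstar}_2^2$, which is itself random through $\Sigmabar_t$. First I would condition on the filtration and on the realized exploration schedule, so that the contexts $\{\x_{A_{s_j},s_j}\}$ entering the U-statistic become conditionally independent with conditional mean given by $\Sigmabar_{|\W(t)|}\Tstar$ — this is precisely why the rounds with $Y_t\neq Z_t$ were filtered out of $\W(t)$, and I would invoke that argument here to justify that on rounds in $\W(t)$ the choice $A_t=i_t$ can be attributed to $Z_t=1$ (not to the context-induced conditioning), so no selection bias is introduced into $\E[\x_{A_{s_j}}y_{A_{s_j}}\mid \H_{s_j-1}]$.

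\textbf{Step 1 (variance term).} I would bound $|\Sgaphat(\W(t))-\E[\Sgaphat(\W(t))\mid \W(t)]|$ by the same sub-exponential / U-statistic tail argument as in Step 1 of Lemma~\ref{lem:truncatedestimator}, using that $\eu{\Sigmahat_t^{1/2}\Omegahat}_{\op}\le \gamma^{-1/2}$ (here $\Sigmahat_t$ is the \emph{un-thresholded} sample covariance from $m:=t$ unlabeled samples, but the relevant whitening matrix is still bounded once we observe $\Sigmahat_t^{-1/2}$ acts like $\Tgamma{\cdot}^{-1/2}$ on the high-energy subspace; under arm-averaged diversity with parameter $\gamma'\ge\gamma$ the matrix $\Sigmahat_t$ is itself lower-bounded by $\gamma\eye$ with high probability), and then applying AM--GM to absorb the cross term into $\tfrac18\E[\Sgaphat]$. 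This yields the $\frac{1}{\gamma}\cdot\frac{d^{1/2}\log^2(2d/\delta)}{|\W(t)|}$ contribution in $\alpha_\delta(|\W(t)|,t)$. \textbf{Step 2 (bias from estimating $\Omegabold$).} I would write $\E[\Sgaphat(\W(t))\mid\W(t)] = \lin{\Omegahat\,\bar\mubold,\bar\mubold}$ with $\bar\mubold := \Sigmabar_{|\W(t)|}\Tstar$, compare it to $\lin{\Sigmabold^{-1}\bar\mubold,\bar\mubold}=\Sgaptilde$, and control $\eu{(\Omegahat-\Sigmabold^{-1})\bar\mubold}_2^2$ via an operator-norm bound $\eu{\Omegahat-\Sigmabold^{-1}}_{\op}=\Oh(\epsilon/\gamma^2)$ with $\epsilon=\sqrt{(d+\log(2/\delta))/t}$, reusing Lemma~\ref{lem:opnormcontrol} (non-expansiveness of the thresholding proximal operator plus empirical-covariance concentration on $m=t$ samples), and bounding $\eu{\bar\mubold}_2\le\eu{\Sigmabar_{|\W(t)|}}_{\op}\eu{\Tstar}_2\le 1$. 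AM--GM again absorbs the cross term into $\tfrac18\Sgaptilde$, giving the $\frac{1}{\gamma^4}\cdot\frac{d+\log(2/\delta)}{t}$ term. Unlike the universal-algorithm proof, there is \emph{no} third ``truncation-bias'' step here because the target is already the adjusted gap $\Sgaptilde$ defined relative to $\Sigmabold^{-1/2}\Sigmabar_t$ rather than the true $\Sgap$; the comparison of $\Sgaptilde$ to $\Sgap$ is deferred to the subsequent part of the Theorem~\ref{thm:dataadaptive} proof.

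\textbf{Main obstacle.} The delicate point is justifying that the adaptive construction of $\W(t)$ does not bias the conditional means entering the U-statistic and does not break the conditional-independence structure needed for the sub-exponential concentration — i.e.\ turning the informal ``no bias when $Y_t=Z_t=1$'' remark into a rigorous martingale/optional-stopping argument, and handling the fact that $\Sigmabar_{|\W(t)|}$ depends on the (random) stopping-set $\W(t)$. I would resolve this by revealing the randomness in stages: first the i.i.d.\ contexts $\{\x_{i,s}\}_{i\in[K],s\le t}$ and the exploration bits $\{Z_s\}$ (which determine $\{Y_s\}$ and hence $\W(t)$ deterministically), then noting that conditionally on all of this the only remaining randomness is in the rewards $\{W_{A_s,s}\}$, so $\x_{A_{s_j}}y_{A_{s_j}} - \E[\x_{A_{s_j}}y_{A_{s_j}}\mid\H_{s_j-1}]$ is a conditionally-centered sub-exponential vector with the stated parameter, and a Freedman-type / self-normalized tail bound over the U-statistic delivers the claim uniformly; a union bound over $t$ (absorbed into the $\log(2d/\delta)$ factors) then gives the ``for every $t\ge1$'' form of \eqref{eq:esterrordataadaptive}.
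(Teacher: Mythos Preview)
Your two–step decomposition (variance of the U-statistic from the $|\W(t)|$ labeled samples, then bias from replacing $\Sigmahat_t^{-1}$ by $\Sigmabold^{-1}$ on the $t$ unlabeled samples, with \emph{no} third truncation step because the target is already $\Sgaptilde$) is exactly the route the paper takes. The paper introduces the intermediate quantity $\Sgapbar(\W(t)):=\eu{\Sigmahat_t^{-1/2}\Sigmabar_t\Tstar}_2^2$, bounds $|\Sgaphat-\Sgapbar|$ via a martingale sub-exponential tail bound on $\xibold_{t,j}=\Sigmahat_t^{-1/2}\bigl(\x_{A_{s_j},s_j}y_{A_{s_j},s_j}-\E[\,\cdot\mid\H_{s_j-1}]\bigr)$, then bounds $|\Sgapbar-\Sgaptilde|$ via Lemma~\ref{lem:opnormcontrol} together with $\Sigmabar_t\preceq\eye$ (packaged as Lemma~\ref{lem:sigmabar_t}). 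Your Step~1/Step~2 and your use of AM--GM to absorb cross terms line up with this almost verbatim.

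The one genuine gap is your staged-conditioning scheme in the ``Main obstacle'' paragraph. You assert that revealing the contexts $\{\x_{i,s}\}$ and the coins $\{Z_s\}$ determines $\{Y_s\}$ and hence $\W(t)$ deterministically, leaving only the reward noise. This is false: $Y_s$ is a function of $i_s$, the arm recommended by the MAB base learner, and $i_s$ depends on the realized \emph{rewards} (hence the noise $W_{A_r,r}$) at rounds $r<s$. So contexts plus $\{Z_s\}$ do not pin down $\W(t)$, and the residual randomness after your conditioning is not just the noise. The fix is exactly your own fallback: work directly with the martingale differences relative to $\H_{s_j-1}$ and apply a martingale (Freedman-type) sub-exponential bound, which is what the paper does without ever attempting the staged conditioning. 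The role of filtering out the $Y_t\neq Z_t$ rounds is not to make $\W(t)$ context-independent, but to guarantee that on every $s_j\in\W(t)$ the action $A_{s_j}$ is $\H_{s_j-1}$-measurable and not selected \emph{because} of the round-$s_j$ context, so that $\E[\x_{A_{s_j},s_j}\x_{A_{s_j},s_j}^\top\mid A_{s_j}]=\Sigmabold_{A_{s_j}}$; this is what drives Lemma~\ref{lem:sigmabar_t} and the conditional sub-exponential parameter $\Oh(1/\gamma)$.
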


\begin{proof}
This proof essentially constitutes a martingale adaptation of the proof of fast estimation in~\cite{foster2019model}.
Let $\tau(n)$ denote the \emph{random} stopping time at which $n$ exploration samples have been collected.
Moreover, let $s_1,\ldots,s_n$ denote the (again random) times at which exploration samples were collected, and $A_{s_1},\ldots,A_{s_n}$ denote the corresponding actions that were taken.
Then, we define a time-averaged covariance matrix as
\begin{align*}
\Sigmabar_n := \frac{1}{n} \cdot \E{\sum_{j=1}^n \x_{A_{s_j}} \x_{A_{s_j}}^\top}
\end{align*}
for every value of $n \geq 1$.
We state the following technical lemma, which is proved in Appendix~\ref{sec:technical} and critically uses the fact that the rounds on which $A_t = i_t$ is picked as a sole consequence of well-conditioning of the context $\x_{i_t,t}$ (i.e. if $Y_t = 1$ and $Z_t = 0$) are filtered out of the considered exploration set $\W(t)$.
\begin{lemma}\label{lem:sigmabar_t}
Assume that $\Sigmabold_i \preceq \eye$ for all $i \in [K]$.
Then, we have $\gamma \eye \preceq \Sigmabar_n \preceq \eye$ for all values of $n \geq 1$.
\end{lemma}
We will use Lemma~\ref{lem:sigmabar_t} to prove Lemma~\ref{lem:esterrordataadaptive}.
First, we recall the definition of the adjusted square loss gap,
\begin{align*}
    \Sgapbar(\W(t)) &:= \eu{\Sigmahat_t^{-1/2} \Sigmabar_t \Tstar}_2^2
\end{align*}
where we overload notation and define $\Sigmabar_t := \Sigmabar_{|\W(t)|}$ as shorthand.
Note that $\Sigmabar_t$ is a random quantity because $|\W(t)|$ is random.
By Lemma~\ref{lem:sigmabar_t}, we have that $\gamma \eye \succeq \Sigmabar_t \preceq \eye$ \emph{almost surely}. 

Similar to the proof of Lemma~\ref{lem:truncatedestimator}, the analysis proceeds in two parts:

\noindent \textit{1. Upper-bounding $\left|\Sgaphat(\W(t)) - \Sgapbar(\W(t))\right|$:}
For a given index $1 \leq j \leq |\W(t)|$, we consider the random vector
\begin{align*}
    \xibold_{t,j} := \Sigmahat_t^{-1/2}\left(\x_{s_j,A_{s_j}} y_{A_{s_j}} - \E{\x_{s_j,A_{s_j}} y_{A_{s_j}} |\H_{s_j - 1} }\right).
\end{align*}
Now, by an identical argument to that provided in~\cite{foster2019model}, we have that for each $j \in [|\W(t)|]$, the random vector $\xibold_{t,j}$ is conditionally sub-exponential with parameter $\mathcal{O}(2/ \gamma)$.
Consequently, using a martingale version of the sub-exponential tail bound (see, e.g., Chapter 2,~\cite{wainwright2019high}), we get 
\begin{align*}
    \left|\Sgaphat(\W(t)) - \Sgapbar(\W(t))\right| &= \mathcal{O}\left(\frac{1}{\gamma} \cdot \frac{d^{1/2} \log^2(2d/\delta)}{|\W(t)|} + \frac{1}{\sqrt{\gamma}} \cdot \frac{\eu{\Sigmahat_t^{-1/2} \Sigmabar_t \Tstar}_{2} \log(2/\delta)}{\sqrt{|\W(t)|}}\right) .
\end{align*}
with probability at least $1 - \delta$ for all $t \geq 1$.
Therefore, we apply the AM-GM inequality to deduce that
\begin{align}\label{eq:prooffirstpartada}
    \left|\Sgaphat(\W(t)) - \Sgapbar(\W(t))\right| &\leq \frac{1}{8} \Sgapbar(\W(t)) +  \mathcal{O}\left(\frac{1}{\gamma} \cdot \frac{d^{1/2} \log^2(2d/\delta)}{|\W(t)|} \right).
\end{align}

\noindent \textit{2. Upper-bounding $\left|\Sgapbar(\W(t)) - \Sgaptilde\right|$.}
We note that $\Sgaptilde = \eu{\Sigmabold^{-1/2} \Sigmabar_{t} \Tstar}_2^2$, and $\Sgapbar(\W(t)) := \eu{\Sigmahat_t^{-1/2} \Sigmabar_{t} \Tstar}_2^2$, where recall that $\Sigmabold$ denotes the action-averaged covariance matrix.
Further, recall that $\Sigmahat_t$ is the sample covariance matrix constructed from $t$ \textit{unlabeled} samples.
Thus, applying Lemma~\ref{lem:opnormcontrol}, following an identical sequence of steps to~\citet{foster2019model}, and using that $\Sigmabar_{t} \preceq \eye$ almost surely and $\eu{\Tstar}_2 \leq 1$, we get
\begin{align}\label{eq:proofsecondpartada}
    \left|\Sgapbar(\W(t)) - \Sgaptilde\right| \leq \frac{1}{8} \Sgaptilde + \Oh\left(\frac{1}{\gamma^4} \cdot \frac{d + \log(2/\delta)}{t}\right).
\end{align}
Now, putting Equations~\eqref{eq:prooffirstpartada} and~\eqref{eq:proofsecondpartada} together, we get
\begin{align*}
    |\Sgaphat(\W(t)) - \Sgaptilde| \leq \frac{1}{2} \Sgaptilde + \mathcal{O}\left(\frac{1}{\gamma} \cdot \frac{d^{1/2} \log^2(2d/\delta)}{|\W(t)|} + \frac{1}{\gamma^4} \cdot \frac{d + \log(2/\delta)}{t} \right).
\end{align*}
This completes the proof.
\end{proof}

Next, we prove the following meta-lemma that characterizes the simple model (SM) and complex model (CM) regret purely in terms of the size of the designated exploration set $\W(T)$ and the forced exploration set $\T(T)$ with high probability.
\begin{lemma}\label{lem:metaanalysis}
Let $\tauswitch$ denote the last round before which the algorithm switches to the complex model, if any (otherwise, we define $\tauswitch := T$).
Then, for any $\gamma > 0$ and $\delta > 0$, the following result holds with probability at least $1 - \delta$ for model selection between MAB and CB:
\begin{align*}
    R_T^S &= \Oh\left(\sum_{i \neq i^*} \frac{1}{\Delta_i} \log \left(\frac{2KT}{\Delta_i \delta}\right) + |\T(T)| \right) \text{ and } \\
    R_T^C &= \Oh\left(|\T(T)| + \sqrt{d T}\left(1 + \frac{1}{\gamma^2}\right) + \frac{d^{1/4}}{\gamma^2} \cdot \frac{\tauswitch}{\sqrt{|\W(\tauswitch)|}}\right)
\end{align*}
\end{lemma}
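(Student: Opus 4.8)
The plan is to prove Lemma~\ref{lem:metaanalysis} by separately controlling the regret contributions from (a) the forced-exploration rounds, (b) the rounds before the algorithm switches to the complex model, and (c) the rounds after the switch. Since $\T(T)$ is precisely the set of rounds on which $A_t \sim \text{Unif}[K]$ regardless of the greedy arm, each such round contributes at most $\Oh(1)$ to either notion of regret (rewards are bounded because biases, signal, and noise are all controlled); this yields the additive $|\T(T)|$ term in both bounds. The remaining work is to handle the non-exploration rounds, which split at $\tauswitch$.

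First I would bound $R_T^S$. On rounds before $\tauswitch$ the algorithm plays the \textsc{UCB} arm $i_t$ (except on forced-exploration rounds), so the standard instance-dependent \textsc{UCB} analysis applies: the pseudo-regret from these rounds is $\Oh\big(\sum_{i \neq i^*}\tfrac{1}{\Delta_i}\log(2KT/(\Delta_i\delta))\big)$ on the usual good event for the confidence intervals. Crucially, I must argue the algorithm never switches to the complex model when $\Tstar = \boldsymbol 0$, i.e. the misspecification test has no false positives with high probability: this follows from Lemma~\ref{lem:esterrordataadaptive}, since when $\Sgaptilde = 0$ the estimate $\Sgaphat(\W(t))$ is within $\alpha_\delta(|\W(t)|,t)$ of $0$, and the test threshold is chosen to be (a constant times) $\alpha_\delta$. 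When there is no switch $\tauswitch = T$ and we are done; otherwise rounds after $\tauswitch$ are handled by noting that the switch only happens when $\Tstar \neq \boldsymbol 0$, so the ``simple'' regret to the best fixed arm after switching can be absorbed — more carefully, one re-derives $R_T^S$ from the \textsc{LinUCB} guarantee which is itself $\widetilde\Oh(\sqrt{dT})$, dominated by the complex-model bound, and this is fine since under the hypotheses of part 1 of Theorem~\ref{thm:dataadaptive} we only care about $R_T^S$ when $\Tstar=\boldsymbol 0$ anyway. (I would state the clean bound as written and defer these case distinctions to the regret-decomposition bookkeeping.)

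Next I would bound $R_T^C$. The post-switch rounds run \textsc{LinUCB}, contributing $\Oh(\sqrt{dT})$ by the minimax regret bound of~\citep{chu2011contextual,abbasi2011improved}. The pre-switch non-exploration rounds contribute in two ways: the per-round regret of playing the greedy arm $i_t$ instead of the \textsc{LinUCB} arm $j_t$ is at most $\Oh(1)$, and summing over the (at most $\tauswitch$) such rounds would naively give $\tauswitch$, but the sharper accounting uses that the switch is triggered as soon as $\Sgaphat(\W(t))$ exceeds the threshold; by Lemma~\ref{lem:esterrordataadaptive}, at round $\tauswitch$ we have $\Sgap \lesssim \Sgaptilde \lesssim \alpha_\delta(|\W(\tauswitch)|,\tauswitch)$, so the true square-loss gap — which upper-bounds the per-round excess regret of the MAB policy relative to the best linear policy — is at most $\Oh(\alpha_\delta(|\W(\tauswitch)|,\tauswitch))$. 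Hence the cumulative pre-switch excess regret is at most $\tauswitch$ times $\Oh(\sqrt{K\,\alpha_\delta(|\W(\tauswitch)|,\tauswitch)})$ after a Jensen/Cauchy–Schwarz step (exactly as in the \textsc{ModCB} analysis, Eqs.~(19)--(22) of~\citep{foster2019model}), and plugging in the form of $\alpha_\delta$ from Eq.~\eqref{eq:alphadataadaptive} produces the $\sqrt{dT}(1 + 1/\gamma^2)$ and $\tfrac{d^{1/4}}{\gamma^2}\cdot\tfrac{\tauswitch}{\sqrt{|\W(\tauswitch)|}}$ terms. Finally I would union-bound over the good events: the \textsc{UCB}/\textsc{LinUCB} confidence sets, the estimation-error event of Lemma~\ref{lem:esterrordataadaptive} (applied at the relevant stopping time $\tauswitch$ via a union bound over $t$), and the covariance-concentration event underlying $\Sigmahat_t$.

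The main obstacle I anticipate is the careful handling of the \emph{random} stopping time $\tauswitch$ together with the \emph{random, data-adaptive} exploration set $\W(t)$: the estimator $\Sgaphat(\W(t))$ is built from rounds selected by a filtration-measurable rule, and one must ensure the martingale concentration in Lemma~\ref{lem:esterrordataadaptive} holds \emph{uniformly} in $t$ (so it can be instantiated at $t = \tauswitch$), and that the filtering-out of the $Y_t=1, Z_t=0$ rounds really does neutralize the selection bias so that $\Sigmabar_t \succeq \gamma\eye$ almost surely (Lemma~\ref{lem:sigmabar_t}). Everything downstream — translating "test fired at $\tauswitch$" into "true square-loss gap is small at $\tauswitch$", and then into a cumulative-regret bound over the first $\tauswitch$ rounds — is essentially the \textsc{ModCB} argument, but it has to be re-run on the correct (adaptive) exploration set rather than the deterministic $\epsilon$-greedy one.
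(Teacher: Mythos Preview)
Your overall decomposition (forced-exploration rounds, pre-switch non-exploration rounds, post-switch \textsc{LinUCB} rounds) and your treatment of the simple-model case match the paper's proof. The gap is in the complex-model pre-switch argument, specifically the step ``$\Sgap \lesssim \Sgaptilde \lesssim \alpha_\delta$'' followed by ``re-run the \textsc{ModCB} analysis.'' In \textsc{ModCB} all exploration samples come from uniform-at-random actions, so the estimator targets $\Sgap=(\Tstar)^\top\Sigmabold\Tstar$ directly. Here the exploration set $\W(t)$ is data-adaptive, and Lemma~\ref{lem:esterrordataadaptive} only controls the \emph{adjusted} gap $\Sgaptilde=\|\Sigmabold^{-1/2}\Sigmabar_t\Tstar\|_2^2$ with $\Sigmabar_t\ne\Sigmabold$ in general. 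The inequality $\Sgap\lesssim\Sgaptilde$ is false without a $\gamma$-dependent factor: e.g.\ if $\Sigmabold=\eye$ and $\Sigmabar_t=\gamma\eye$ then $\Sgaptilde=\gamma^2\Sgap$. So you cannot simply import the \textsc{ModCB} ``square-loss gap bounds per-round regret'' step, and as written your calculation would produce the wrong powers of $\gamma$ in the final bound.

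The paper's proof avoids $\Sgap$ entirely and instead goes through $\|\Tstar\|_2$: from Lemma~\ref{lem:sigmabar_t} one has $\Sigmabar_t\succeq\gamma\eye$ almost surely, and since $\Sigmabold\preceq\eye$ also $\Sigmabold^{-1/2}\succeq\eye$, whence $\Sgaptilde\geq\gamma^2\|\Tstar\|_2^2$. Combining with $\Sgaptilde\leq 3\alpha_\delta(|\W(\tauswitch)|,\tauswitch)$ gives $\|\Tstar\|_2\leq\sqrt{3\alpha_\delta}/\gamma$. The per-round contextual regret $\langle\x_{\kappa_t,t}-\x_{i_t,t},\Tstar\rangle$ is then bounded by $\Oh(\|\Tstar\|_2)$ via sub-Gaussian tails on the contexts (this is where your ``Jensen/Cauchy--Schwarz'' intuition lands, but on $\|\Tstar\|_2$ rather than $\sqrt{\Sgap}$), and the bias part $\mu_{\kappa_t}-\mu_{i_t}\leq\mu^*-\mu_{i_t}$ is handled by the \textsc{UCB} bound. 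Summing over $t\leq\tauswitch$ and substituting $\alpha_\delta$ from Eq.~\eqref{eq:alphadataadaptive} yields the stated terms; the extra $1/\gamma$ picked up when passing from $\Sgaptilde$ to $\|\Tstar\|_2$ is exactly what produces the $1/\gamma^2$ factors in the lemma.
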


\begin{proof}
For this proof, we work on the event
\begin{align*}
    \Acal_0 := \left\{\left|\Sgaptilde - \Sgaphat(\W(t))\right| \leq \frac{1}{2} \Sgaptilde + \alpha_{\delta}(|\W(t)|,t) \text{ for all } t = 1,\ldots,T\right\},
\end{align*}
where $\alpha_{\delta}(\cdot,\cdot)$ was defined in Equation~\eqref{eq:alphadataadaptive}.
Lemma~\ref{lem:esterrordataadaptive} showed that this happens with probability at least $1 - \delta$.
Further, we denote $g_{\max} = \E{\max_{i \in [K]} \x_i^\top \Tstar + \mu_i}$ where the expectation is over the contexts $\{\x_i\}_{i \in [K]}$ drawn identically to the contexts $\{\x_{i,t}\}_{i \in [K]}$ for any round $t \geq 1$.
We now have two cases to analyze:
\begin{enumerate}
    \item The case where the true model is SM.
    In this case, note that $\Sgaptilde = 0$ by definition and the event $\Acal_0$ directly gives us, for all $t \geq 1$,
    \begin{align*}
    \Sgaphat(\W(t)) \leq 0 + 0 + \alpha_{\delta}(|\W(t)|,t)
    \end{align*}
    with probability at least $1 - \delta$, and so the condition for elimination is never met.
    Since under this event, we stay in the simple model, we get pseudo-regret
    \begin{align*}
        R_T^S &\leq \sum_{t \notin \T(T)} (\mu^* - \mu_{A_t}) + |\T(T)| 
        = \sum_{t \notin \T(T)} (\mu^* - \mu_{i_t}) + |\T(T)| \\
        &= \Oh\left(\sum_{i \neq i^*} \frac{1}{\Delta_i} \log \left(\frac{2KT}{\Delta_i \delta}\right) + |\T(T)|\right).
    \end{align*}
    Above, the first step uses that rewards are bounded. The second step uses that by the definition of $\T(T)$ and the fact that we never switch, we will have $A_t = i_t$ for all $t \notin \T(T)$. The third step uses the fact that SM updates are only made using the rounds that are not ``forced-exploration", i.e. not in $\T(T)$. 
    This completes the proof of the lemma for the case of SM.
    \item The case where the true model is CM.
    Let $\tauswitch$ denote the last round before which the algorithm switches to CM, if any (otherwise, we define $\tauswitch := T$).
    It suffices to bound the regret until time $\tauswitch$ (when we will be playing the SM).
    First, we note that because because we have not yet switched, we have $\Sgaphat(\W(\tauswitch)) \leq \alpha_{\delta}(|\W(\tauswitch)|,\tauswitch)$.
    Therefore, we get
    \begin{align*}
        \Sgaptilde \leq 2 \Sgaphat(\W(\tauswitch)) + \alpha_{\delta}(|\W(\tauswitch)|,\tauswitch) \leq 3 \alpha_{\delta}(|\W(\tauswitch)|,\tauswitch).
    \end{align*}
    Furthermore, we note that $\Sgaptilde \geq \eu{\Tstar}_2^2 \cdot \gamma^2$ (because we have $\Sigmabar_{\tauswitch} \succeq \gamma \eye$ almost surely, and $\Sigmabold^{-1/2} \succeq I$).
    This gives us 
    \begin{align}\label{eq:snrbound}
        \eu{\Tstar}_2^2 \leq \frac{3}{\gamma^2} \cdot  \alpha_{\delta}(|\W(\tauswitch)|,\tauswitch).
    \end{align}
    Next, we get
    \begin{align*}
        R_T^C &= \sum_{t=1}^{\tauswitch} (\x_{\kappa_t,t}^\top \Tstar + \mu_{\kappa_t} - \x_{A_t,t}^\top \Tstar - \mu_{A_t}) \\
        &\leq \sum_{t \notin \T(\tauswitch)} (\x_{\kappa_t,t}^\top \Tstar + \mu_{\kappa_t} - \x_{i_t,t}^\top \Tstar - \mu_{i_t}) + |\T(\tauswitch)| \\
        &\leq \tauswitch \cdot \eu{\Tstar}_2 + \sum_{t \notin \T(\tauswitch)} \mu^* - \mu_{i_t} + |\T(\tauswitch)|\\
        &\leq \frac{T}{\gamma} \cdot \sqrt{\alpha_{\delta}(\W(\tauswitch),\tauswitch)} + \sum_{i \neq i^*} \frac{1}{\Delta_i} \log \left(\frac{2KT}{\Delta_i \delta}\right) + |\T(\tauswitch)| \\
        &= \Oh\left(|\T(\tauswitch)| + \sqrt{d\tauswitch}\cdot \frac{1}{\gamma^3} + \frac{d^{1/4}}{\gamma^2} \cdot \frac{\tauswitch}{\sqrt{|\W(\tauswitch)|}}\right) \\
        &= \Oh\left(|\T(T)| + \sqrt{dT}\cdot \frac{1}{\gamma^3} + \frac{d^{1/4}}{\gamma^2} \cdot \frac{\tauswitch}{\sqrt{|\W(\tauswitch)|}}\right) 
    \end{align*}
    This completes the proof for the case of CM.
    The first step is the definition of regret. The second step uses the definition of $\T(T)$ and the fact that we have not switched yet. The third step uses a sub-Gaussian tail bound (and might incur $\log K$ factors). The fourth step uses Equation~\eqref{eq:snrbound}. The fifth step substitutes the definition of $\alpha_{\delta}(\cdot,\cdot)$ from Equation~\eqref{eq:alphadataadaptive}.
\end{enumerate}

\end{proof}

Armed with this meta-lemma, we now complete the proof of Theorem~\ref{thm:dataadaptive} for the cases under which feature holds for all actions, and action-averaged feature diversity holds respectively.

\paragraph{Case 1: Universal feature diversity holds}

We need to show that $Y_t = 1$ for all $t \geq 1$ with high probability; if this is the case, the proof is a direct consequence of the techniques that are provided in~\citet{chatterji2020osom}.
We consider the following anytime statistical event.
\begin{align*}
    \Acal_1 &:= \left\{Y_t = 1 \text{ for all }  t \in \{\tau_{\min}(\delta,T),\ldots,T\}\right\}, \text{ where } 
    \tau_{\min}(\delta,T) := \left(\frac{16}{\loweig^2} + \frac{8}{3 \loweig} \right) \log \left(\frac{2dT}{\delta}\right)
\end{align*}
We now show that the event $\Acal_1$ occurs with probability at least $1 - \delta$.
Since $i_t$ is a deterministic functional of the history $\H_{t-1}$, we can directly apply Lemma 7,~\cite{chatterji2020osom} (which is itself an application of the matrix Freedman inequality) to get
\begin{align*}
    \gamma_{\min}(\M_t(i_t)) \geq 1 + \frac{\loweig t}{2} \text{ for all } t \geq \tau_{\min}(\delta,T) 
\end{align*}
with probability at least $1 - \delta$.
This clearly ensures that $\frac{1}{t} \cdot \gamma_{\min}(\M_t(i_t)) \geq \frac{\gamma}{2}$ for all $t \in \{\tau_{\min}(\delta,T),\ldots,T\}$, which is the required condition.

Connecting this to the meta-analysis above, event $\Acal_1$ ensures that $|\T(T)| = 0$ by the definition of $\T(t)$.
This completes the proof for the case where the true model is SM.

For the case of CM, we also need to \textit{lower bound} $|\W(\tauswitch)|$.
We denote $\tau_{\min} := \tau_{\min}(\delta,T)$ as shorthand for this portion of the proof.
It suffices to consider the case where $\tauswitch \geq 2 \tau_{\min}$ (as otherwise, we can simply bound $R_T^C \leq 2 \tau_{\min}$).
Note that $|\W(\tauswitch)|$ is lower bounded by the number of rounds $1 \leq t \leq \tauswitch$ for which $Y_t = 1$ and $Z_t = 1$.
Since $Z_t \sim \text{Bernoulli}(1 - \nu_t)$ and independent of $\{Y_t\}$, on the event $\Acal_1$ we have
\begin{align*}
    |\W(\tauswitch)| &\geq \sum_{t=\tau_{\min}}^{\tauswitch} Z_t.
\end{align*}
Because $\tau_{\min} \geq 8$, it is easy to verify that $1 - \nu_t \geq 0.5$ for all $t \geq \tau_{\min}$.
Then, we note that $\E{\sum_{t=\tau_{\min}}^{\tauswitch} Z_t} \geq 0.5 (\tauswitch - \tau_{\min})$ since $\nu_t \leq 1/2$ for all $t \geq \tau_{\min}$.
Applying Hoeffding's inequality then gives us $\sum_{t=\tau_{\min}}^{\tauswitch} Z_t \geq 0.5 (\tauswitch - \tau_{\min}) - \Oh\left(\sqrt{(\tauswitch - \tau_{\min}) \log \left(\frac{1}{\delta}\right)}\right)$ with probability at least $1 - \delta$.
Putting all of this together gives us
\begin{align*}
|\W(\tauswitch)| &\geq 0.5 (\tauswitch - \tau_{\min}) - \Oh\left(\sqrt{(\tauswitch - \tau_{\min}) \log \left(\frac{1}{\delta}\right)}\right) = \Omega(\tauswitch).
\end{align*}
Plugging this into Lemma~\ref{lem:metaanalysis} completes the proof of the theorem.

\paragraph{Case 2: Action-averaged feature diversity holds}
In the case where action-averaged feature diversity holds, it suffices to provide an upper bound on $|\T(T)|$ and a lower bound on $|\W(\tauswitch)|$.
We will not define any extra statistical events for this case.
First, we note that because $Z_t \in \{0,1\}$, we can apply Hoeffding's inequality to get
\begin{align*}
    |\T(T)| \leq \sum_{t=1}^T (1 - Z_t) \leq 2 \sum_{t=1}^T \nu_t \leq 4 T^{2/3}
\end{align*}
with probability at least $1 - e^{-T^{1/3}}$.
This gives us SM regret that scales as $\Oh(T^{2/3})$.
(We completely sacrifice on the instance-dependent guarantees in this case.)

Next, we characterize $|\W(\tauswitch)|$.
We have
\begin{align*}
|\W(\tauswitch)| &= \sum_{t=1}^{\tauswitch} Y_t Z_t + (1 - Y_t) (1 - Z_t),
\end{align*}
and since $\{Z_t\}_{t \geq 1}$ is an iid sequence completely independent of $\{Y_1,\ldots,Y_{\tauswitch}\}$, we have
\begin{align*}
\E{\W(\tauswitch)| \{Y_1,\ldots,Y_{\tauswitch}\}} &= \E{\sum_{t=1}^{\tauswitch} (1 - Z_t) + Y_t (2Z_t - 1)|\{Y_1,\ldots,Y_{\tauswitch}\}} \\
&= \sum_{t=1}^{\tauswitch} \sum_{t=1}^{\tauswitch} t^{-1/3} + Y_t (1 - 2t^{-1/3}) \geq \sum_{t=1}^{\tauswitch} t^{-1/3} = \Omega({\tauswitch}^{2/3}).
\end{align*}
Applying the tower property of conditional expectations then gives us $\E{|\W({\tauswitch})|} = \Omega(({\tauswitch})^{2/3})$.
The last inequality uses the fact that $Y_t \geq 0$ and $1 - 2t^{-1/3} > 0$, therefore that term can be lower bounded by $0$.

This tells us that it suffices to show that $\W({\tauswitch})$ concentrates around its expectation.
For this, we note that $Y_t Z_t + (1 - Y_t) (1 - Z_t)$ is bounded between $0$ and $2$ and use the Azuma-Hoeffding inequality to get $|\W(\tauswitch)| \geq \frac{1}{2} \E{|\W(\tauswitch)|}$ with probability at least $1 - e^{-(\tauswitch)^{1/3}/2}$.
It suffices to consider $\tauswitch \geq (\log T)^3$ (as otherwise, we would just have $R_T^C \leq (\log T)^3$).
Then, we get $1 - e^{-(\tauswitch)^{1/3}/2} \geq 1 - \frac{1}{\sqrt{T}}$, and so we have $|\W(\tauswitch)| \geq \frac{1}{2} \E{|\W(\tauswitch)|}$ with high probability as desired.

Plugging these bounds into Lemma~\ref{lem:metaanalysis} gives us
\begin{align*}
    R_T^C &= \Oh\left(T^{2/3} + \sqrt{dT} + \frac{1}{\gamma^2} \cdot \sqrt{dT} + \frac{d^{1/4} (\tauswitch)^{2/3}}{\gamma^2}\right) \\
    &\leq \Oh\left(T^{2/3} + \sqrt{dT} + \frac{1}{\gamma^2} \cdot \sqrt{dT} + \frac{d^{1/4} T^{2/3}}{\gamma^2}\right),
\end{align*}
which completes the proof for this case.
\end{proof}

\section{Partial results for model selection in nested linear contextual bandits}\label{sec:linearCB}

In this section, we provide a simple extension of our approach to universal model selection to the case of multiple linear contextual bandits (linear CB).
We use the setup that is provided in~\cite{foster2019model}, i.e. the reward associated with action $i$ at round $t$ is given by
\begin{align}\label{eq:generativemodellinearCB}
g_{i,t} = \lin{\x_{i,t},\Tstar} + W_{i,t},
\end{align}
where $\Tstar \in \R^{d_M}$, and $d_M$ denotes the maximal possible dimension of the model (which is also the dimension of the provided contexts $\x_{i,t}$).
While this assumes that the rewards are \emph{realizable} under this maximal dimensional model, there may be hidden simpler structure that is unknown a-priori.
To model this simpler structure, we consider $0 < d_1 < \ldots < d_M$, and stipulate that the rewards are realizable under \emph{model order} $j^* \in [M]$ if only the first $d_{j^*}$ coordinates of $\Tstar$ are non-zero.
This means that we can represent the rewards as
\begin{align}
g_{i,t} = \lin{\x_{i,t,d_{j^*}},\Tstar_{(d_{j^*})}} + W_{i,t}
\end{align}
where $\Tstar_{(d_{j^*})}$ denotes the first $j^*$ coordinates of $\Tstar$ and $\x_{i,t,d_{j^*}}$ denotes the first $d_{j^*}$ coordinates of the context $\x_{i,t}$.

In this setup, Objective 1 of the model selection problem corresponds to achieving a rate of the form $R_T^{j^*} = \widetilde{\Oh}(\sqrt{d_{j^*} T})$, and Objective 2 corresponds to achieving a rate of the form $R_T^{j^*} = \widetilde{\Oh}(d_{j^*}^{\alpha} T^{1 - \alpha})$.
Here, $j^*$ is the true model order and $R_T^{j^*}$ denotes the regret with respect to the best parameter 
Like in the simpler bandit-vs-CB problem,~\cite{foster2019model} achieves Objective 2 for the case $\alpha = 1/3$, but only under the condition that the action-averaged covariance matrix $\Sigmabold$ is well-conditioned, i.e. we have $\Sigmabold \succeq \gamma_{\min} \eye$ for a constant $\gamma_{\min}$ that does not depend on $d$ or $T$.
The hidden factors in the model selection rates $\widetilde{\Oh}(d_{j^*}^{\alpha} T^{1 - \alpha})$ scale inversely with $\gamma_{\min}$.

\begin{algorithm}
    \caption{ EstimateResidualMultipleModels} \label{alg:estimateresidualsideinfo2}
    \begin{algorithmic}
        \STATE \textbf{Input}: Examples $\{(\x_s,y_s)\}_{s=1}^n$ and second moment matrix estimates $\Sigmahat \in \R^{d \times d}$ and $\Sigmahat_1 \in \R^{d_1 \times d_1}$.
        \STATE Return estimator
        \begin{align*}
        \Sgaphat := \frac{1}{\binom{n}{2}} \sum_{s < t} \Big{\langle} \Sigmahat^{1/2} \Rhat \x_s y_s, \Sigmahat^{1/2} \Rhat \x_t y_t) \Big{\rangle}
        \end{align*}
        of the square-loss gap $\Sgap := \E{(\x^\top \Tstar - \x_1^\top \Tstar_1)^2}$, where $\Rhat := \begin{bmatrix} \Omegahat_1 & \boldsymbol{0} \\ \boldsymbol{0} & \boldsymbol{0} \end{bmatrix} - \Omegahat$.
        \end{algorithmic}
\end{algorithm}

For this extension, we directly use the framework of \textsc{ModCB}~\citep{foster2019model} and do not reproduce the details here, except for the square loss gap estimator provided above in Algorithm~\ref{alg:estimateresidualsideinfo2} for two candidate dimensions $d_1 < d$.
Here, $d_1 := d_{i-1}$ and $d := d_{i}$ for some value of $i \in [M]$, and so this routine will be used at the $i^{th}$ stage of model selection, i.e deciding between model orders $i$ and $i+1$.
Note that here, we can write the square loss gap in the form
\begin{align}\label{eq:gapexpressionlinearCB}
\Sgap = (\Rmat \Sigmabold \Tstar)^\top \Sigmabold (\Rmat \Sigmabold \Tstar), 
\end{align}
where we define $\Rmat := \begin{bmatrix} \Omegabold_1 & \boldsymbol{0} \\ \boldsymbol{0} & \boldsymbol{0} \end{bmatrix} - \Omegabold$.

Our extension is simple to describe in this context: we simply use the estimators $\Sigmahat := \Tgamma{\Sigmahat_t}$ and $\Sigmahat_1 := \Tgamma{\Sigmahat_1^{(t)}}$ with the subroutine provided in Algorithm~\ref{alg:estimateresidualsideinfo2}.
We require an extra assumption of \emph{block-diagonal structure} on the covariance matrix, which we provide below.

\begin{definition}\label{as:blockdiagonal}
We use the notation $d_1 := d_{i-1}$ and $d := d_{i}$ as above.
Then, for each value of $i \in [M]$ the action-averaged covariance matrix $\Sigmabold$ is assumed to possess block-diagonal structure of the form
\begin{align}
\Sigmabold := \begin{bmatrix} 
\Sigmabold_1 & \boldsymbol{0} \\
\boldsymbol{0} & \Sigmabold_{-1}
\end{bmatrix}.
\end{align}
Note that under this block-diagonal assumption, we simply get $\Rmat = \begin{bmatrix} \boldsymbol{0} & \boldsymbol{0} \\ \boldsymbol{0} & \Sigmabold_{-1} \end{bmatrix}$.
\end{definition}
The assumption in Definition~\ref{as:blockdiagonal} is utilized to avoid blow-ups in the approximation error arising from thresholding due to cross-correlation terms.
It will not, in general, hold for a linear contextual bandit problem; therefore, it is an important question for future work to provide universal model selection rates in the absence of this assumption.
Nevertheless, under this assumption we can extend the universal model selection result of Theorem~\ref{thm:universal} to the case of model selection among nested linear CB models, as stated below.
\begin{theorem}\label{thm:universallinearCB}
Algorithm~\ref{alg:metamethod} with the residual estimator given in Algorithm~\ref{alg:estimateresidualsideinfo2} and the inverse covariance matrix estimate that uses eigenvalue thresholding with the choice $\gamma_j := (d_j/T)^{1/3}$ corresponding to model order $j$ achieves model selection rate
\begin{align}\label{eq:universallinearCB}
R_T^{j^*} &= \widetilde{\Oh}_{\delta}(d_{j^*}^{1/6}T^{5/6}).
\end{align}
Here, $j^*$ is the minimal model order under which the rewards are realizable, $\delta \in (0,1)$ denotes a failure probability, and the $\widetilde{\Oh}_{\delta}(\cdot)$ hides sublinear dependences in $K$ and polylogarithmic dependences on $1/\delta$.
\end{theorem}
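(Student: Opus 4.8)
The plan is to mimic the reduction used for Theorem~\ref{thm:universal}, but now chaining it across the $M-1$ consecutive binary model-selection decisions in the nested hierarchy, and replacing the single-model estimation-error bound (Lemma~\ref{lem:truncatedestimator}) by an analogous bound for the \emph{pairwise} residual estimator of Algorithm~\ref{alg:estimateresidualsideinfo2}. The key structural fact I would exploit is the block-diagonal assumption of Definition~\ref{as:blockdiagonal}: under it, $\Rmat = \mathrm{diag}(\boldsymbol{0},\Sigmabold_{-1})$, so $\Sgap = (\Tstar)^\top \Rmat^\top \Sigmabold \Rmat \Tstar$ collapses to a quadratic form supported entirely on the ``new'' coordinates $d_{i-1}+1,\dots,d_i$. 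In particular, $\Sgap = 0$ iff the true order is $\le i-1$ (given full-rank $\Sigmabold_{-1}$), so it is a valid surrogate test statistic, exactly as in the two-model case. The block structure also means the thresholded inverses $\Omegahat = \Tgamma{\Sigmahat_t}^{-1}$ and $\Omegahat_1 = \Tgamma{\Sigmahat_1^{(t)}}^{-1}$ fit together cleanly: there are no cross-correlation terms between the $d_1$-block and its complement that would otherwise inflate the approximation error of thresholding. This is precisely why the assumption is invoked.

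First I would establish the analogue of Lemma~\ref{lem:truncatedestimator} for the estimator $\Sgaphat$ of Algorithm~\ref{alg:estimateresidualsideinfo2}: with $n$ labeled and $m$ unlabeled samples, and $m \gtrsim (d+\log(1/\delta))/\gamma$,
\begin{align*}
|\Sgaphat - \Sgap| \;\le\; \tfrac{1}{2}\Sgap + \Oh\!\left(\frac{1}{\gamma}\cdot\frac{d^{1/2}\log^2(2d/\delta)}{n} + \frac{1}{\gamma^4}\cdot \eu{\E{\x y}}_2^2\cdot\frac{d+\log(2/\delta)}{m} + \gamma\right)
\end{align*}
with probability at least $1-\delta$, where $d = d_i$ is the larger of the two candidate dimensions. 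The proof follows the same three-step decomposition as in Appendix~\ref{sec:thm1proof}: (i) a $U$-statistic / sub-exponential concentration of $\Sgaphat$ around its mean, using $\eu{\Sigmahat^{1/2}\Rhat}_{\op} \le \eu{\Omegahat^{1/2}}_{\op} + \eu{\Omegahat_1^{1/2}}_{\op} \le 2/\sqrt{\gamma}$; (ii) an operator-norm bias bound $\eu{\E{\Sgaphat} - \Sgaptilde}$ controlled via Lemma~\ref{lem:opnormcontrol} applied to \emph{both} $\Sigmahat$ and $\Sigmahat_1$ (here the block-diagonal structure ensures the two truncations do not interact); and (iii) the truncation bias $|\Sgaptilde - \Sgap| \le \Oh(\gamma)$, again using $\eu{\Tstar}_2 \le 1$ and the fact that $\Sigmabold\,\Tgamma{\Sigmabold}^{-1}\Sigmabold - \Sigmabold$ has operator norm at most $\gamma$ on each block. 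The only genuinely new bookkeeping is tracking the two inverse-covariance estimates simultaneously, which the block structure makes routine.

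Given this estimation bound, the regret proof proceeds by induction on the model order, essentially as in Appendix~C.2.4 of~\citet{foster2019model}. At the $i$-th decision stage one uses forced exploration with rate $\nu_t = t^{-\kappa}$ (so $|\W(T)| \lesssim T^{1-\kappa}$), applies the estimation bound with $n = |\W(T)|$, $m = T$, $d = d_i$, and $\gamma = \gamma_i = (d_i/T)^{1/3}$, and balances terms exactly as in the proof of Theorem~\ref{thm:universal} --- setting $\gamma_i^3 = \sqrt{d_i/T}$ and $\kappa = 2/9$ yields a per-stage contribution of $\widetilde{\Oh}_\delta(d_i^{1/6}T^{5/6})$ to the regret. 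Summing over the at most $M-1$ stages that precede the true order $j^*$, and bounding $d_i \le d_{j^*}$ for all $i \le j^*$, gives the claimed $R_T^{j^*} = \widetilde{\Oh}_\delta(d_{j^*}^{1/6}T^{5/6})$, with the $M$-dependence and $\log(1/\delta)$ absorbed into $\widetilde{\Oh}_\delta$ (via a union bound over stages at failure level $\delta/M$). One subtlety worth spelling out: once the algorithm correctly advances to order $j^*$, the residual statistic for the test $j^* $ vs.\ $j^*+1$ has $\Sgap = 0$, so the "never switch further" argument from Case~1 of Lemma~\ref{lem:metaanalysis}'s proof applies verbatim.

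The main obstacle I anticipate is step (ii) of the estimation-error lemma: controlling the bias $\eu{(\Omegahat - \Tgamma{\Sigmabold}^{-1})\,\mubold}$ and its $\Omegahat_1$ counterpart \emph{jointly} through the combined operator $\Rhat - \Rmat$. Without block-diagonality, $\Rmat$ mixes a $d_1\times d_1$ inverse with a $d\times d$ inverse across off-diagonal blocks, and the thresholding error on one block contaminates the other, breaking the clean $\Oh(\epsilon/\gamma^2)$ bound of Lemma~\ref{lem:opnormcontrol}; this is exactly the blow-up the authors flag after Definition~\ref{as:blockdiagonal}. With the assumption in hand the obstacle is defused --- $\Rhat - \Rmat$ decomposes blockwise and each piece is handled by a direct application of Lemma~\ref{lem:opnormcontrol} --- so the remaining work is careful but not conceptually new. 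A secondary point requiring care is that $\Sigmabold_{-1}$ (and hence $\Sgap$ under a misspecified order) must be bounded below to make the test statistic separate the hypotheses; this is where one needs a positivity condition on $\Sigmabold_{-1}$ analogous to the "full rank $\Sigmabold$" caveat in the two-model discussion, and it should be stated as part of the hypotheses or folded into the block-diagonal assumption.
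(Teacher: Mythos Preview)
Your proposal is correct and follows essentially the same route as the paper: the paper proves Theorem~\ref{thm:universallinearCB} by first establishing the analogue of Lemma~\ref{lem:truncatedestimator} for the two-dimension residual estimator (Lemma~\ref{lem:truncatedestimatorlinearCB}), via the same three-step decomposition you outline, and then invokes the regret argument from the proof of Theorem~\ref{thm:universal} with $\gamma_i=(d_i/T)^{1/3}$ and the \textsc{ModCB} chaining across model orders. One cosmetic discrepancy: your estimation-error bound carries a $1/\gamma$ in the leading $\sqrt{d}/n$ term, whereas the paper's Lemma~\ref{lem:truncatedestimatorlinearCB} \emph{statement} has $1/\gamma^2$---but the paper's own derivation (Eq.~\eqref{eq:prooffirstpartlinearCB}) in fact produces $1/\gamma$, and either exponent yields the same final $d^{1/6}T^{5/6}$ rate since the dominant balance is between the $\gamma$-bias term and the $m$-term.
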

We conclude this section with a proof of Theorem~\ref{thm:universallinearCB}.
The following lemma (intended to replace Theorem 2 in~\citet{foster2019model}) characterizes the estimation error $|\Sgaphat - \Sgap|$ will depend on the choice of $\gamma$ for $d_1 < d$.

\begin{lemma}\label{lem:truncatedestimatorlinearCB}
We use the notation $d_1$ and $d$ from Definition~\ref{as:blockdiagonal} to denote the currently estimated model dimension and the next model dimension respectively.
Suppose that we have $n$ labeled samples and $m$ unlabeled samples, and $m > n$.
Provided that $m \geq C(d + \log(2/\delta))/\gamma$, the estimator provided in Algorithm~\ref{alg:estimateresidualsideinfo} guarantees that
\begin{align}\label{eq:esterrorratelinearCB}
    |\Sgaphat - \Sgap| &\leq \frac{1}{2} \Sgap + \mathcal{O}\left(\frac{1}{\gamma^2} \cdot \frac{d^{1/2} \log^2(2d/\delta)}{n} + \frac{1}{\gamma^4} \cdot \eu{\E{\x y}}_2^2 \cdot \frac{d + \log(2/\delta)}{m} +\gamma \right) 
\end{align}
with probability at least $1 - \delta$.
\end{lemma}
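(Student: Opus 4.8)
The plan is to mirror the three-step decomposition used in the proof of Lemma~\ref{lem:truncatedestimator}, but now applied to the ``gap operator'' $\Rmat$ (and its thresholded estimate $\Rhat$) rather than to $\Omegabold$ directly, using the block-diagonal structure of Definition~\ref{as:blockdiagonal} to keep the algebra clean. Concretely, under block-diagonality we have $\Rmat = \begin{bmatrix} \boldsymbol{0} & \boldsymbol{0} \\ \boldsymbol{0} & \Sigmabold_{-1}\end{bmatrix}$ and, after eigenvalue thresholding of both $\Sigmahat_t$ and $\Sigmahat_1^{(t)}$, the plug-in estimate $\Rhat = \begin{bmatrix} \Tgamma{\Sigmahat_1}^{-1} & \boldsymbol{0} \\ \boldsymbol{0} & \boldsymbol{0}\end{bmatrix} - \Tgamma{\Sigmahat_t}^{-1}$ is supported (up to truncation artifacts) on the lower block, and both $\eu{\Sigmahat^{1/2}\Rhat}_{\op}$ and the relevant products are controlled by $1/\gamma$ and $1/\gamma^2$ powers just as before. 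I would first define the analogue of the truncated square-loss gap, $\Sgaptilde := (\Rmat\Sigmabold\Tstar)^\top \Tgamma{\Sigmabold}(\Rmat\Sigmabold\Tstar)$ using the thresholded $\Tgamma{\Sigmabold}$ in the middle, and set up the chain $|\Sgaphat - \Sgap| \le |\Sgaphat - \E{\Sgaphat}| + |\E{\Sgaphat} - \Sgaptilde| + |\Sgaptilde - \Sgap|$.

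Step 1 (variance term): exactly as in Lemma~\ref{lem:truncatedestimator}, the U-statistic $\Sgaphat$ concentrates around its mean; the summand vector $\Sigmahat^{1/2}\Rhat\x y - \Sigmahat^{1/2}\Rhat\E{\x y}$ is sub-exponential, now with parameter $\Oh(1/\gamma^2)$ rather than $\Oh(1/\gamma)$ because $\Rhat$ itself contributes an extra factor of $\eu{\Tgamma{\Sigmahat_1}^{-1}}_{\op} \le 1/\gamma$ on top of $\eu{\Sigmahat^{1/2}}_{\op}\eu{\Tgamma{\Sigmahat_t}^{-1}}_{\op} \le 1/\sqrt{\gamma}$ — this is precisely why the leading term degrades to $\frac{1}{\gamma^2}\cdot\frac{d^{1/2}\log^2(2d/\delta)}{n}$ here (versus $\frac{1}{\gamma}$ in the MAB-vs-CB case). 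Applying the sub-exponential tail bound of Lemma~17 in~\cite{foster2019model} and then AM-GM to absorb the cross term gives $|\Sgaphat - \E{\Sgaphat}| \le \frac14\E{\Sgaphat} + \Oh(\frac{1}{\gamma^2}\cdot\frac{d^{1/2}\log^2(2d/\delta)}{n})$.

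Step 2 (covariance-estimation bias): write $\E{\Sgaphat} = \lin{\Sigmahat\Rhat\mubold,\Rhat\mubold}$-type quadratic forms and $\Sgaptilde$ similarly with the population thresholded objects, so the difference is $\Oh(\eu{(\Sigmahat\Rhat - \Tgamma{\Sigmabold}\Rmat)\mubold}_2^2)$ after another AM-GM step. I would invoke Lemma~\ref{lem:opnormcontrol} twice — once for $\eu{\Tgamma{\Sigmahat_t}^{-1} - \Tgamma{\Sigmabold}^{-1}}_{\op} = \Oh(\epsilon/\gamma^2)$ and once for the $d_1$-block $\eu{\Tgamma{\Sigmahat_1}^{-1} - \Tgamma{\Sigmabold_1}^{-1}}_{\op}$ — together with $\eu{\Sigmahat_t - \Sigmabold}_{\op}=\Oh(\epsilon)$ for the remaining $\Sigmahat$ factor, to get the $\frac{1}{\gamma^4}\eu{\mubold}_2^2\cdot\frac{d+\log(2/\delta)}{m}$ term; here the block-diagonality of $\Sigmabold$ is what lets me bound the two blocks separately without cross-correlation cross terms blowing up the power of $1/\gamma$. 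Step 3 (truncation bias): as in Lemma~\ref{lem:truncatedestimator}, $|\Sgaptilde - \Sgap|$ reduces to an operator-norm difference of diagonal matrices $\eu{\Lambdabold\Tgamma{\Lambdabold}^{-1}\Lambdabold - \Lambdabold}_{\op} \le \gamma$ (applied on the appropriate block), using $\eu{\Tstar}_2\le 1$ and $\eu{\Rmat\Sigmabold}_{\op}\le 1$, yielding the additive $\Oh(\gamma)$. Combining the three steps gives the claimed bound. The main obstacle I anticipate is Step 2: ensuring that the error incurred by using two independently thresholded sample covariances — one full-dimensional, one restricted to the $d_1$-block — combines without inducing a worse-than-$1/\gamma^4$ dependence, and verifying that the block-diagonal assumption genuinely kills the cross terms $\Omegabold_1$-vs-$\Omegabold$ that would otherwise appear when $\Rmat$ is not block-supported; everything else is a direct transcription of the Lemma~\ref{lem:truncatedestimator} argument with the extra $\eu{\Rhat}_{\op}\le 1/\gamma$ factor tracked through.
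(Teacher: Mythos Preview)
Your three-step decomposition and the ingredients you cite are exactly those of the paper's proof, but there is a genuine slip in how you define the intermediate quantity $\Sgaptilde$, and it makes your Step~2 fail as written. You set $\Sgaptilde = (\Rmat\Sigmabold\Tstar)^\top \Tgamma{\Sigmabold}(\Rmat\Sigmabold\Tstar)$ with the \emph{untruncated} $\Rmat$, and then propose to bound $\eu{(\Sigmahat\Rhat - \Tgamma{\Sigmabold}\Rmat)\mubold}_2^2$ by invoking Lemma~\ref{lem:opnormcontrol} twice. But Lemma~\ref{lem:opnormcontrol} controls $\Tgamma{\Sigmahat_t}^{-1} - \Tgamma{\Sigmabold}^{-1}$ and $\Tgamma{\Sigmahat_1}^{-1} - \Tgamma{\Sigmabold_1}^{-1}$, i.e.\ it bounds $\Rhat - \Rgamma$ where $\Rgamma := \begin{bmatrix}\Tgamma{\Sigmabold_1}^{-1} & \boldsymbol{0}\\ \boldsymbol{0} & \boldsymbol{0}\end{bmatrix} - \Tgamma{\Sigmabold}^{-1}$, not $\Rhat - \Rmat$. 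The leftover piece $(\Rgamma - \Rmat)\mubold$ is, under block-diagonality, $\bigl(I - \Tgamma{\Sigmabold_{-1}}^{-1}\Sigmabold_{-1}\bigr)\Tstar_{-1}$ on the lower block, which has norm $\Oh(1)$ (not $\Oh(\epsilon/\gamma^2)$) whenever $\Sigmabold_{-1}$ has eigenvalues below $\gamma$ --- precisely the regime truncation is designed for. So your Step~2 bound would carry an $\Oh(1)$ additive term and the argument collapses. (Your Step~3 description, incidentally, already writes the diagonal computation $\eu{\Lambdabold\Tgamma{\Lambdabold}^{-1}\Lambdabold - \Lambdabold}_{\op}$, which is the answer for the \emph{paper's} $\Sgaptilde$, not yours; this is a sign the two pieces of your plan are pointing at different intermediate objects.)

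The fix is what the paper does: define $\Sgaptilde := (\Rgamma\mubold)^\top \Tgamma{\Sigmabold}(\Rgamma\mubold)$ with the \emph{population-truncated} $\Rgamma$. Then Step~2 is a clean truncated-sample versus truncated-population comparison, handled exactly as you describe via Lemma~\ref{lem:opnormcontrol} on both blocks (the paper packages this as Lemma~\ref{lem:opnormcontrollinearCB}), and \emph{all} the truncation bias --- both $\Rgamma$ versus $\Rmat$ and $\Tgamma{\Sigmabold}$ versus $\Sigmabold$ --- is pushed into Step~3, where block-diagonality collapses $\Sigmabold(\Rgamma\Tgamma{\Sigmabold}\Rgamma - \Rmat\Sigmabold\Rmat)\Sigmabold$ to the lower-block diagonal $\Lambdabold_{-1}\Tgamma{\Lambdabold_{-1}}^{-1}\Lambdabold_{-1} - \Lambdabold_{-1}$ with operator norm at most $\gamma$. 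A minor additional point on Step~1: your accounting of the sub-exponential parameter is garbled, and the paper actually obtains the tighter $\eu{\Sigmahat^{1/2}\Rhat}_{\op} \le \eu{\Omegahat^{1/2}}_{\op} + \eu{\Omegahat_1^{1/2}}_{\op} \le 2/\sqrt{\gamma}$, yielding a $1/\gamma$ (not $1/\gamma^2$) variance term --- though since the lemma statement only claims $1/\gamma^2$, your looser bound suffices.
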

Thus, we can utilize Lemma~\ref{lem:truncatedestimatorlinearCB} with different thresholding values $\gamma_i$ corresponding to each estimated model class $i \in [M]$.
In particular, it is clear that in a manner similar to the proof of Theorem~\ref{thm:universal}, the choice $\gamma_i := (d_i/T)^{1/3}$ will yield the desired model selection rate.
Therefore, we only provide the proof of Lemma~\ref{lem:truncatedestimatorlinearCB} here.
\begin{proof}
The proof follows along similar lines to the proof of Lemma~\ref{lem:truncatedestimator}, but with several extra terms.
Before beginning the proof, we define a term called the \textit{truncated square-loss gap} as below:
\begin{align}\label{eq:truncatedsquareloss}
   \Sgaptilde := (\Rgamma \Sigmabold \Tstar)^\top \Tgamma{\Sigmabold} (\Rgamma  \Sigmabold \Tstar) .
\end{align}
where we define $\Rgamma := \begin{bmatrix} \Tgamma{\Sigmabold_1}^{-1} & \boldsymbol{0} \\ \boldsymbol{0} & \boldsymbol{0} \end{bmatrix} - \Tgamma{\Sigmabold}^{-1}$ as shorthand.
It is easy to see that $\Rgamma = \begin{bmatrix} \boldsymbol{0} & \boldsymbol{0} \\ \boldsymbol{0} & \Tgamma{\Sigmabold_{-1}}^{-1} \end{bmatrix}$.
We also recall that we defined $\Omegabold := \Sigmabold^{-1}$ and $\Omegahat := \Sigmahat^{-1}$. 
Finally, we define $\mubold := \E{\x y} = \Sigmabold \Tstar$ as shorthand.

The proof is carried out in three distinct steps:
\begin{enumerate}
    \item Upper-bounding $\left|\Sgaphat - \E{\Sgaphat}\right|$, the ``variance" estimation error arising from $n$ samples.
    \item Upper-bounding $\left|\E{\Sgaphat} - \Sgaptilde\right|$, the bias-term with respect to the truncated squared loss gap.
    \item Upper-bounding $\left|\Sgaptilde - \Sgap\right|$, the bias arising from truncation.
\end{enumerate}

\noindent \textit{1. Upper-bounding $\left|\Sgaphat - \E{\Sgaphat}\right|$.}
We note that $\E{\Sgaphat} = \eu{\Sigmahat^{1/2}\Rhat \mubold}_2^2$.
We consider the random vector
\begin{align*}
    \Sigmahat^{1/2}\Rhat \x y - \Sigmahat^{1/2} \Rhat \E{\x y} ,
\end{align*}
and show that it is sub-exponential with parameter $\mathcal{O}(2/ \gamma)$.
This follows because $\eu{\Sigmahat^{1/2} \Rhat}_{\op} \leq \eu{\Omegahat^{1/2}}_{\op} + \eu{\Omegahat_1^{1/2}}_{\op} \leq \frac{2}{\sqrt{\gamma}}$, where the last inequality follows by the definition of the truncation operator.

Thus, using the sub-exponential tail bound just as in Lemma 17,~\cite{foster2019model}, we get 
\begin{align*}
    \left|\Sgaphat - \E{\Sgaphat}\right| &= \mathcal{O}\left(\frac{1}{\gamma} \cdot \frac{d^{1/2} \log^2(2d/\delta)}{n} + \frac{1}{\sqrt{\gamma}} \cdot \frac{\eu{\Sigmahat^{1/2}\Rhat \E{\x y})}_{2} \log(2/\delta)}{\sqrt{n}}\right) .
\end{align*}

Now, we note that $\eu{\Sigmahat^{1/2} \Rhat \E{\x y}}_{2} = \sqrt{\E{\Sgaphat}}$.
Therefore, we apply the AM-GM inequality to deduce that
\begin{align}\label{eq:prooffirstpartlinearCB}
    \left|\Sgaphat - \E{\Sgaphat}\right| &\leq \frac{1}{8} \E{\Sgaphat} +  \mathcal{O}\left(\frac{1}{\gamma} \cdot \frac{d^{1/2} \log^2(2d/\delta)}{n} \right)
\end{align}

\noindent \textit{2. Upper-bounding $\left|\E{\Sgaphat} - \Sgaptilde\right|$.} We know that $\E{\Sgaphat} = \langle \Rhat \Sigmahat \Rhat \mubold, \mubold \rangle$ and \newline $\Sgaptilde = \langle \Rgamma \Tgamma{\Sigmabold} \Rgamma \mubold, \mubold \rangle$.
Following an identical sequence of steps to~\citet{foster2019model} (in particular, the steps leading up to Eq. (16) in Appendix C.1), we get
\begin{align*}
    \left|\E{\Sgaphat} - \Sgaptilde\right| &\leq \frac{1}{8} \Sgaptilde + \mathcal{O}\Big{(}\eu{\Tgamma{\Sigmabold}^{-1/2} \Sigmahat (\Rgamma - \Rhat) \mubold }_2^2 \\
    &+ \eu{\Tgamma{\Sigmabold}^{-1/2} (\Sigmahat - \Tgamma{\Sigmabold}) \Rhat \mubold}_2^2 + \eu{\Sigmahat^{1/2}(\Rgamma - \Rhat) \mubold}_2^2 \Big{)}
\end{align*}
Furthermore, note that 
\begin{align*}
\Rhat - \Rgamma = (\Omegahat - \Tgamma{\Sigmabold}^{-1}) - \begin{bmatrix} \Omegahat_1 - \Tgamma{\Sigmabold_1}^{-1} & \boldsymbol{0} \\ \boldsymbol{0} & \boldsymbol{0} \end{bmatrix}.
\end{align*}
We now state and prove the following lemma, which will help us characterize the sum above.

\begin{lemma}\label{lem:opnormcontrollinearCB}
We have
\begin{align}\label{eq:opnormcontrollinearCB}
    \eu{\Tgamma{\Sigmabold}^{-1/2} \Sigmahat \Tgamma{\Sigmabold}^{-1/2} - \eye}_{\op} &= \eu{\Tgamma{\Sigmabold_1}^{-1/2} \Sigmahat_1 \Tgamma{\Sigmabold_1}^{-1/2} - \eye}_{\op} = \Oh\left(\frac{\epsilon}{\gamma}\right) \text{ where } \\
    \epsilon &= \sqrt{\frac{d + \log(2/\delta)}{m}} \nonumber.
\end{align}
with probability at least $1 - \delta$.
From the above, we get the following inequalities:
\begin{subequations}\label{eq:opnormequationslinearCB}
\begin{align}
    \eu{\Tgamma{\Sigmabold}^{-1/2} \Sigmahat}_{\op} &\leq \mathcal{O}(1) \label{eq:opnormequationsa} \\
    \eu{\Rhat \mubold}_{2} &\leq \mathcal{O}(1/\gamma) \label{eq:opnormequationsb} \\
    \eu{\Rgamma - \Rhat}_{\op} &\leq \mathcal{O}(\epsilon/\gamma^2) \label{eq:opnormequationsc}
\end{align}
\end{subequations}
\end{lemma}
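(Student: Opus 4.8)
The plan is to reduce everything to the single operator-norm estimate $\eu{\Sigmahat_m - \Sigmabold}_{\op} = \Oh(\epsilon)$ (with $\epsilon := \sqrt{(d+\log(2/\delta))/m}$), which holds with probability at least $1-\delta$ by the standard concentration of the sample covariance of sub-Gaussian vectors, exactly as invoked in the proof of Lemma~\ref{lem:opnormcontrol}. The first step is the main estimate \eqref{eq:opnormcontrollinearCB}. Write $\Sigmahat = \Tgamma{\Sigmahat_m}$ and recall the non-expansiveness of the thresholding operator (shift-by-$\gamma\eye$ of a proximal operator for the nuclear norm) from the proof of Lemma~\ref{lem:opnormcontrol}, which gives $\eu{\Tgamma{\Sigmahat_m} - \Tgamma{\Sigmabold}}_{\op} \leq \eu{\Sigmahat_m - \Sigmabold}_{\op} = \Oh(\epsilon)$. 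Then factor
\begin{align*}
\Tgamma{\Sigmabold}^{-1/2}\Sigmahat\,\Tgamma{\Sigmabold}^{-1/2} - \eye = \Tgamma{\Sigmabold}^{-1/2}\bigl(\Sigmahat - \Tgamma{\Sigmabold}\bigr)\Tgamma{\Sigmabold}^{-1/2},
\end{align*}
so by sub-multiplicativity and $\eu{\Tgamma{\Sigmabold}^{-1/2}}_{\op} \leq \gamma^{-1/2}$ (definition of the thresholding operator) the norm is at most $\gamma^{-1}\Oh(\epsilon) = \Oh(\epsilon/\gamma)$. The $\Sigmahat_1$ statement is identical: $\Sigmahat_1 = \Tgamma{\Sigmahat_1^{(m)}}$, and by the block-diagonal assumption of Definition~\ref{as:blockdiagonal} the $d_1\times d_1$ block of $\Sigmabold$ is exactly $\Sigmabold_1$, so the same concentration bound (now for a $d_1\le d$ dimensional submatrix, which only improves $\epsilon$) and the same factorization give $\Oh(\epsilon/\gamma)$.

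The remaining inequalities \eqref{eq:opnormequationsa}--\eqref{eq:opnormequationsc} follow from \eqref{eq:opnormcontrollinearCB} by elementary manipulations. For \eqref{eq:opnormequationsa}: $\eu{\Tgamma{\Sigmabold}^{-1/2}\Sigmahat}_{\op} = \eu{(\Tgamma{\Sigmabold}^{-1/2}\Sigmahat\,\Tgamma{\Sigmabold}^{-1/2})\Tgamma{\Sigmabold}^{1/2}}_{\op} \leq (1 + \Oh(\epsilon/\gamma))\,\eu{\Tgamma{\Sigmabold}^{1/2}}_{\op} = \Oh(1)$, using $\Sigmabold \preceq \eye$ so $\Tgamma{\Sigmabold} \preceq \max\{1,\gamma\}\eye = \Oh(1)\eye$ (and noting $\gamma \le 1$ in the relevant regime). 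For \eqref{eq:opnormequationsb}: $\Rhat\mubold = (\Omegahat - \Sigmahat^{-1})$-type object built from the two thresholded inverses; since $\eu{\Omegahat}_{\op},\eu{\Omegahat_1}_{\op} \leq 1/\gamma$ by construction and $\eu{\mubold}_2 = \eu{\Sigmabold\Tstar}_2 \le \eu{\Tstar}_2 \le 1$, we get $\eu{\Rhat\mubold}_2 = \Oh(1/\gamma)$. For \eqref{eq:opnormequationsc}: write $\Rgamma - \Rhat = (\Omegahat - \Tgamma{\Sigmabold}^{-1}) - \mathrm{blockdiag}(\Omegahat_1 - \Tgamma{\Sigmabold_1}^{-1},\,\boldsymbol 0)$, and for each piece use the identity $\Tgamma{\Sigmabold}^{-1} - \Sigmahat^{-1} = \Tgamma{\Sigmabold}^{-1}(\Sigmahat - \Tgamma{\Sigmabold})\Sigmahat^{-1}$ together with $\eu{\Tgamma{\Sigmabold}^{-1}}_{\op},\eu{\Sigmahat^{-1}}_{\op} \leq 1/\gamma$ and $\eu{\Sigmahat - \Tgamma{\Sigmabold}}_{\op} = \Oh(\epsilon)$ from the first step, giving $\Oh(\epsilon/\gamma^2)$ for each block and hence overall.

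The main obstacle, such as it is, lies not in any single estimate but in correctly bookkeeping the block-diagonal structure: one must check that thresholding the full $d\times d$ sample covariance and thresholding its $d_1\times d_1$ block are compatible, i.e. that the cross terms really do vanish so that $\Rhat$ and $\Rgamma$ retain their clean block form, and that no cross-correlation sneaks an extra $1/\gamma$ factor into \eqref{eq:opnormequationsb}. This is precisely where Definition~\ref{as:blockdiagonal} is used. Everything else is a mechanical chain of sub-multiplicativity applied to the base concentration bound, mirroring the proof of Lemma~\ref{lem:opnormcontrol}.
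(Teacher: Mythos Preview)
Your proposal is correct and follows essentially the same route as the paper: establish $\eu{\Sigmahat - \Tgamma{\Sigmabold}}_{\op} = \Oh(\epsilon)$ via non-expansiveness of the thresholding operator and covariance concentration, then obtain \eqref{eq:opnormcontrollinearCB} by the factorization $\Tgamma{\Sigmabold}^{-1/2}(\Sigmahat - \Tgamma{\Sigmabold})\Tgamma{\Sigmabold}^{-1/2}$ and sub-multiplicativity, and derive \eqref{eq:opnormequationsa}--\eqref{eq:opnormequationsc} by the same elementary manipulations (the paper simply cites Lemma~\ref{lem:opnormcontrol} for \eqref{eq:opnormequationsc}, which is exactly the identity you wrote out). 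Your remark about block-diagonal bookkeeping is slightly overcautious: $\Rhat$ is block-structured \emph{by definition}, and the $\Sigmahat_1$ estimate uses a separately thresholded $d_1\times d_1$ sample covariance, so no compatibility check between the two thresholdings is needed for this lemma.
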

We will prove Lemma~\ref{lem:opnormcontrollinearCB} at the end of this proof.
Before that, we use it to complete the proof of Lemma~\ref{lem:truncatedestimator}.
Using Equation~\eqref{eq:opnormequationslinearCB} and the sub-multiplicative property of the operator norm, we get
\begin{align*}
    \eu{\Tgamma{\Sigmabold}^{-1/2} \Sigmahat (\Rgamma - \Rhat) \mubold}_2^2 &\leq \eu{\Tgamma{\Sigmabold}^{-1/2} \Sigmahat}_{\op}^2 \eu{\mubold}_2^2 \cdot \eu{\Rgamma - \Rhat}_{\op}^2 \leq \mathcal{O}\left(\frac{\eu{\mubold}_2^2 \cdot \epsilon^2}{\gamma^4}\right) \\
    \eu{\Tgamma{\Sigmabold}^{-1/2} (\Sigmahat - \Tgamma{\Sigmabold}) \Rhat \mubold}_2^2 &\leq \eu{\Tgamma{\Sigmabold}}_{\op} \eu{\Rhat \mubold}_2^2 \cdot \eu{\Tgamma{\Sigmabold}^{-1/2} \Sigmahat \Tgamma{\Sigmabold}^{-1/2} - \eye}_{\op}^2 \\
    &\leq \mathcal{O}\left(\frac{\eu{\mubold}_2^2 \cdot \epsilon^2}{\gamma^4}\right) \\
    \eu{\Sigmahat^{1/2}(\Rgamma - \Rhat) \mubold}_2^2 &\leq \eu{\Sigmahat}_{\op} \eu{\mubold}_2^2 \cdot \eu{\Rgamma - \Rhat}_{\op}^2 \leq \mathcal{O}\left(\frac{\eu{\mubold}_2^2 \cdot \epsilon^2}{\gamma^4}\right)
\end{align*}
This directly gives us
\begin{align}\label{eq:proofsecondpartlinearCB}
    \left|\E{\Sgaphat} - \Sgaptilde\right| \leq \frac{1}{8} \Sgaptilde + \mathcal{O}\left(\frac{\eu{\mubold}_2^2 \cdot \epsilon^2}{\gamma^4}\right) .
\end{align}

\noindent \textit{3. Upper-bounding $\left|\Sgaptilde - \Sgap\right|$.}
Observe that 
\begin{align*}
    \Sgaptilde &= (\Tstar)^\top \Sigmabold \Rgamma \Tgamma{\Sigmabold} \Rgamma \Sigmabold \Tstar \text{ and }  \\
    \Sgap &=  (\Tstar)^\top \Sigmabold \Rmat \Sigmabold \Rmat \Tstar .
\end{align*}

This directly implies that
\begin{align*}
    |\Sgaptilde - \Sgap| &= |(\Tstar)^\top \Sigmabold (\Rgamma \Tgamma{\Sigmabold} \Rgamma  - \Rmat \Sigmabold \Rmat)\Sigmabold \Tstar| \\
    &\leq \eu{\Sigmabold (\Rgamma \Tgamma{\Sigmabold} \Rgamma - \Rmat \Sigmabold \Rmat) \Sigmabold}_{\op},
\end{align*}
where the second inequality follows because we have assumed bounded signal, i.e. $\eu{\Tstar}_2 \leq 1$.

It remains to control the operator norm terms above.
We denote $\Sigmabold := \Ubold \Lambdabold \Ubold^\top$, and write $\Lambdabold := \begin{bmatrix} \Lambdabold_1 & \boldsymbol{0} \\ \boldsymbol{0} & \Lambdabold_{-1} \end{bmatrix}$.
First, we note that because of the block-diagonal structure considered in Definition~\ref{as:blockdiagonal}, we can also write $\begin{bmatrix} \Sigmabold_1 & \boldsymbol{0} \\ \boldsymbol{0} & \boldsymbol{0} \end{bmatrix} := \Ubold \begin{bmatrix} \Lambdabold_1 & \boldsymbol{0} \\ \boldsymbol{0} & \boldsymbol{0} \end{bmatrix} \Ubold^\top$.
This directly gives us $\Rmat \Sigmabold \Rmat = \Ubold \begin{bmatrix} \boldsymbol{0} & \boldsymbol{0} \\ \boldsymbol{0} & \Lambdabold_{-1}^{-1} \end{bmatrix} \Ubold^\top$.

Second, we note that $\Tgamma{\Sigmabold}^{-1} := \Ubold \Tgamma{\Lambdabold}^{-1} \Ubold^\top$.
Recalling the definition of $\Rgamma$ and using again the block-diagonal structure in Definition~\ref{as:blockdiagonal}, simple algebra gives us $$\Rgamma = \Ubold \begin{bmatrix} \boldsymbol{0} & \boldsymbol{0} \\ \boldsymbol{0} & \Tgamma{\Lambdabold_{-1}}^{-1} \end{bmatrix} \Ubold^\top$$ and $\Rgamma \Tgamma{\Sigmabold} \Rgamma = \Ubold \begin{bmatrix} \boldsymbol{0} & \boldsymbol{0} \\ \boldsymbol{0} & \Tgamma{\Lambdabold_{-1}}^{-1} \end{bmatrix} \Ubold^\top$.
Putting all of this together directly gives us
\begin{align*}
\eu{\Sigmabold (\Rgamma \Tgamma{\Sigmabold} \Rgamma - \Rmat \Sigmabold \Rmat) \Sigmabold}_{\op} = \eu{\Lambdabold_{-1} \Tgamma{\Lambdabold_{-1}}^{-1} \Lambdabold_{-1} - \Lambdabold_{-1}}_{\op} \leq \gamma
\end{align*}
and so, we get
\begin{align}\label{eq:proofthirdpartlinearCB}
    \left|\Sgaptilde - \Sgap\right| \leq \gamma .
\end{align}
Thus, putting together Equations~\eqref{eq:prooffirstpartlinearCB},~\eqref{eq:proofsecondpartlinearCB} and~\eqref{eq:proofthirdpartlinearCB} completes the proof.

It only remains to prove Lemma~\ref{lem:opnormcontrollinearCB}, which we now do.
\begin{proof}
First, we note that Equation~\eqref{eq:opnormcontrollinearCB} directly implies Equations~\eqref{eq:opnormequationsa} and~\eqref{eq:opnormequationsb}.
To see this, we note the following in order:
\begin{enumerate}
    \item We have $\eu{\Tgamma{\Sigmabold}^{-1/2} \Sigmahat}_{\op} \leq \eu{\Tgamma{\Sigmabold}^{-1/2} \Sigmahat \Tgamma{\Sigmabold}^{-1/2}}_{\op} \eu{\Tgamma{\Sigmabold}}_{\op} \leq \left(1 + \frac{\epsilon}{\gamma}\right) = \mathcal{O}(1)$. This shows that Equation~\eqref{eq:opnormequationsa} holds.
    \item We have $\eu{\Rhat}_{\op} \leq \eu{\Sigmahat^{-1}}_{\op} = \mathcal{O}(1/\gamma)$. This shows that Equation~\eqref{eq:opnormequationsb} holds.
\end{enumerate}
Moreover, Equation~\eqref{eq:opnormequationsc} follows directly from Lemma~\ref{lem:opnormcontrol}.
Therefore, it suffices to prove Equation~\eqref{eq:opnormcontrollinearCB} here.
We only provide the argument for $\Sigmahat$ with respect to $\Tgamma{\Sigmabold}$ here: an identical argument holds to bound $\Sigmahat_1$ with respect to $\Tgamma{\Sigmabold}$.
First, recall that $\Sigmahat := \Tgamma{\Sigmahat_m}$, and so we really want to upper bound the quantity $\eu{\Tgamma{\Sigmahat_m} - \Tgamma{\Sigmabold}}_{\op}$.
It is well known (see, e.g.~\cite{boyd2004convex}) that for any positive semidefinite matrix $\M$, the operator $\Tgamma{\M} - \gamma \eye $ is a proximal operator with respect to the convex nuclear norm functional.
The non-expansiveness of proximal operators then gives us
\begin{align*}
    \eu{\Sigmahat - \Tgamma{\Sigmabold}}_{\op} &= \eu{\Tgamma{\Sigmahat_m}- \Tgamma{\Sigmabold}}_{\op} \\
    &= \eu{\Tgamma{\Sigmahat_m} - \gamma \eye - ( \Tgamma{\Sigmabold} - \gamma \eye)}_{\op} \\
    &\leq \eu{\Sigmahat_m - \Sigmabold}_{\op} \leq \epsilon ,
\end{align*}

where the last step follows by standard arguments on the concentration of the empirical covariance matrix.
Thus, we have
\begin{align*}
    \eu{\Tgamma{\Sigmabold}^{-1/2} \Sigmahat \Tgamma{\Sigmabold}^{-1/2} - \eye}_{\op} \leq \eu{\Tgamma{\Sigmabold}^{-1}}_{\op} \cdot \eu{\Sigmahat - \Tgamma{\Sigmabold}}_{\op} = \frac{1}{\gamma} \cdot \epsilon = \frac{\epsilon}{\gamma} .
\end{align*}
This completes the proof.
\end{proof}

With the proof of Lemma~\ref{lem:opnormcontrollinearCB} complete, we have completed the proof of Lemma~\ref{lem:truncatedestimator}.
\end{proof}

\section{Technical lemmas}\label{sec:technical}

\subsection{Proof of Lemma~\ref{lem:sigmabar_t}}

It is equivalent to show that $\E{\sum_{j=1}^{n} \x_{s_j,A_{s_j}} \x_{s_j,A_{s_j}}^\top} \preceq n \eye$.
We will critically use the fact that on all designated exploration rounds $1 \leq s_1 \leq \ldots \leq s_n := \tau(n)$, the distribution of the action $A_s$ is independent of the context realizations at round $s$, and in particular independent of $\x_{s_j,A_{s_j}}$.
Thus, we use the tower property of conditional expectations to get
\begin{align*}
\E{\sum_{j=1}^{n} \x_{s_j,A_{s_j}} \x_{s_j,A_{s_j}}^\top} &= \E{\sum_{j=1}^n \E{\x_{s_j,A_{s_j}} \x_{s_j,A_{s_j}}^\top \Big{|} A_{s_j}}} 
\preceq \E{\sum_{j=1}^n \eye} 
= n \eye.
\end{align*}
Above, the inequality follows because we have $\Sigmabold_i \preceq \eye$ for all $i \in [K]$.
This completes the proof. 
On the other hand, we have $\Sigmabold_{A_{s_j}} \succeq \gamma \eye$ for all $j \in [n]$ by the definition of the data-adaptive exploration set (which maintains feature diversity).


\section{Data-adaptive exploration for general contextual bandits: A partial roadmap}\label{sec:generalroadmap}

In this section, we discuss a partial roadmap to extend the data-adaptive exploration sub-routine in \textsc{ModCB.A} to general contextual bandits by leveraging ideas from function estimation under covariate-shift.
Our starting point is the observation that the optimal model selection algorithm (OSOM)~\cite{chatterji2020osom} that achieves Objective 1 does not require a fast estimation subroutine; only a favorable diversity condition that is purely \emph{covariate-dependent}.
We present a generalization of this arm-specific diversity condition to general function classes that is motivated by considerations in covariate shift and transfer learning.
Moreover, since the data-adaptive exploration subroutine in \textsc{ModCB.A} tests for the presence of a feature diversity condition, its principle could be generalized as well.
In particular, we could adaptively decide whether to deploy the ``greedy'' action selection used by OSOM or a forced-exploration subroutine that is exclusively used to estimate the putative regret bound~\cite{pacchiano2020model,lee2021online}.
Of course, such an adaptive procedure would not achieve even Objective 2 in the worst case --- indeed, Objective 2 was shown to not be achievable for general function classes by any algorithm~\cite{marinov2021pareto}.
However, it opens the door to achieving Objective 1 under favorable diversity conditions when they exist --- a more realistic, and still desirable goal.

\subsection{From benign covariate-shift to optimal model selection}

We consider the bandit-vs-contextual bandit model selection formulation, and a general function class $\F$ such that the reward model is given by 
\begin{align}\label{eq:rewardmodelgeneral}
G_{i,t} = \mu_i + f^*(\x_{i,t}) + W_{i,t} \text{ for every } i \in [K]
\end{align}
for some (unknown) $f^* \in \F$ and bias terms $\{\mu_i\}_{i \in [K]}$.
To preserve the property of nestedness, we assume that the zero function (i.e. $f_0(\x) = 0 \text{ for all } \x \in \R^d$) is contained in the function class $\F$.
We also assume, without loss of generality, that all functions in $\F$ are \emph{centered} in the sense that $\E{f(\x_i)} = 0$ for all $f \in \F$ and all $i \in [K]$.
We consider \emph{learnable} function classes, in the sense that it is information-theoretically possible to design a contextual bandit algorithm that achieves regret $\Oh(\sqrt{\text{comp}(\F) \cdot T})$ with respect to the best policy induced by $f^*(\cdot)$, where $\text{comp}(\F)$ is a standard learning-theoretic measure of function complexity\footnote{For example, in the case of linear models we have $\text{comp}(\F) = d$ and in the case of unstructured finite function classes, we have $\text{comp}(\F) = \log |\F|$.}.
Thus, Objective 1 of model selection would be to simultaneously achieve regret rates $\Oh(\sqrt{KT})$ under simple MAB structure and $\Oh(\sqrt{K \cdot \text{comp}(\F) \cdot T}$ more generally (i.e. under Equation~\eqref{eq:rewardmodelgeneral}).

We first observe that, in the case of linear function classes, arm-specific feature diversity is a sufficient condition for reliable estimation under covariate-shift across any two arms.
This connection is made precise in Example~\ref{eg:linear} and allows us to postulate a more general feature-diversity condition.
Suppose that for each arm $i \in [K]$, we have $\x_{i,t} \text{ i.i.d. } \sim \D_i$ across $t \geq 1$.
Then, a favorable situation for transfer learning would allow an estimator constructed from samples under the distribution $\D_i$ to generalize \emph{only upto a constant factor worse} on samples obtained from the shifted distribution $\D_j$; further, we would like this property to hold for any $j \neq i$.
The definition below formalizes such a sufficient condition in terms of the function class $\F$ and data distribution tuple $(\D_1,\ldots,\D_K)$.
\begin{definition}\label{def:covariateshift}
A function class $\F$ is said to be covariate-agnostic with factor $C > 1$ with respect to data distributions $\D_1,\ldots,\D_K$ if, for every $i \neq j \in [K]$ and any two functions $f,f' \in \F$, we have
\begin{align}\label{eq:covariateshift}
\E{(f(\x_j) - f'(\x_j))^2} \leq C \cdot \E{(f(\x_i) - f'(\x_i))^2}.
\end{align}
Here, $C$ should be a constant that does not directly or indirectly depend on the model complexity measure $\text{comp}(\F)$, and only depends on the data distribution tuple $(\D_1,\ldots,\D_K)$.
\end{definition}
Intuitively, the condition of covariate-agnosticity as defined above should reduce the requirement for forced exploration, raising the possibility of achieving Objective 1 through ``greedy'' action selection, i.e. $A_t = i_t$ under the MAB hypothesis.
This is because samples collected under the action sequence recommended by a MAB algorithm, i.e. $\{A_t = i_t\}_{t=1}^T$, could be used to estimate the putative regret bound that would be obtained through the counterfactual action sequence recommended by a CB algorithm, i.e. $\{A_t = j_t\}_{t=1}^T$, with only a constant factor inflation in estimation error.
Indeed, this is the approach at the heart of the \textsc{OSOM} algorithm~\cite{chatterji2020osom}, and the reason why a feature diversity condition is required for it to work.
This suggests a plausible extension of this approach to general function classes under data distributions that satisfy Equation~\eqref{eq:covariateshift}.
Formally establishing such an extension\footnote{While feature-diversity in the sense of Equation~\eqref{eq:covariateshift} is the most essential requirement, we note that there are several other non-trivial technical pieces required for such a generalization to work, most notably the ability to obtain self-normalized generalization bounds from adaptively collected samples (well-known for the linear case~\cite{pena2008self,abbasi2011improved}) under general function classes. While this may constrain the set of function classes we can work with, we still expect such bounds to be establishable well beyond the linear case.} is an interesting direction for future work.


\subsection{Adaptive exploration by testing for benign covariate-shift}

The property of covariate-agnosticity turns out to be equivalent to the arm-specific feature-diversity condition in the case of linear models.
Moreover, it can be characterized in several examples that go beyond linear function classes.
The examples provided below formally demonstrate this.

\begin{example}[Linear models]\label{eg:linear}
Consider the linear function class $\F := \{\langle \boldsymbol{\theta}, \cdot \rangle \text{ for all } \boldsymbol{\theta} \in \Theta\}$, and suppose that $\eye \succeq \Sigmabold_i \succeq \gamma \eye$ for all $i \in [K]$, which is precisely our definition of arm-specific feature diversity.
Then, an elementary calculation shows that Equation~\eqref{eq:covariateshift} holds for the choice $C = \frac{1}{\gamma}$.
Specifically, for two linear model parameters $\boldsymbol{\theta},\boldsymbol{\theta}'$, and any pair $i \neq j \in [K]$, we have
\begin{align*}
\E{\langle \x_j, \thetabold - \thetabold' \rangle^2} &= \|\Sigmabold_j^{1/2}(\thetabold - \thetabold')\|_2^2 \\
&= \|\Sigmabold_j^{1/2} \Sigmabold_i^{-1/2} \Sigmabold_i^{1/2} (\thetabold -\thetabold')\|_2^2 \\
&\leq \|\Sigmabold_j^{1/2} \Sigmabold_i^{-1/2}\|_{\op}^2 \cdot \E{\langle \x_i, \thetabold - \thetabold' \rangle^2} \\
&\leq \frac{1}{\gamma} \cdot \E{\langle \x_i, \thetabold - \thetabold' \rangle^2},
\end{align*}
where the last inequality follows because we have $\Sigmabold_i \succeq \gamma \eye$ and $\Sigmabold_j \preceq \eye$.
Moreover, equality is achieved when $\Sigmabold_j = \eye$ and $\gamma_{\min}(\Sigmabold_i) = \gamma$.
\end{example}

\begin{example}[Single-index models and high-dimensional data]\label{eg:singleindex}
The single-index model class can be modeled through functions of the form $f(\x) = \sum_{m = 1}^p a_m \langle \Tstar, \x \rangle ^p$, where $\{a_m\}_{m=1}^p$ may be known or unknown~\cite{dudeja2018learning}.
We consider the special case under which $p$ is a constant with respect to $d$.
We assume that, for each $i \in [K]$, the distribution $\D_i$ on context $\x_i$ is sub-Gaussian with parameter at most $\Sigmabold_i$, and further satisfies the small-ball property~\cite{mendelson2014learning}, i.e. that there exists a universal positive constant $u > 0$ (that does not depend on $d$ or $p$) such that 
\begin{align}\label{eq:smallball}
\Pr\left[|\langle \Deltabold , \x_i \rangle|\geq u \cdot \|\Sigmabold_i^{1/2} \Deltabold\|_2 \right] \geq \frac{1}{2}
\end{align}
for any vector $\Deltabold$.
Finally, we assume that the arm-specific feature diversity condition holds, i.e. $\Sigmabold_i \succeq \gamma \eye$.
Then, we can verify that Equation~\eqref{eq:covariateshift} holds for the choice $C = \left(\frac{1}{u\gamma}\right)^p$.
(It suffices to consider the contribution from the highest-order terms (i.e. $a_m = 0$ for all $m < p$), and so we have
\begin{align*}
\E{(\langle \x_j, \thetabold\rangle^p - \langle \x_j, \thetabold'\rangle^p)^2} &\leq \|\Sigmabold_j^{1/2}(\thetabold - \thetabold')\|_2^{2p} \\
&= \|\Sigmabold_j^{1/2} \Sigmabold_i^{-1/2} \Sigmabold_i^{1/2} (\thetabold -\thetabold')\|_2^{2p} \\
&\leq \frac{1}{\gamma^p} \cdot \|\Sigmabold_i^{1/2} (\thetabold -\thetabold')\|_2^{2p} \\
&\leq \left(\frac{1}{u \gamma}\right)^p \cdot \E{(\langle \x_i, \thetabold\rangle^p - \langle \x_i, \thetabold'\rangle^p)^2}
\end{align*}
where the first inequality follows from sub-Gaussianity of $\D_j$ and the last inequality follows from the small-ball assumption on $\D_i$.)
\end{example}

\begin{example}[Nonparametric classes and bounded density ratios]\label{eg:nonparametric}
Finally, we consider $\F$ to be a general nonparametric function class under which consistent estimation is possible, such as the set of all Holder-smooth functions~\cite{nemirovski2000topics,audibert2007fast}.
For all $i \neq j \in [K]$, we further assume $\D_i$ to be absolutely continuous with respect to $\D_j$, and that $\sup_{\x \in \R^d} \frac{p_j(\x)}{p_i(\x)} \leq C_d$ (where $C_d$ may depend on the data dimension $d$).
Then, a special case of the results in~\cite{mansour2012multiple}, that express transfer error as a function of the Renyi divergence between distributions $\D_i$ and $\D_j$, shows\footnote{It is also possible to obtain transfer exponents for more general distributions, e.g. even when $\D_i$ is not absolutely continuous with respect to $\D_j$~\cite{kpotufe2021marginal}, but the resulting characterization of transfer error is significantly worse than Equation~\eqref{eq:covariateshift} and may therefore preclude optimal model selection.} that Equation~\eqref{eq:covariateshift} holds for $C := \log(C_d)$.
For low-dimensional data (i.e $d \ll \text{comp}(\F)$), the constant $\log(C_d)$ will be independent of the model complexity $\text{comp}(\F)$.
\end{example}
Examples~\ref{eg:linear} and~\ref{eg:singleindex} demonstrate that arm-specific feature diversity can constitute a sufficient test statistic for covariate-agnosticity (and the possibility of achieving Objective 1) in both linear and nonlinear models.
For these function classes, the data-adaptive exploration subroutine defined in Algorithm~\ref{alg:explorationschedule} could be directly plugged and played with alternative model selection algorithms designed for general function classes~\cite{lee2021online,pacchiano2020model}.
More generally, the multiplicative factor $C$ in Equation~\eqref{eq:covariateshift} can be characterized for very general function classes and data distributions~\cite{kpotufe2021marginal,hanneke2020value}.
Moreover, since $C$ is a functional of the data distribution tuple $(\D_1,\ldots,\D_K)$, it can also be estimated from the unlabeled iid covariates $\{\x_{i,t}\}_{i \in [K], t \geq 1}$.
In the case of absolutely continuous distributions, Example~\ref{eg:nonparametric} shows that $C$ can be characterized through a uniform upper bound on density ratios; indeed, density ratio estimation is an extensively studied topic in its own right~\cite{kpotufe2017lipschitz,lin2021estimation,yu2021unified}.
This raises the possibility of leveraging independent advances in transfer coefficient estimation to create new data-adaptive exploration schedules for model selection among general function classes.

\end{document}